\theoremstyle{plain}
\newtheorem{theorem}{Theorem}[section]
\newtheorem{proposition}[theorem]{Proposition}
\newtheorem{lemma}[theorem]{Lemma}
\newtheorem{corollary}[theorem]{Corollary}
\theoremstyle{definition}
\newtheorem{definition}[theorem]{Definition}
\theoremstyle{remark}
\newtheorem{remark}[theorem]{Remark}
\def\eqref#1{equation~(\ref{#1})}
\def\1{\bf{1}}
\def\fX{{\mathcal{X}}}
\def\fY{{\mathcal{Y}}}
\def\Ddots{\mathinner{\mkern1mu\raise\p@
\vbox{\kern7\p@\hbox{.}}\mkern2mu
\raise4\p@\hbox{.}\mkern2mu\raise7\p@\hbox{.}\mkern1mu}}
\newcommand*{\rom}[1]{\expandafter\@slowromancap\romannumeral #1@}
\icmltitlerunning{Overcoming Fake Solutions in Semi-Dual Neural Optimal Transport}
\begin{document}

\twocolumn[
\icmltitle{
Overcoming Fake Solutions in Semi-Dual Neural Optimal Transport: \\
A Smoothing Approach for Learning the Optimal Transport Plan
}




\icmlsetsymbol{equal}{*}

\begin{icmlauthorlist}
\icmlauthor{Jaemoo Choi}{gatech}
\icmlauthor{Jaewoong Choi}{equal,skku}
\icmlauthor{Dohyun Kwon}{equal,uos,kias}
\end{icmlauthorlist}

\icmlaffiliation{gatech}{Georgia Institute of Technology}
\icmlaffiliation{skku}{Sungkyunkwan University}
\icmlaffiliation{uos}{University of Seoul}
\icmlaffiliation{kias}{Korea Institute for Advanced Study}

\icmlcorrespondingauthor{Dohyun Kwon}{dh.dohyun.kwon@gmail.com}
\icmlcorrespondingauthor{Jaewoong Choi}{jaewoongchoi@skku.edu}

\icmlkeywords{Machine Learning, ICML}

\vskip 0.3in
]




\printAffiliationsAndNotice{\icmlEqualContribution} 

\begin{abstract}
We address the convergence problem in learning the Optimal Transport (OT) map, where the OT Map refers to a map from one distribution to another while minimizing the transport cost. Semi-dual Neural OT, a widely used approach for learning OT Maps with neural networks, often generates fake solutions that fail to transfer one distribution to another accurately. We identify a sufficient condition under which the max-min solution of Semi-dual Neural OT recovers the true OT Map. Moreover, to address cases when this sufficient condition is not satisfied, we propose a novel method, OTP, which learns both the OT Map and the Optimal Transport Plan, representing the optimal coupling between two distributions. Under sharp assumptions on the distributions, we prove that our model eliminates the fake solution issue and correctly solves the OT problem. Our experiments show that the OTP model recovers the optimal transport map where existing methods fail and outperforms current OT-based models in image-to-image translation tasks. Notably, the OTP model can learn stochastic transport maps when deterministic OT Maps do not exist, such as one-to-many tasks like colorization.
\end{abstract}

\section{Introduction}
Optimal Transport (OT) theory \citep{villani, ComputationalOT} addresses the problem of finding the cost-optimal transport map that transforms one probability distribution (\textit{source distribution}) into another (\textit{target distribution}). Recently, there has been growing interest in learning the optimal transport map directly using neural networks. OT has found extensive applications in various machine learning domains by appropriately defining source and target distributions, such as generative modeling \citep{otm, uotm, sjko, choi2024scalable, lipman2022flow}, image-to-image translation \citep{not, fanscalable}, point cloud completion \citep{uot-upc}, and domain adaptation \citep{da-ot}.
The OT framework is particularly advantageous for unpaired distribution transport tasks, as it relies solely on a predefined cost function to map one distribution to another, eliminating the need for paired data.

Among various approaches, the minimax algorithm, derived from the semi-dual formulation, has been widely investigated \cite{fanscalable, otm, uotm, uotmsd}.
Formally, \citet{fanscalable, otm} established the adversarial algorithm by leveraging the following max-min problem:
\begin{equation} 
\label{eq:problem}
\begin{aligned}
    &\sup_{V} \inf_{T:\mathcal{X} \rightarrow \mathcal{Y}}
    \mathcal{L}(V, T)
    \quad \hbox{where} \quad \mathcal{L}(V, T):= 
    \\
    & \int_{\mathcal{X}} c(x, T(x)) - V(T(x)) d\mu(x) + \int_{\mathcal{Y}} V(y) d\nu(y) .   
\end{aligned}    
\end{equation}
Here, the probability measures $\mu$ and $\nu$ represent the source and the target distribution, respectively. 
The function $V:\mathcal{Y} \rightarrow \mathbb{R}$ and $T$ approximates a Kantorovich potential \cite{Kantorovich1948}, and an optimal transport map, respectively. Throughout this paper, we call these approaches the \textit{\textbf{Semi-dual Neural OT (SNOT)}}.

When the optimal potential $V^\star$ and the transport map $T^\star$ exist, it is well-known that 
\vspace{-5pt}
\begin{equation} \label{eq:optimal_t}
    T^\star \in \arg\min_{T} \mathcal{L}(V^{\star}, T).
    \vspace{-5pt}
\end{equation}
as shown in \citet{otm, fanscalable}. Thus, the pair $(V^\star, T^\star)$ is the solution to this max-min problem. However, a critical challenge arises: not all solutions of Eq.\ref{eq:problem} correspond to the optimal potential and transport map pair. 
In other words, even the optimal solution in the SNOT framework may not recover the correct optimal transport map. We refer to this challenge as the \textit{\textbf{spurious solution problem}}.

In this paper, we analyze this fundamental issue of the spurious solution problem in existing SNOT frameworks. Specifically, we identify a sufficient condition on the source distribution $\mu$ that prevents the spurious solution problem. The key condition is that the source distribution should not place positive mass on measurable sets with Hausdorff dimension $\leq d-1$ (see Thm~\ref{thm:uniqueness}). To the best of our knowledge, our work offers the first theoretical analysis of the sufficient condition under which the max-min solution of the SNOT framework can correctly learn the OT Map. Prior works were limited to the saddle point solution \citep{fanTMLR} or addressed a specific form of a different OT problem (weak OT) \citep{not, knot} (see Appendix \ref{sec:related_works} for the related works on spurious solution issues). Additionally, we comprehensively explore various failure cases when this condition is not satisfied. 

Building on this condition, we develop a novel algorithm that ensures the learning of an optimal transport plan. We refer to our model as the \textit{\textbf{Optimal Transport Plan model (OTP)}}. Our method involves smoothing the source distribution $\mu_{\epsilon}$, so that the Neural OT models recover the correct optimal transport plan. Then, we gradually modify $\mu_{\epsilon}$ back to the original $\mu$ leveraging the convergence property. Our extensive experiments show that our OTP model accurately learns the optimal transport plan. Moreover, our model outperforms various (entropic) Neural OT models in diverse image-to-image translation tasks. Our contributions can be summarized as follows:
\begin{itemize}[topsep=-2pt, itemsep=-2pt]
    \item Our work is the first to identify a sufficient condition under which the max-min solution of existing SNOT recovers the true OT Map.
    \item We demonstrate diverse failure cases that occur when this sufficient condition is not satisfied.
    \item We propose a new algorithm that guarantees the learning of the optimal transport plan.
    \item Our experiments show that our model successfully recovers the correct OT Plan in failure cases where existing models fail.
\end{itemize}
\vspace{-5pt}
\paragraph{Notations and Assumptions}
Let $(\mathcal{X}, \mu)$ and $(\mathcal{Y}, \nu)$ be Polish spaces where $\mathcal{X}$ and $\mathcal{Y}$ are closures of connected open sets in $\mathbb{R}^d$.
We regard $\mu$ and $\nu$ as the source and target distributions.
Unless otherwise described, we consider $\mathcal{X} = \mathcal{Y} = \mathbb{R}^d$ and the quadratic transport cost $c:\mathcal{X}\times\mathcal{Y}\rightarrow \mathbb{R}$, $c(x,y)= \alpha \lVert x-y \rVert^2$ for a given positive constant $\alpha$.
For a measurable map $T$, $T_{\#}\mu$ represents the pushforward distribution of $\mu$.
$\Pi(\mu, \nu)$ denotes the set of joint probability distributions on $\mathcal{X}\times\mathcal{Y}$ whose marginals are $\mu$ and $\nu$, respectively.
Moreover, we denote $W_2(\cdot, \cdot)$ as the 2-Wasserstein distance of two distributions.
\vspace{-8pt}

\section{Background} \label{sec:background}
In this section, we present a brief overview of Optimal Transport theory \cite{villani, santambrogio}, and neural network approaches for learning optimal transport maps. In particular, we focus on approaches that leverage the semi-dual formulation \cite{otm, fanscalable}. 

\vspace{-5pt}
\paragraph{Optimal Transport}
The Optimal Transport (OT) problem investigates transport maps that connect the source distribution $\mu$ and the target distribution $\nu$ \citep{villani, santambrogio}. The \textit{optimal transport map (OT Map or Monge Map)} is defined as the minimizer of a given cost function among all transport maps between $\mu$ and $\nu$. Formally, \citet{monge1781memoire} introduced the OT problem with a deterministic transport map $T$ as follows:
\begin{equation}\label{eq:ot_monge} 
    \mathcal{T}(\mu, \nu) := \inf_{T_\# \mu = \nu}  \left[ \int_{\mathcal{X} } c(x,T(x)) d \mu (x) \right].
\end{equation}
Note that the condition $T_\# \mu = \nu$ indicates that the transport map $T$ transforms $\mu$ to $\nu$, where $T_\# \mu$ denotes the pushforward distribution of $\mu$ under $T$. However, the Monge OT problem is non-convex, and the existence of minimizer, i.e., the optimal transport map $T^{\star}$, is not always guaranteed depending on the assumption of $\mu$ and $\nu$ (Sec. \ref{sec:fail_no_det}). 

To address this existence issue, \citet{Kantorovich1948} proposed the following convex formulation of the OT problem:
\begin{equation} \label{eq:Kantorovich}
    C(\mu,\nu):=\inf_{\pi \in \Pi(\mu, \nu)} \left[ \int_{\mathcal{X}\times \mathcal{Y}} c(x,y) d\pi(x,y) \right],
\end{equation}
We refer to the joint probability distribution $\pi \in \Pi(\mu, \nu)$ as the \textit{transport plan} between $\mu$ and $\nu$. Unlike the Monge OT problem, the optimal transport plan (OT Plan) $\pi^{\star}$ is guaranteed to exist under mild assumptions on $(\mathcal{X}, \mu)$  and $(\mathcal{Y}, \nu)$ and the cost function $c$ \citep{villani}. Intuitively, while the Monge OT (Eq. \ref{eq:ot_monge}) covers only the deterministic transport map $y=T(x)$, the Kantorovich OT problem (Eq. \ref{eq:Kantorovich}) can represent stochastic transport map via the conditional distribution $\pi (y | x)$ for each $x \sim \mu$. When the optimal transport map $T^{\star}$ exists, the optimal transport plan also reduces to this deterministic transport map, i.e., $\pi^{\star} = (Id \times T^{\star})_{\#} \mu$.

\paragraph{Semi-dual Neural OT}
The goal of neural optimal transport (Neural OT) models is to learn the OT Map between $\mu$ and $\nu$ using neural networks. 
The semi-dual formulation of the OT problem is widely leveraged for learning OT Maps \cite{otm, fanscalable, uotm, otmICNN}. 

The semi-dual formulation of the OT problem is given as follows: For a general cost function $c(\cdot, \cdot)$ that is lower semicontinuous and bounded below, the Kantorovich OT problem (Eq. \ref{eq:Kantorovich}) has the following \textit{semi-dual form} \citep[Thm. 5.10]{villani}, \citep[Prop. 1.11]{santambrogio}:
\begin{equation} \label{eq:kantorovich-semi-dual}
     S(\mu,\nu):= \!\!\sup_{V\in S_c} \left[ \int_\mathcal{X} V^c(x)d\mu(x) \!+\!\! \int_\mathcal{Y} V(y) d\nu (y) \right],
\end{equation}
where $S_c$ denotes the collection of $c$-concave functions $\psi: \mathcal{Y}\rightarrow \mathbb{R}$ and $V^{c}$ denotes the $c$-transform of $V$, i.e., 
\vspace{-5pt}
\begin{equation} \label{eq:def_c_transform}
  V^c(x)=\underset{y\in \mathcal{Y}}{\inf}\left[ c(x,y) - V(y) \right].
  \vspace{-5pt}
\end{equation}
The SNOT approaches utilize this semi-dual form (Eq. \ref{eq:kantorovich-semi-dual}) for learning the OT Map $T^{\star}$ \citep{otm, fanscalable, otmICNN}. This formulation leads to a max-min optimization problem, similar to GANs \cite{gan}. Specifically, these models parametrize the transport map $T_{\theta} : \fX \rightarrow \fY$ and the potential $V_{\phi}$ as follows:
\vspace{-5pt}
\begin{align} 
    & T_{\theta}: x \mapsto \arg\min_{y \in \mathcal{Y}} \left[c(x, y) - V_{\phi}\left( y \right)\right] \label{eq:def_T} \\
    & \quad \Leftrightarrow \quad V_{\phi}^c(x)=c\left(x,T_{\theta}(x) \right) - V_{\phi}\left(T_{\theta}(x)\right). \label{eq:c_transform_with_T}
    \vspace{-5pt}
\end{align}
Note that $T_{\theta}$-parametrization (Eq. \ref{eq:def_T}) implies that the $c$-transform $V_{\phi}^{c}$ can be expressed with the transport map $T_{\theta}$ and the potential $V_{\phi}$, as shown in Eq. \ref{eq:c_transform_with_T}. From this, the SNOT models derive the following optimization problem $\mathcal{L}_{V_{\phi}, T_{\theta}}$:
\begin{equation}  \label{eq:otm}
    \begin{aligned}
        &\sup_{V_{\phi} \in S_c} \inf_{T_{\theta}:\mathcal{X} \rightarrow \mathcal{Y}} 
        \mathcal{L}(V_{\phi}, T_{\theta}) \quad \text{where} \quad \mathcal{L}(V, T) := \\
        & \int_{\mathcal{X}} c\left(x,T(x)\right)-V \left( T(x) \right) d\mu(x) + \int_{\mathcal{Y}} V(y)  d\nu(y).
    \end{aligned}    
\end{equation}
Intuitively, $T_\theta$ and $V_\phi$ serve similar roles to the generator and the discriminator in GANs. However, the OT Map $T_\theta$ is additionally trained to minimize the transport cost $c\left(x, T_{\theta}(x)\right)$, while GANs focus solely on learning the target distribution $T_{\#}\mu = \nu$ \citep{wgan, wgan-gp}.

\section{Analytical Results for Semi-dual Neural OT} 
\label{sec:analyze}
A critical limitation of existing SNOT approaches is that the max-min solution $(V^{\dagger}, T^{\dagger})$ of Eq. \ref{eq:otm} may include not only the desired OT Map but also other \textbf{spurious solutions} \citep{otm}. Formally, if the OT Map $T^\star$ exists, the optimal potential $V^\star$ and the OT Map $T^\star$ become a max-min solution (Eq. \ref{eq:optimal_t}). 
However, not all max-min solutions correspond to the true optimal potential and transport map, i.e., $\{(V^{\star}, T^{\star})\} \subsetneq \{(V^{\dagger}, T^{\dagger})\} $. In particular, even $T^{\dagger} \# \mu = \nu$ does not hold in general (see Fig. \ref{fig:fail_case}), which means that $T^{\dagger}$ is not a valid transport map from $\mu$ to $\nu$ as in (Eq. \ref{eq:ot_monge}). It is worth noting that any max-min solution satisfying $T^{\dagger} \# \mu = \nu$ is guaranteed to be the OT map \citep[Thm 3]{fanTMLR}.

We first investigate sufficient conditions to prevent spurious solution issues (Sec. \ref{sec:unique_saddle}), and present a comprehensive failure case analysis of the SNOT approach (Sec.~\ref{sec:failure}). Based on this, later in Sec. \ref{sec:method}, we propose a method for learning an accurate Neural OT model that avoids such spurious solutions.

\subsection{Sufficient Conditions for Ensuring Convergence of Semi-dual Neural OT}
\label{sec:unique_saddle}
We provide sufficient conditions on the source distribution $\mu$ and the target distribution $\nu$ to ensure a unique minimizer for the $T_{\theta}$-parametrization (Eq. \ref{eq:def_T}). This enables the SNOT objective to accurately recover the optimal transport plan. 

\begin{theorem} \label{thm:uniqueness}
    Let $\mu \in \mathcal{P}_2(\mathcal{X}), \nu \in \mathcal{P}_2(\mathcal{Y})$ 
    and $c(x,y)=\frac{1}{2}\Vert x-y \Vert^2$.
    Assume that $\mu$ does not give mass to the measurable sets of Hausdorff dimension at most $d-1$ dimension. 
    \begin{enumerate}[leftmargin=*, topsep=0pt, itemsep=-2pt]
        \item[(1)] 
        Then, there exists a unique OT Map $T^\star$ in (Eq. \ref{eq:ot_monge}) and a (possibly non-unique) Kantorovich potential $V^\star\in S_c$ in (Eq. \ref{eq:kantorovich-semi-dual}).
        \item[(2)]
        For the Kantorovich potential $V^\star \in S_c$, a solution of the minimization problem,
        \vspace{-10pt}
        \begin{equation} \label{eq:argmin}
            \mathcal{D}_x := \arg\min_{y\in \mathcal{Y}} \left[ c(x,y) - V^\star(y) \right],
            \vspace{-5pt}
        \end{equation}
        is uniquely determined $\mu$-a.s., i.e. $\mathcal{D}_x = \{ y_x\}$ for $\mu$-a.s $x \in \mathcal{X}$. In particular, a map $x\mapsto y_x \in \mathcal{D}_x$ is a unique OT Map $T^\star$ $\mu$-a.s..
    \end{enumerate}
\end{theorem}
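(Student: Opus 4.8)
The plan is to use the quadratic structure of the cost to recast the $\argmin$ in (Eq.~\ref{eq:argmin}) as the subdifferential of a convex function, and then read off uniqueness from the sharp differentiability properties of convex functions. Since $c(x,y)-V^\star(y)=\tfrac12\|x\|^2-\big(\langle x,y\rangle-\psi(y)\big)$ with $\psi(y):=\tfrac12\|y\|^2-V^\star(y)$, we have
\begin{equation*}
\mathcal{D}_x=\argmax_{y\in\mathcal{Y}}\big[\langle x,y\rangle-\psi(y)\big]\qquad\text{and}\qquad V^{\star c}(x)=\tfrac12\|x\|^2-\phi(x),\quad\text{where }\phi:=\psi^{*}
\end{equation*}
is the Legendre transform of $\psi$, hence a lower semicontinuous convex function on $\mathbb{R}^d$. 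The existence of a maximizer $V^\star\in S_c$ for the semi-dual problem (Eq.~\ref{eq:kantorovich-semi-dual}) is classical for the quadratic cost with $\mu,\nu\in\mathcal{P}_2$ (\citet{villani}; \citet{santambrogio}), and it yields that $\phi$ is proper and that $\int V^{\star c}\,d\mu$ is finite, so $\phi<+\infty$ $\mu$-a.e.; uniqueness of $V^\star$ up to an additive constant then follows from standard arguments using the connectedness of $\mathcal{X}$. The existence and uniqueness of $T^\star$ claimed in part~(1) will come out of part~(2).

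For part~(2) I would first note the inclusion $\mathcal{D}_x\subseteq\partial\phi(x)$: if $y^\star\in\mathcal{D}_x$ then $\phi(x)=\langle x,y^\star\rangle-\psi(y^\star)$ while $\phi(x')\ge\langle x',y^\star\rangle-\psi(y^\star)$ for every $x'$, i.e.\ $y^\star\in\partial\phi(x)$. The key input is that a finite convex function is differentiable off a set $\Sigma$ contained in a countable union of Lipschitz hypersurfaces, hence of Hausdorff dimension at most $d-1$ (the refinement of Rademacher's theorem behind McCann's form of Brenier's theorem; see \citet{villani}). By hypothesis $\mu(\Sigma)=0$, so for $\mu$-a.e.\ $x$ the subdifferential $\partial\phi(x)=\{\nabla\phi(x)\}$ is a singleton and thus $\mathcal{D}_x$ has at most one element. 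Non-emptiness of $\mathcal{D}_x$ for $\mu$-a.e.\ $x$ I would obtain from an optimal plan $\pi^\star\in\Pi(\mu,\nu)$ (which exists by Kantorovich's theorem): the complementary-slackness identity $V^{\star c}(x)+V^\star(y)=c(x,y)$ holds on $\mathrm{supp}(\pi^\star)$, which says precisely that $y\in\mathcal{D}_x$ whenever $(x,y)\in\mathrm{supp}(\pi^\star)$; since $\mu$ is the first marginal of $\pi^\star$, $\mu$-a.e.\ $x$ admits such a partner. Combining the two facts, $\mathcal{D}_x=\{\nabla\phi(x)\}=:\{T^\star(x)\}$ for $\mu$-a.e.\ $x$.

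To finish I would transfer this back to maps and plans: from $\mathrm{supp}(\pi^\star)\subseteq\{(x,y):y\in\mathcal{D}_x\}$ and $\mathcal{D}_x=\{T^\star(x)\}$ ($\mu$-a.e.) it follows that every optimal plan equals $(\mathrm{Id}\times T^\star)_{\#}\mu$; taking the second marginal gives $T^\star_{\#}\mu=\nu$, so $T^\star$ is an optimal Monge map, and any other OT map $T$ produces the optimal plan $(\mathrm{Id}\times T)_{\#}\mu=(\mathrm{Id}\times T^\star)_{\#}\mu$, which forces $T=T^\star$ $\mu$-a.e.\ — the asserted uniqueness in law.

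I expect the crux to be the step where the differentiability of $\phi$ is invoked: Rademacher's theorem alone gives differentiability only Lebesgue-a.e.\ and would require $\mu\ll\mathrm{Leb}$, so one genuinely needs the structural fact that the non-differentiability set of a convex function is $(d-1)$-rectifiable — this is exactly why the hypothesis ``$\mu$ charges no set of Hausdorff dimension $\le d-1$'' is the right one. A secondary technical point is ensuring that $\mu$-a.e.\ $x$ lies in the interior of $\mathrm{dom}\,\phi$, where this differentiability statement applies, and that the infimum defining $V^{\star c}(x)$ is attained for $\mu$-a.e.\ $x$; both are handled by routing the argument through the support of an optimal plan rather than through $V^\star$ alone.
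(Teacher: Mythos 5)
Your proof is correct, but it takes a genuinely different route from the paper's. The paper proves the theorem by verifying the hypotheses of a general result of Villani (Thm.~10.28 in \citet{villani}, stated as Lemma~\ref{lemma2}): it checks superdifferentiability and injectivity of $\nabla_x c(x,\cdot)$, verifies the $\mathbf{H_\infty}$ conditions for the quadratic cost via \citet{gangbo1996geometry}, and invokes condition \textbf{C3} of Remark~10.33 (local semi-concavity plus the $(d-1)$-dimension hypothesis on $\mu$) to get $\mu$-a.s.\ differentiability of $c$-convex functions; uniqueness of $\mathcal{D}_x$ then comes from the $c$-subdifferential characterization $\partial_c (V^\star)^c(x)$. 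You instead exploit the quadratic structure directly in the Brenier--McCann style: completing the square turns $\mathcal{D}_x$ into the subdifferential $\partial\phi(x)$ of the Legendre transform $\phi=\psi^*$, and the hypothesis on $\mu$ is used through the fact that the non-differentiability set of a finite convex function is $(d-1)$-rectifiable. Your argument is more elementary and self-contained for this specific cost, and it makes transparent \emph{why} the Hausdorff-dimension hypothesis is exactly the right one; the paper's route buys generality (arbitrary Polish target, manifolds, general twisted costs satisfying $\mathbf{H_\infty}$), which the authors explicitly want in order to suggest extensions. You also handle the two technical points the general machinery hides --- attainment of the infimum and membership of $\mu$-a.e.\ $x$ in the interior of $\mathrm{dom}\,\phi$ --- correctly, by routing through the support of an optimal plan and the finiteness of $\int V^{\star c}\,d\mu$. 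One small caveat: your parenthetical claim that $V^\star$ is unique up to an additive constant by ``connectedness of $\mathcal{X}$'' overreaches --- uniqueness of the potential generally needs extra assumptions on $\nu$ (this is precisely Thm.~\ref{thm:unique_potential} in the paper) --- but the theorem you are proving does not assert it, so nothing in your argument depends on that remark.
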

\vspace{-5pt}

Here, $\mathcal{D}_x$ corresponds to the $T_{\theta}$-parametrization in the SNOT framework. Therefore, the uniqueness of $\mathcal{D}_x$ for $V^{\star}$ implies that $T_{\theta}$-parametrization is fully characterized. \textbf{Thm. \ref{thm:uniqueness} shows that the assumption on $\mu$, not on $\nu$, is enough to eliminate the ambiguity in mapping each $x$ to $T_{\theta}(x)$ and this mapping corresponds to the OT Map.} Furthermore, the extension of Thm.  \ref{thm:uniqueness} to a general cost function is discussed in Thm.~\ref{thm:generalized} from Appendix \ref{appen:proof1}.

Note that Thm. \ref{thm:uniqueness} is sufficient for addressing the spurious solution problem. For the sake of completeness, we also present a sufficient condition where the SNOT framework admits \textbf{a unique max-min solution that corresponds to the correct OT Map} (Cor. \ref{cor:unique_saddle}). In this case, the additional assumptions on $\nu$ is also required. Here, we used the fact that the absolutely continuous measures with respect to the Lesbesgue measure satisfy the condition in Thm. \ref{thm:uniqueness}. 

\begin{theorem} \label{thm:unique_potential}
    Suppose $\mathcal{Y} \subset \mathbb{R}^d$ is a closure of a bounded open set. If $\nu$ has a positive density almost everywhere with respect to the Lebesgue measure on $\mathcal{Y}$, then there exists unique Kantorovich potential $V^\star \in S_c$ up to constant.
\end{theorem}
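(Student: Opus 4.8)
The plan is to reduce the statement to a classical uniqueness result for Kantorovich potentials under the assumption that one marginal has full support. First I would recall that, for the quadratic cost $c(x,y)=\tfrac12\Vert x-y\Vert^2$, a $c$-concave function $V\in S_c$ is characterized (up to the affine term) via the Legendre transform: $x\mapsto \tfrac12\Vert x\Vert^2 - V^c(x)$ and $y\mapsto \tfrac12\Vert y\Vert^2 - V(y)$ form a pair of convex conjugates. Hence the claim "the Kantorovich potential $V^\star$ is unique up to a constant" is equivalent to "the associated convex potential $\psi^\star$ with $\nabla\psi^\star_{\#}\mu=\nu$ is unique up to a constant", which is Brenier's theorem together with its uniqueness part. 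The subtlety the theorem addresses is that Brenier uniqueness of the \emph{map} holds $\mu$-a.e. under the hypotheses of Thm.~\ref{thm:uniqueness} on $\mu$, but uniqueness of the \emph{potential} as a function (not just $\mu$-a.e.) requires controlling it off $\mathrm{supp}(\mu)$; this is exactly where the positivity of the density of $\nu$ on the bounded set $\mathcal{Y}$ enters.

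The key steps, in order, are as follows. (i) By Thm.~\ref{thm:uniqueness}(1), an optimal potential $V^\star\in S_c$ exists; let $\psi^\star(x)=\tfrac12\Vert x\Vert^2 - (V^\star)^c(x)$ be the corresponding convex function and $\varphi^\star$ its convex conjugate, so $V^\star(y)=\tfrac12\Vert y\Vert^2-\varphi^\star(y)$. (ii) From the $c$-cyclical monotonicity / Kantorovich duality, the pair $(\psi^\star,\varphi^\star)$ attains the dual value, and $\partial\varphi^\star$ transports $\nu$ to $\mu$; since $\nu$ has positive density a.e. on $\mathcal{Y}$, the set where $\varphi^\star$ is differentiable has full $\nu$-measure and is dense in the interior of $\mathcal{Y}$. (iii) Suppose $V_1^\star, V_2^\star\in S_c$ are two optimal potentials, with conjugate convex functions $\varphi_1^\star,\varphi_2^\star$. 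On the set of common differentiability points (full $\nu$-measure), both gradients $\nabla\varphi_i^\star$ push $\nu$ forward to $\mu$ and both are optimal; by uniqueness of the Brenier map from $\nu$ (which holds because $\nu$ is absolutely continuous — here one uses that $\nu\ll\mathrm{Leb}$ on $\mathcal{Y}$, so $\nu$ satisfies the Thm.~\ref{thm:uniqueness} condition with roles swapped, or directly Brenier), we get $\nabla\varphi_1^\star=\nabla\varphi_2^\star$ $\nu$-a.e. (iv) Since $\nu$ has positive density \emph{everywhere} on the bounded connected open set whose closure is $\mathcal{Y}$, this set of equality is dense in that open set; two convex functions on a connected open set whose gradients agree on a dense set, and which are continuous there, must differ by a constant. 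Hence $\varphi_1^\star=\varphi_2^\star+\mathrm{const}$ on the interior, and by lower semicontinuity / closure on all of $\mathcal{Y}$, giving $V_1^\star=V_2^\star+\mathrm{const}$.

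The main obstacle I anticipate is step (iv): upgrading "gradients agree $\nu$-a.e." to "the functions agree up to an additive constant on all of $\mathcal{Y}$". Almost-everywhere agreement of gradients of convex functions does not by itself force the functions to coincide if the support has gaps, so one genuinely needs the hypothesis that $\mathcal{Y}$ is the closure of a \emph{connected bounded open set} and that $\nu$'s density is positive a.e. there: positivity a.e. makes the common-differentiability set dense in the (connected) interior, connectedness lets one integrate the (equal) gradients along paths to conclude the functions differ by a single constant, and boundedness ensures the potentials are finite and the $c$-transform manipulations are well behaved. A secondary technical point is making sure the $c$-concavity class $S_c$ and the convex-conjugate reparametrization are in bijection up to additive constants, so that uniqueness of $\varphi^\star$ transfers cleanly to uniqueness of $V^\star$; this is standard for the quadratic cost but should be stated carefully.
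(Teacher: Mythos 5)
Your proposal is essentially correct, but it takes a genuinely different route from the paper. The paper's proof is a two-line reduction to an external result: it checks that for $x$ in a compact set and $y_1,y_2\in\mathcal{Y}\subset B_R(0)$ one has $|c(x,y_1)-c(x,y_2)|\le 2R\Vert y_1-y_2\Vert$, and then invokes Theorem~5 and Corollary~4 of \citet{staudt2022c}, which give uniqueness of the $c$-concave potential for locally Lipschitz costs when $\nu$ has connected support with positive density. You instead give a self-contained Brenier-type argument: pass to the convex conjugates $\varphi_i^\star(y)=\tfrac12\Vert y\Vert^2-V_i^\star(y)$, observe that any optimal plan is concentrated on the graph of $\nabla\varphi_i^\star$ so that $\nabla\varphi_1^\star=\nabla\varphi_2^\star$ holds $\nu$-a.e.\ (hence Lebesgue-a.e.\ on $\mathrm{int}(\mathcal{Y})$ by positivity of the density), and conclude via the fact that a locally Lipschitz function with a.e.\ vanishing gradient on a connected open set is constant. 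Your approach buys transparency and makes explicit exactly where connectedness and positivity of the density are used; the paper's approach buys generality (the citation route covers general locally Lipschitz costs, whereas your Legendre-transform reduction is specific to the quadratic cost). Two small corrections: existence of an optimal $c$-concave potential should be taken from Kantorovich duality (Lemma~\ref{lemma1}, i.e.\ \citet{villani} Thm.~5.10), not from Thm.~\ref{thm:uniqueness}(1), since the latter imposes a hypothesis on $\mu$ that Thm.~\ref{thm:unique_potential} does not assume; and in your step (iv) the hypothesis you actually need and actually have is agreement of the gradients \emph{Lebesgue-almost everywhere} on $\mathrm{int}(\mathcal{Y})$, not merely on a dense set --- density alone would not suffice for general Lipschitz functions, though the a.e.\ statement closes the argument cleanly.
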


Cor. \ref{cor:unique_saddle} is derived by combining Thm. \ref{thm:uniqueness} with the uniqueness of the optimal potential $V^{\star}$ in Thm. \ref{thm:unique_potential}.

\begin{corollary} \label{cor:unique_saddle}
     Suppose $\mathcal{Y} \subset \mathbb{R}^d$ is a closure of a bounded open set. 
    Suppose $\mu \in \mathcal{P}_2(\mathcal{X})$ and $\nu\in \mathcal{P}_2 (\mathcal{Y})$ are absolutely continuous distributions that have positive density functions on their domain. Then, the solution $(V^\star, T^\star)$ of \eqref{eq:otm} is unique. In other words, $V^\star\in S_c$ is unique up to constant, and $T^\star$ is a deterministic OT Map.
\end{corollary}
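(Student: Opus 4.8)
The plan is to derive Corollary~\ref{cor:unique_saddle} by feeding $\mu$ into Theorem~\ref{thm:uniqueness} and $\nu$ into Theorem~\ref{thm:unique_potential}, and then performing the standard reduction of the max--min problem \eqref{eq:otm} to the semi-dual problem \eqref{eq:kantorovich-semi-dual}. First I would verify that the hypotheses of the two theorems hold: any Borel set of Hausdorff dimension at most $d-1$ has zero $d$-dimensional Hausdorff measure, hence zero Lebesgue measure, so a measure absolutely continuous with respect to Lebesgue assigns it no mass; thus $\mu$ satisfies the assumption of Theorem~\ref{thm:uniqueness}. Likewise $\nu$ has positive density on the bounded set $\mathcal{Y}$, so Theorem~\ref{thm:unique_potential} applies. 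Consequently there is a Kantorovich potential $V^\star\in S_c$, unique up to an additive constant, there is a unique OT Map $T^\star$ as in \eqref{eq:ot_monge}, and the set $\mathcal{D}_x$ from \eqref{eq:argmin} is the singleton $\{T^\star(x)\}$ for $\mu$-a.e.\ $x$.

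Next I would handle the outer maximization. For any $V\in S_c$, the integrand $c(x,T(x))-V(T(x))$ is bounded below pointwise by $V^c(x)$ (see \eqref{eq:def_c_transform}), and the OT Map realizes this bound, so $\inf_{T}\mathcal{L}(V,T)=\int_{\mathcal{X}} V^c\, d\mu+\int_{\mathcal{Y}} V\, d\nu$. Hence $\sup_{V\in S_c}\inf_{T}\mathcal{L}(V,T)$ equals the semi-dual value $S(\mu,\nu)$ in \eqref{eq:kantorovich-semi-dual}, and its maximizers over $S_c$ are exactly the $c$-concave Kantorovich potentials. By the first step, this maximizer is $V^\star$, unique up to a constant; and since $\mathcal{L}(V+a,T)=\mathcal{L}(V,T)$ for any constant $a$, ``up to a constant'' is the correct notion of uniqueness here.

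Finally I would pin down the inner minimizer given $V^\star$. For every measurable $T:\mathcal{X}\to\mathcal{Y}$ we have $\mathcal{L}(V^\star,T)\ge \int_{\mathcal{X}} (V^\star)^c\, d\mu + \int_{\mathcal{Y}} V^\star\, d\nu$, with equality iff $c(x,T(x))-V^\star(T(x))=(V^\star)^c(x)$ for $\mu$-a.e.\ $x$, i.e.\ iff $T(x)\in\mathcal{D}_x$ for $\mu$-a.e.\ $x$. Since $\mathcal{D}_x=\{T^\star(x)\}$ $\mu$-a.e.\ by Theorem~\ref{thm:uniqueness}(2), any inner minimizer agrees with $T^\star$ $\mu$-a.e., and $T^\star$ is deterministic. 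Putting the pieces together, the max--min solution $(V^\dagger,T^\dagger)$ of \eqref{eq:otm} must be $(V^\star,T^\star)$ in the stated sense.

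I expect the main (and only genuine) obstacle to be the measurability bookkeeping in the two interchange steps: swapping $\inf_T$ with the $\mu$-integral, and identifying $\arg\min_T\mathcal{L}(V^\star,\cdot)$ with the class of maps that land in $\mathcal{D}_x$ almost everywhere. This is not a real difficulty here, because Theorem~\ref{thm:uniqueness}(2) already guarantees that the pointwise minimizer is $\mu$-a.e.\ unique and coincides with the measurable OT Map $T^\star$, so no additional measurable-selection argument is needed; the rest is routine monotonicity of the integral together with the invariance of $\mathcal{L}$ under additive shifts of $V$.
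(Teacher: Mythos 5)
Your proposal is correct and follows essentially the same route as the paper, which simply derives the corollary by combining Theorem~\ref{thm:uniqueness} (applied to $\mu$, using that absolutely continuous measures give no mass to sets of Hausdorff dimension at most $d-1$) with Theorem~\ref{thm:unique_potential} (applied to $\nu$). The additional bookkeeping you supply --- reducing the max--min value to the semi-dual value and identifying the inner minimizer via the $\mu$-a.e.\ uniqueness of $\mathcal{D}_x$ --- is a correct and somewhat more explicit writeup of the same argument.
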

\vspace{-5pt}

\begin{figure*}
    \centering
    \subfigure[Perpendicular]{
    \includegraphics[width=0.18\linewidth]{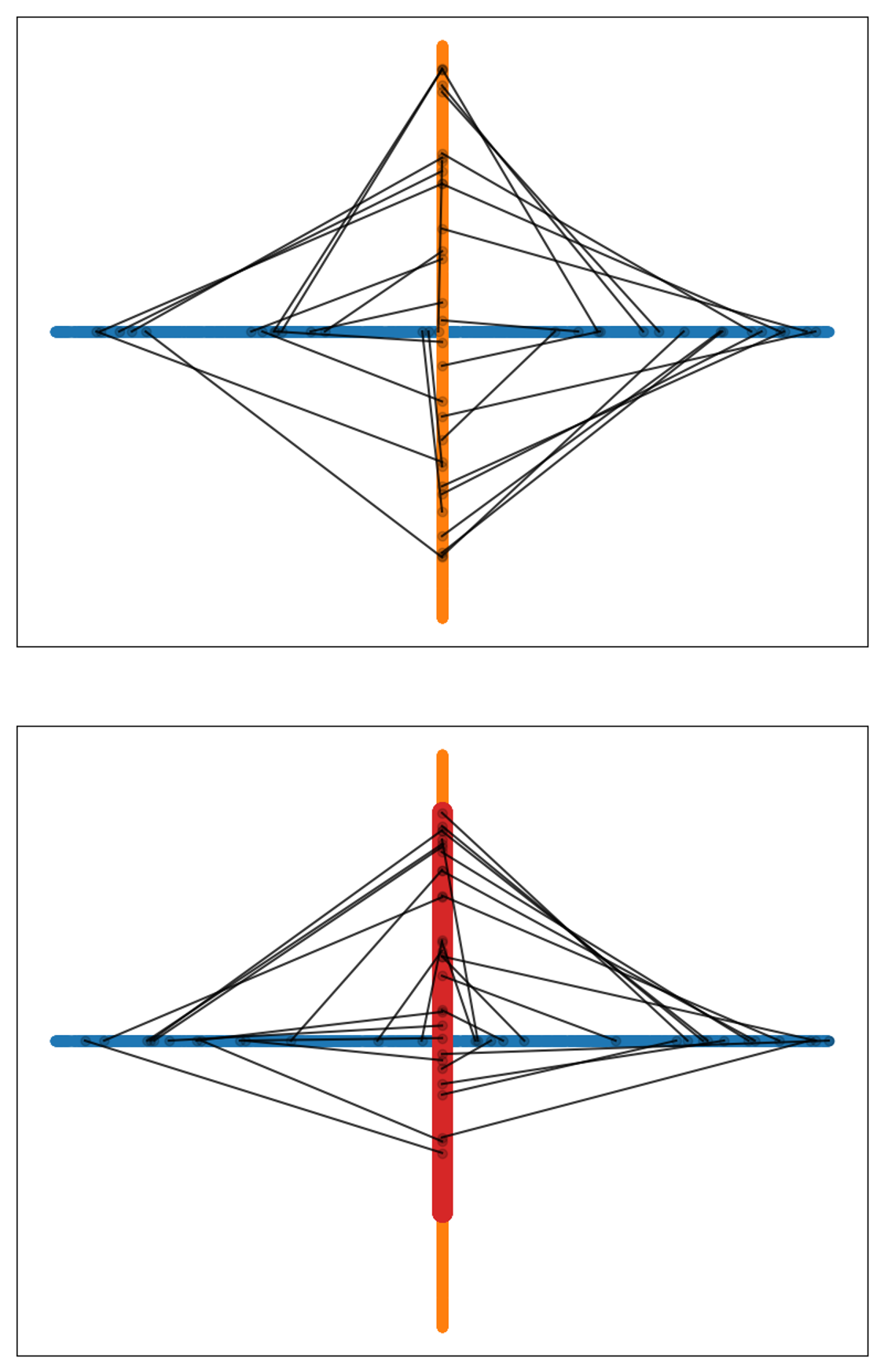}
    \label{fig:perp}
    }
    \quad
    \subfigure[Parallel]{
    \includegraphics[width=0.18\linewidth]{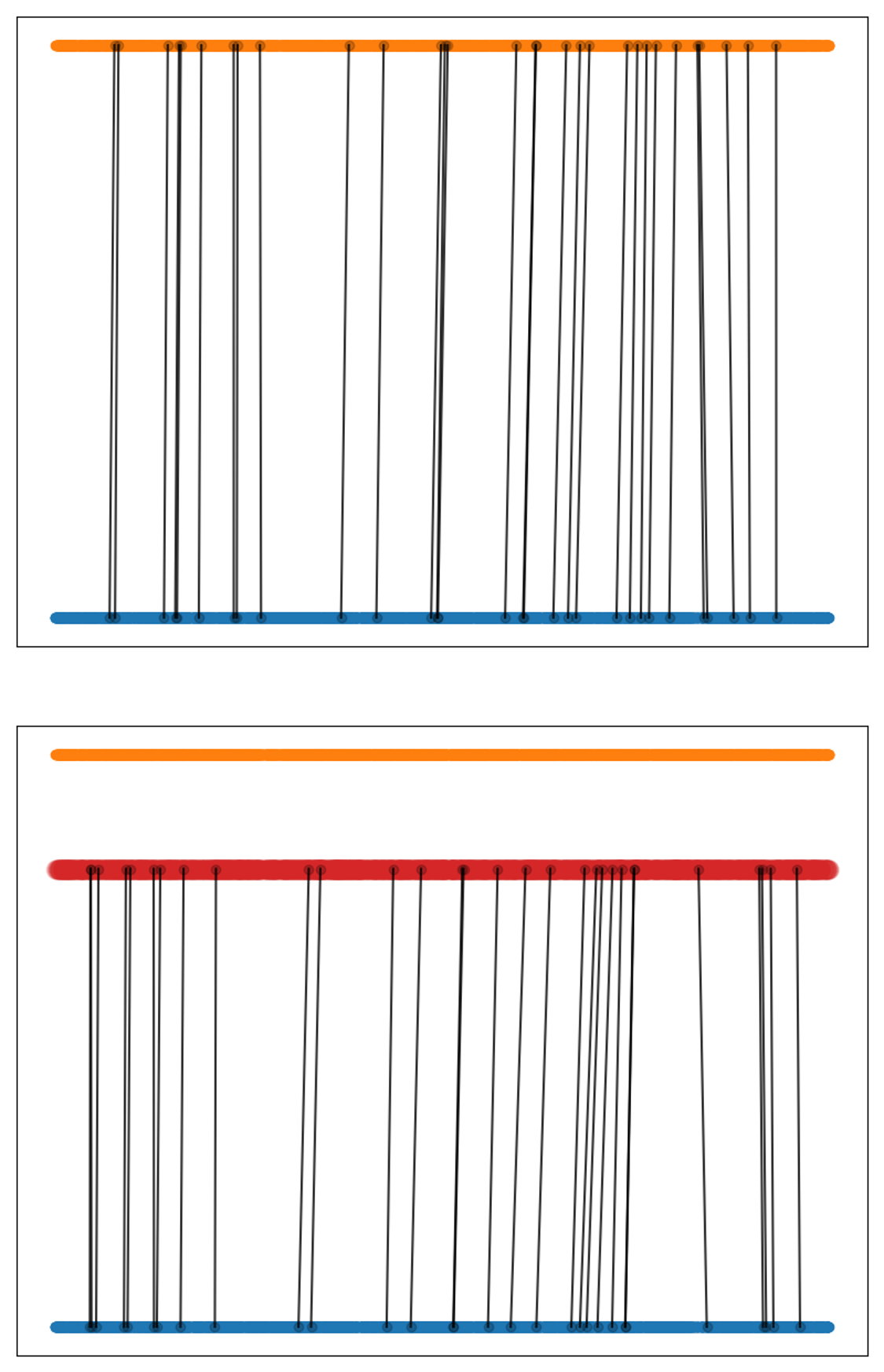}
    \label{fig:hor}
    }
    \quad
    \subfigure[One-to-Many]{
    \includegraphics[width=0.18\linewidth]{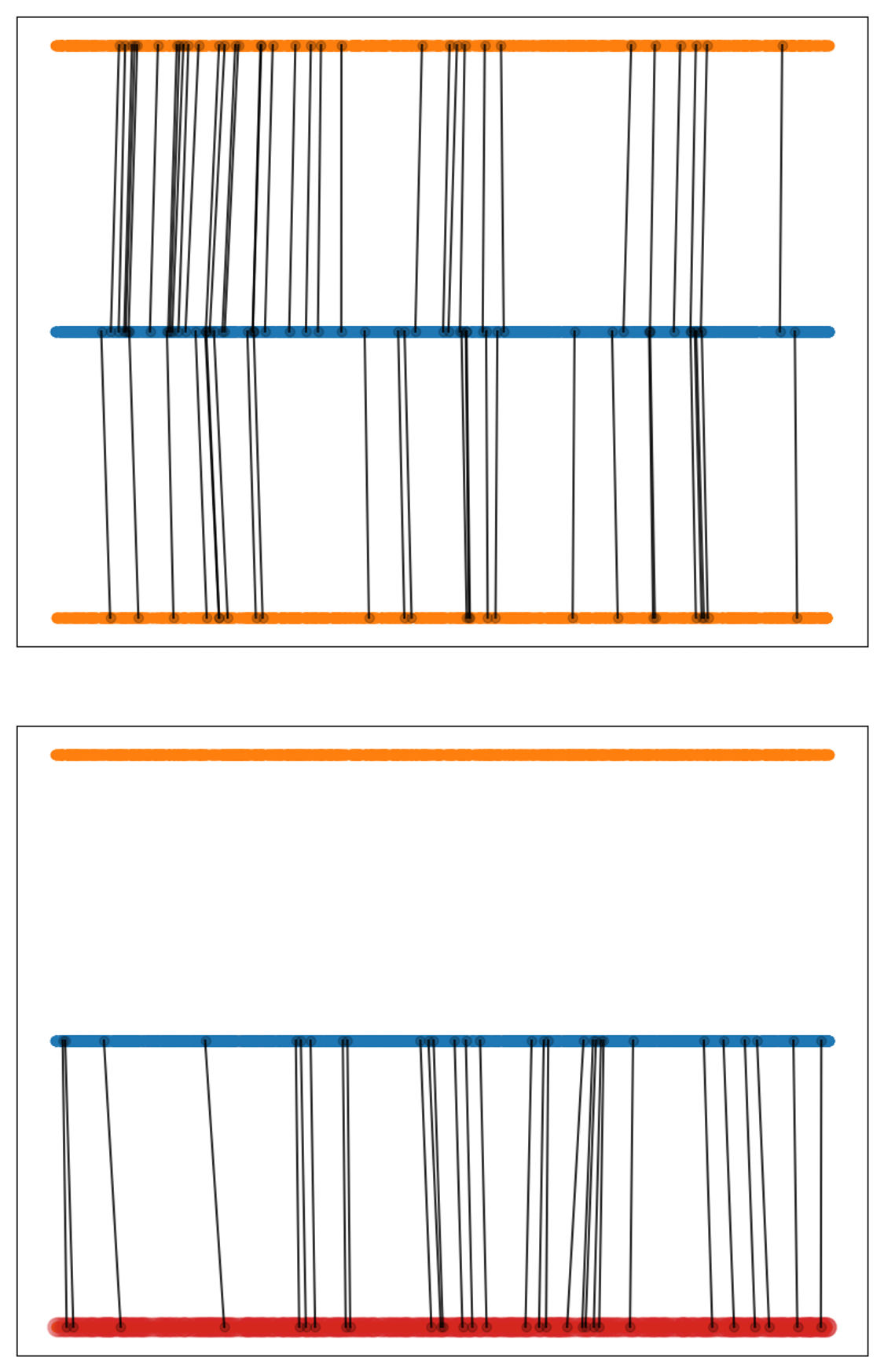}
    \label{fig:one-to-many}
    }
    \quad
    \subfigure[Grid]{
    \includegraphics[width=0.18\linewidth]{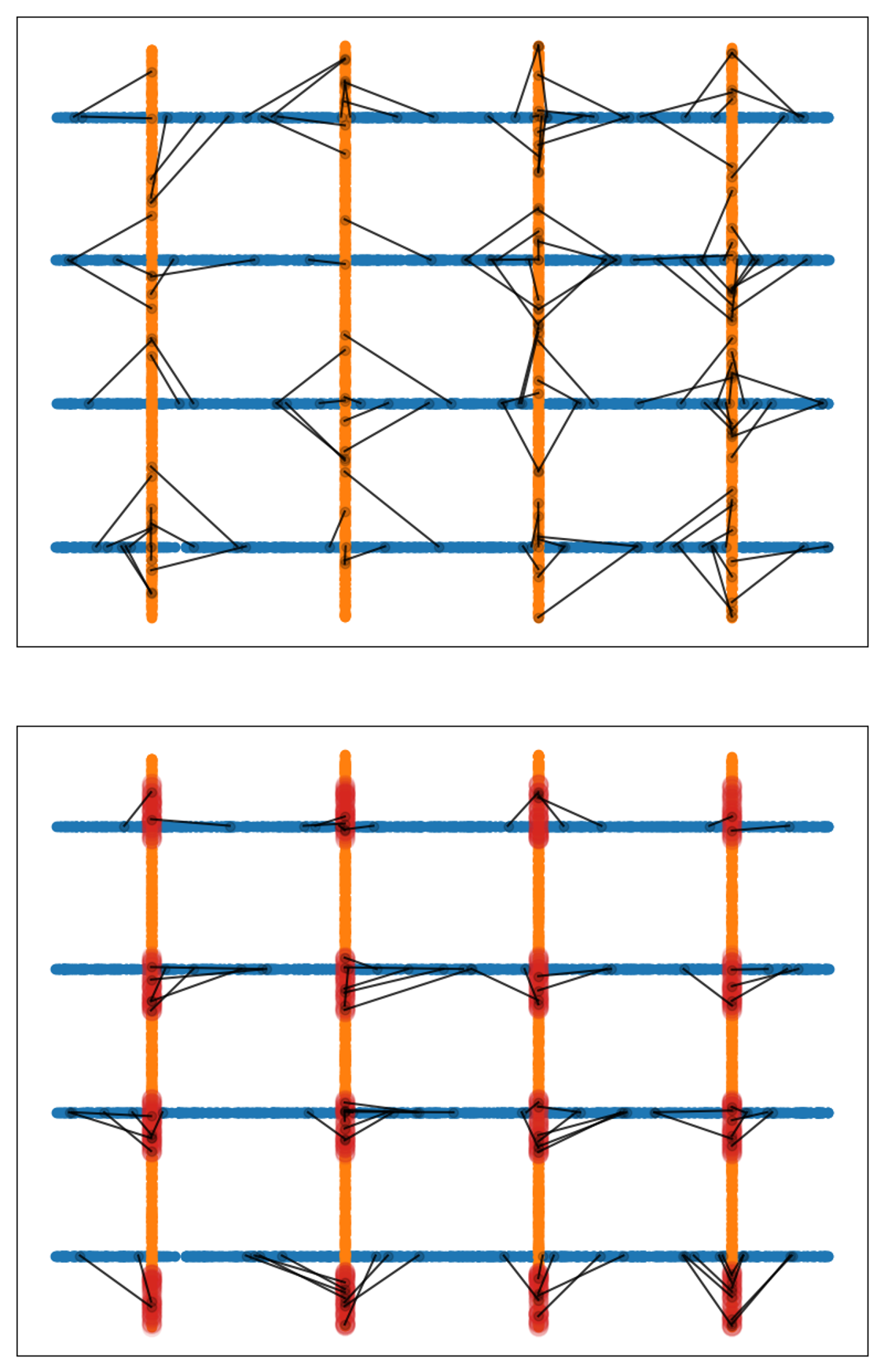}
    \label{fig:multi_verti}
    }
    \subfigure{
    \includegraphics[width=0.1 \linewidth]{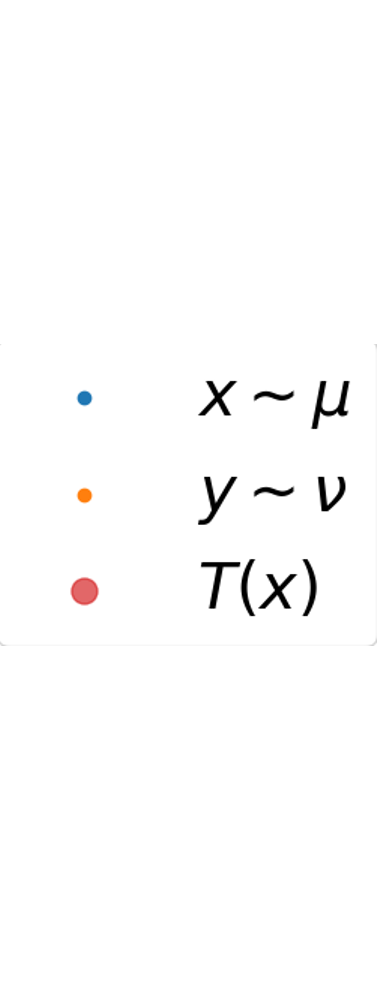}
    } \vspace{-10pt}
    \caption{\textbf{Visualization of failure cases} by comparing the Optimal Transport map (\textbf{1st row}) and the max-min solution (\textbf{2nd row}) of Semi-dual Neural OT in the failure cases. The source data $x \sim \mu$, target data $y \sim \nu$, and generated data $T(x)$ are represented in Blue, Orange, and Red. The max-min solution fails to recover the correct OT Map.}
    \label{fig:fail_case}
    \vspace{-10pt}
\end{figure*}

\subsection{Failure Cases When Our Condition Is Not Met}
\label{sec:failure}

In Thm~\ref{thm:uniqueness}, it is crucial to assume that $\mu$ does not give mass to the measurable sets of Hausdorff dimension at most $d-1$ dimension. Without this assumption, SNOT may fail even when the deterministic OT Map $T^{\star}$ uniquely exists.  Specifically, the failure cases discussed in this section refer to scenarios where $(V^{\dagger}, T^{\dagger})$ is a max-min solution of Eq. \ref{eq:problem} but does not correspond to the OT Map $T^{\star}$ (Eq. \ref{eq:ot_monge}), i.e., $(Id, T^{\dagger})_{\#} \mu$ fails to represent the OT Plan $\pi^{\star}$ (Eq. \ref{eq:Kantorovich}). 

\subsubsection{Discrepancy between a Max-min Solution and the Deterministic OT Map} \label{sec:fail_det}
We first focus on the Monge OT problem (Eq. \ref{eq:ot_monge}), where the deterministic OT Map $T^{\star}$ exists. Specifically, we investigate source and target distribution pairs where $T^{\star}$ exists, but the max-min solution $T^{\dagger}$ of the SNOT objective (Eq. \ref{eq:otm}) fails to recover this optimal solution. 
Here, we provide two examples, depending on the uniqueness of $T^{\star}$.
\paragraph{Example 1. [When $T^{\star}$ exists but is not unique]}
First, we introduce a case where multiple optimal solutions $T^{\star}$ exist for the Monge OT problem. Assume that the source and target distributions are uniformly supported on $A = [-1,1]\times \{0\}$ and $B = \{0 \} \times [-1,1]$, respectively (Fig. \ref{fig:perp}).
In this case, any transport map $T$ satisfying $T_\# \mu = \nu$ becomes an optimal transport map for the quadratic cost function. Formally, note that for any transport map $T$, the following holds:
\vspace{-8pt}
\begin{multline} \label{eq:failure1_t}
    \int_{\mathcal{X} } c(x, T(x)) d\mu(x) 
    = \frac{1}{2} \int_{\mathcal{X}} \|x\|^2 d \mu (x) \\
    - \int_{\mathcal{X}} \cancel{\langle x, T(x)} \rangle \ d\mu(x)
    + \frac{1}{2} \int_{\mathcal{Y}} \|y\|^{2} d \nu (y) = \frac{1}{3}.
\end{multline}
The first equality follows from $T_\# \mu = \nu$ and $\langle x, T(x) \rangle = 0 $ for all $x$ because $A \perp B$.
Since every transport map achieves the same transport cost, any transport map becomes an optimal transport map $T^{\star}$. 

Then, we prove that $T^{\dagger}$ does not correspond to $T^{\star}$. Specifically, we show that $V^{\star}(y) \in S_{c}$\footnote{Note that $V^{\star}(y)$ is $c$-concave function since $\frac{1}{2} \lVert y \rVert^2 - V^{\star}(y)$ is convex and lower semicontinuous (Thm 1.21, \cite{santambrogio}).} is the Kantorovich potential (Eq. \ref{eq:kantorovich-semi-dual}) and that $T^{\dagger}$ is not guaranteed to generate the target distribution. 
\begin{equation}
    V^{\star}(y) := 
    \begin{cases}
        \frac{1}{2} \lVert y \rVert^2 &\text{if } y \in B \\
        -\infty  &\text{otherwise. } 
    \end{cases}   
\end{equation}
\vspace{-7pt}

By substituting $V^{\star}$ into $V$, the inner problem of SNOT (Eq. \ref{eq:otm}) can be expressed as follows. 
\begin{align}
\label{eq:failure1_inner}
    &\!\!\inf_{T:\mathbb{R}^2 \rightarrow \mathbb{R}^2} \!\int_{\mathcal{X}} \left( \frac{1}{2} \lVert x-T(x)\rVert^2 - V^{\star} (T(x)) \right)
    d\mu(x) \\&= \inf_{T \text{ s.t. } \text{supp}(T_{\#} \mu) \subset B} \int_{\mathcal{X}} \left( \frac{1}{2} \lVert x-T(x)\rVert^2 - \frac{1}{2} \lVert T(x)\rVert^2\right)
    d\mu(x) \nonumber
    \\&= \inf_{T \text{ s.t. } \text{supp}(T_{\#} \mu) \subset B} \int_{\mathcal{X}} \frac{1}{2} \lVert x\rVert^2 - \cancel{\langle x, T(x)\rangle} d\mu(x) \nonumber
    \\ &= \int_{\mathcal{X}} \frac{1}{2} \lVert x\rVert^2 d\mu(x). \nonumber
\end{align}
Note that $T(x) \in B$ for $\mu$-a.s. in $x$; otherwise, the value of Eq. \ref{eq:failure1_inner} becomes infinite. Therefore, $\langle x, T(x)\rangle = 0$ and $V^{\star} (T(x)) = \frac{1}{2} \lVert T(x)\rVert^2$. Then, Eq. \ref{eq:otm} becomes as follows
\begin{align}
    \int_{\mathcal{X}} 
    \frac{1}{2} \lVert x\rVert^2 
    d\mu(x) + \int_{\mathcal{Y}} \frac{1}{2} \lVert y \rVert^2 d\nu(y) = \frac{1}{3}.
\end{align}


Since $V^{\star}$ attains the same value of $\mathcal{L}_{V_{\phi}, T_{\theta}}$ as $T^{\star}$ in Eq. \ref{eq:failure1_t}, $V^{\star}$ is the optimal potential. Furthermore, by comparing Eq. \ref{eq:failure1_inner} with $T_{\theta}$-parametrization (Eq. \ref{eq:def_T}), we can easily observe that any measurable map $T_{\theta}:A\rightarrow B$ can be a max-min solution of SNOT. In other words, there is no constraint ensuring that $T_{\theta \#} \mu = \nu$. For example, $T_{\theta}(x) = (0,0)$ for $\forall x \in \fX$ is also a valid max-min solution. This means that the existing SNOT models cannot learn the optimal transport map between these two distributions (Fig. \ref{fig:perp_model}).

\paragraph{Example 2. [When unique $T^{\star}$ exists]}
Here, we present another failure case when there is a unique optimal transport map $T^{\star}$. Assume that the source and target distributions are uniformly distributed over $A = [-1,1]\times \{0 \}$ and $B=[-1,1]\times \{1\}$, respectively (Fig. \ref{fig:hor}).
In this setup, the unique $T^{\star}$ is given by:
\begin{equation}
    T^{\star}(x) := (x_1,1) \quad \text{ for } x=(x_1, 0) \in \fX.
\end{equation}
Thus, $\mathcal{T}(\mu, \nu) = \frac{1}{2}$. Similar to Example 1, we show that $V^{\star}(y) = \frac{1}{2} \lVert y_2 \rVert^2 \in S_{c}$ with $y=(y_1, y_2)$ is the optimal Kantorovich potential and analyze the max-min solution of Eq. \ref{eq:otm}. For this $V^{\star}$, the inner problem of SNOT can be computed as follows:
\begin{equation}  \label{eq:failure2_inner}
    \inf_{T}\! \int_{\mathcal{X}} \frac{1}{2} \lVert x_1 - T(x)_1 \rVert^2 d\mu(x) + \!\!\int_{\mathcal{Y}} \frac{1}{2} \lVert y \rVert^2 d\nu(y) \!=\! \frac{1}{2}.
    \vspace{-5pt}
\end{equation} 

Because Eq. \ref{eq:failure2_inner} achieves the same value as $\mathcal{T}(\mu, \nu)$, $V^{\star}$ is the optimal potential. For this $V^{\star}$, any transport map $T((x_1, x_2)):= (x_1, a)$ for any $a \in \mathbb{R}$ for each $(x_1, x_2) \in \fX$ becomes a max-min solution of the SNOT. In this case, the existing approach fails to even characterize the correct support of the target distribution $\nu$.

\subsubsection{Discrepancy between a Max-min Solution and the Stochastic OT Map}
\label{sec:fail_no_det}

The standard SNOT parametrizes the transport map with a deterministic function $T_{\theta}$ (Eq. \ref{eq:def_T}). 
When no deterministic OT Map $T^{\star}$ exists but only an  OT Plan $\pi^{\star}$ exists (Eq. \ref{eq:Kantorovich}), it is clear that the SNOT cannot accurately represent the stochastic OT Map (OT Plan).

\paragraph{Example 3. [When only $\pi^{\star}$ exists]}
Suppose the source and target distributions are uniform on $A = [0,1]\times \{ 0 \}$ and $B = [0,1]\times \{1\} \cup [0,1] \times \{-1 \}$, respectively (Fig. \ref{fig:one-to-many}). 
In this case, it is clear that the OT Plan $\pi^{\star}$ is given as follows:
\begin{equation} \label{eq:example1_eq1}
    \pi^{\star}(y|x) = \frac{1}{2} \delta_{(x_1, 1)} + \frac{1}{2} \delta_{(x_1, -1)} \text{ where } x\!=\!\!(x_1, x_2).
\end{equation}
The OT Plan $\pi^{\star}(y|x)$ moves each $x$ vertically either up or down with probability $\frac{1}{2}$, without incurring additional cost from horizontal movement. Then, we show that $V^\star(y) = \frac{1}{2} \Vert y_2 \Vert^2 \in S_{c}$ with $y = (y_1, y_2)$ is the optimal potential. The $(V^\star)^c$ and $V^{\star}$ can be computed for $\mu$ and $\nu$ as follows:
\vspace{-3pt}
\begin{align} 
    \!\!(V^{\star})^c(x)\! = \!\!\inf_{y \in \fY}\!\! \left( c(x,y) \!- \!V^{\star}\!(y) \right)\! = \!\inf_{y_1} \! \frac{1}{2} \Vert x_1 \!\!-\! y_1 \!\Vert^2 \!\!=\!0,\! \label{eq:example1_eq2} 
\end{align}
$V^{\star}(y) = \frac{1}{2} \Vert y_2\Vert^2 = \frac{1}{2} \text{ for } \forall y \in \fY.$
By comparing the $C$ for the optimal transport plan (Eq. \ref{eq:example1_eq1}) and the semi-dual form $S$ for $V^\star$, we can easily verify that $V^\star$ is the optimal Kantorovich potential.

Then, from Eq. \ref{eq:example1_eq2}, we can see that $T_1$ and $T_2$ are the two possible solutions for the $T$-parametrization (Eq. \ref{eq:def_T}) in the SNOT for $V^{\star}$, 
    $T_1(x):=(x_1,1)$ and $T_2(x):=(x_1,-1).$
for $x=(x_1, x_2) \in \fX$. These two candidates $T_{1}, T_{2}$ only characterize a subset of the support of $\pi^{\star} (y|x)$. Therefore, our deterministic $T_{\theta}$ cannot learn the stochastic $\pi^{\star} (y|x)$. 

\paragraph{Stochastic Parametrization of OT Map} In practice, a stochastic parametrization of $T_{\theta}(x, z)$ is often adopted to improve performance in the SNOT models \citep{not, uotm}. This stochastic parametrization $T_{\theta}(x, z)$ introduces an additional noise variable $z \sim N(0, I)$: 
\vspace{-10pt}
\begin{equation} \label{eq:stochastic_generator} 
    T_{\theta}(x, z) \in \arg\min_{y\in \mathcal{Y}} \{ c(x,y) - V^\star (y) \}, \\[-5pt]
\end{equation}
$(x,z)\sim \mu \times \mathcal{N}(0,I)$ a.s.. As a result, each $x$ is transported to multiple $T(x, z)$ values depending on $z$. \textbf{We point out that even a stochastic parametrization, such as $T_{\theta}(x, z)$ with a noise variable $z \sim N(0, I)$, cannot address this limitation.} For the formal statement, see Appendix \ref{appen:non_conv_stocas_param}.

\begin{proposition}[Informal]
\label{prop:stoc}
    Assume that the stochastic parametrization of $T_{\theta}(x, z)$ is ideally trained as in \eqref{eq:stochastic_generator} for $(\mu, \mathcal{N})$-a.s. $\mathcal{D}_x$ in Eq. \ref{eq:argmin} may not uniquely determined 
    and $T_{\theta}(x, z)$ may contain spurious solutions.
\end{proposition}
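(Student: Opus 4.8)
The plan is to prove the proposition by revisiting the one-to-many configuration of Example~3 and showing that the ideal stochastic parametrization \eqref{eq:stochastic_generator} remains under-determined. Concretely, with $\mu$ uniform on $A=[0,1]\times\{0\}$, $\nu$ uniform on $B=[0,1]\times\{1\}\cup[0,1]\times\{-1\}$, and $V^\star(y)=\frac12\Vert y_2\Vert^2$ (already verified in the excerpt to be the optimal Kantorovich potential, with $(V^\star)^c\equiv 0$ in \eqref{eq:example1_eq2}), I would first record the exact form of the argmin set in \eqref{eq:argmin}: since $c(x,y)-V^\star(y)=\frac12(x_1-y_1)^2$ for $x=(x_1,0)\in\fX$ and $y\in B$, the minimizing $y$ must have $y_1=x_1$, so $\mathcal{D}_x=\{(x_1,1),(x_1,-1)\}$ is a genuine two-point set for every $x$. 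This establishes the first clause of the proposition, namely that $\mathcal{D}_x$ is not a singleton.

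For the second clause I would characterize \emph{all} maps compatible with \eqref{eq:stochastic_generator}. A measurable $T_\theta(x,z)$ satisfies $T_\theta(x,z)\in\mathcal{D}_x$ for $(\mu\times\mathcal{N}(0,I))$-a.e. $(x,z)$ if and only if there is a measurable set $S$ of pairs $(x,z)$ with $T_\theta(x,z)=(x_1,1)$ on $S$ and $T_\theta(x,z)=(x_1,-1)$ off $S$ (up to a null set); both branches $x\mapsto(x_1,\pm1)$ are explicit continuous maps, so no nontrivial measurable-selection argument is needed. For every such $T_\theta$ the integrand $c(x,T_\theta(x,z))-V^\star(T_\theta(x,z))$ vanishes identically, hence $\mathcal{L}(V^\star,T_\theta)=\int_{\fY}V^\star\,d\nu=S(\mu,\nu)=C(\mu,\nu)$; thus each admissible $T_\theta$ attains the inner infimum of the stochastic analogue of \eqref{eq:otm}, and together with the optimal $V^\star$ it forms a max-min solution. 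On the other hand, $T_{\theta\#}(\mu\times\mathcal{N}(0,I))=\nu$ holds only if $\mathcal{N}(\{z:(x,z)\in S\})=\tfrac12$ for $\mu$-a.e. $x$, i.e. only if the $z$-slices split the mass evenly. Taking $S$ to be the whole domain, i.e. $T_\theta(x,z)\equiv(x_1,1)$, yields a max-min solution whose pushforward is the uniform measure on $[0,1]\times\{1\}\neq\nu$; this is a fake solution and does not recover $\pi^\star$ from \eqref{eq:example1_eq1}. The same argument shows that \emph{generic} admissible $S$ fail to produce the correct conditional law, so the stochastic parametrization does not pin down the OT Plan.

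The main obstacle is making precise what ``ideally trained'' means and confirming that membership $T_\theta(x,z)\in\mathcal{D}_x$ a.e. is the \emph{only} constraint the max-min problem places on $T_\theta$ — that there is no implicit coupling constraint coming from the outer supremum that would enforce the correct conditional distribution over $z$. This reduces to checking that $\mathcal{L}(V^\star,T_\theta)$ takes the same value $S(\mu,\nu)$ for every admissible selection (which follows from the pointwise vanishing of $c(x,\cdot)-V^\star(\cdot)$ on $\mathcal{D}_x$), so the optimizer on the $V$-side is unchanged and no admissible $T_\theta$ is penalized relative to another. A secondary technical point, relevant for the general statement in the appendix rather than this example, is phrasing the conclusion for an arbitrary noise law $\mathcal{N}(0,I)$ and noise dimension; since the argument only uses that $\mathcal{N}$ is a fixed probability measure and that $\mathcal{D}_x$ has more than one element on a set of positive $\mu$-measure, it goes through verbatim.
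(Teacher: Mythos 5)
Your proposal is correct and matches the paper's own argument: the formal version in Appendix~\ref{appen:non_conv_stocas_param} rewrites the ideally trained stochastic map as the condition $\pi^\dagger(\partial_c V^\star)=1$, observes that this does not enforce the marginal/optimality constraints, and points to exactly the one-to-many configuration of Example~3 as the witness. You simply carry out that witness explicitly (computing $\mathcal{D}_x=\{(x_1,1),(x_1,-1)\}$, verifying every admissible selection attains the inner infimum, and exhibiting the constant-up selection whose pushforward is not $\nu$), which is the same route in concrete form.
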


\section{Method} \label{sec:method}
In Sec. \ref{sec:failure}, we analyzed the sufficient condition to prevent failures in the existing SNOT framework. Building on this analysis, we propose a novel method for learning the OT Plan, called the \textit{Optimal Transport Plan (\textbf{OTP}) model}, which is effective even when the conditions are not satisfied.

\begin{algorithm}[t]
\caption{Training algorithm of OTP}
\begin{algorithmic}[1]
\REQUIRE Source distribution $\mu$ and the target distribution $\nu$; OT Map network $T_\theta$ and potential network $V_\phi$; Total number of iteration $K$; Number of inner-loop iterations $K_T$; Decreasing sequence of noise levels $\{\epsilon_k \}^K_{k=1}$.
\FOR{$k = 0, 1, 2 , \dots, K$}
    \STATE Sample a batch $x\sim \mu$, $y\sim \nu$, $z \sim \mathcal{N}(\mathbf{0}, \mathbf{I})$.
    \STATE $\Tilde{x} \leftarrow x + \sqrt{\epsilon_k} z $ or $\Tilde{x} \leftarrow \sqrt{1-\epsilon_k} x + \sqrt{\epsilon_k} z $.
    \STATE Update $\phi$ to maximize $\mathcal{L}_{\phi} = - V_\phi \left(T_\theta(\tilde x)\right) + V_\phi(y)$.
    \FOR{$j= 0, 1, \dots, K_T$}
    \STATE Sample a batch $x\sim \mu, z\sim \mathcal{N}(\mathbf{0}, \mathbf{I})$.
    \STATE $\Tilde{x} \leftarrow x + \sqrt{\epsilon_k} z $ or $\Tilde{x} \leftarrow \sqrt{1-\epsilon_k} x + \sqrt{\epsilon_k} z $.
    \STATE $\mathcal{L}_{\theta} = c(\Tilde{x}, T_\theta(\tilde x)) - V_\phi \left(T_\theta(\tilde x)\right) + V_\phi(y)$.
    \STATE Update $\theta$ to minimize $\mathcal{L}_{\theta}$.
    \ENDFOR
\ENDFOR
\end{algorithmic}
\label{alg:otp}
\end{algorithm}

\subsection{Proposed Method} \label{sec:proposed_method}
\textbf{Our goal is to learn the OT Plan $\pi^{\star}$ (Eq. \ref{eq:Kantorovich}) between the source distribution $\mu$ and the target distribution $\nu$.} Note that the sufficient condition in Thm. \ref{thm:uniqueness} is an inherent property of $\mu$. When this condition is not satisfied, the existence of OT Map $T^{\star}$ is not guaranteed, and only $\pi^{\star}$ exists. In this regard, our OTP model serves as a natural generalization of existing SNOT models.

Our method consists of two steps: First, we introduce a \textit{smoothed version of the source distribution} $\mu_{\epsilon}$. $\mu_{\epsilon}$ is constructed to satisfy the sufficient conditions from Thm. \ref{thm:uniqueness}. As a result, the SNOT between $\mu_{\epsilon}$ and $\nu$ recovers the correct OT Plan $\pi^{\star}_{\epsilon}$ between them. Second, we gradually adjust $\mu_{\epsilon}$ back to the original source measure $\mu$. This approach allows our method to learn the correct optimal transport plan, even in cases where the existing SNOT framework fails.

\paragraph{OTP Model}
As a practical implementation of the high-level scheme described above, we propose a new method for learning the OT Plan $\pi^{\star}$ from $\mu$ to $\nu$, called \textit{Optimal Transport Plan (OTP)} model. This method is based on Thm. \ref{thm:uniqueness} and Thm. \ref{thm:convergence}, which require the following two conditions on the smoothed measure $\mu_{\epsilon}$:
\begin{enumerate}[topsep=0pt, itemsep=-1pt]
    \item[(c1)] 
    $\mu_{\epsilon}$ does not give mass to the measurable sets of Hausdorff dimension at most $d-1$ dimension
    (Thm. \ref{thm:uniqueness}).
    \item[(c2)] $\mu_{\epsilon_{k}}$ weakly converges to $\mu$ as $k \rightarrow \infty$ (Thm. \ref{thm:convergence}).
\end{enumerate}
For simplicity, we consider the absolute continuity condition on $\mu_{\epsilon}$ as we did in Cor. \ref{cor:unique_saddle}. Motivated by diffusion models \citep{ddpm, scoresde}, we consider \textbf{two options for the smoothing distribution}: (1) Gaussian convolution $\mu_{\epsilon_{k}} = \mu \, * \,  \mathcal{N}(0, \epsilon_{k} I)$ and (2) Variance-preserving convolution $\mu_{\epsilon_{k}} = \left(\sqrt{1-\epsilon_{k}}Id\right)_{\#}\mu \, * \,  \mathcal{N}(0, \epsilon_{k} I)$ with a predefined noise level $\epsilon_{k} \searrow 0$. For noise-level scheduling, we follow \citet{scoresde}. Note that both of these smoothing distributions satisfy conditions (c1) and (c2). Specifically, for Gaussian convolution, for any $\mu \in \mathcal{P}_2(\mathbb{R}^d)$, (c1) $\mu_\epsilon$ is absolutely continuous with respect to the Lebesgue measure and has positive density on $\mathbb{R}^d$. Moreover, (c2) as $\epsilon \rightarrow 0$, $\mu_{\epsilon}  \rightharpoonup \mu$. A similar argument works for the Variance-preserving convolution case. Then, we apply the SNOT framework to the smoothed measure $\mu_{\epsilon_{k}}$ and the target measure $\nu$. The learning objective is given as follows:
\vspace{-15pt}
\begin{multline} \label{eq:saddle_epsilon}
    \!\!\mathcal{L}^{k}_{V_{\phi}, T_{\theta}}\! = \!\sup_{V_{\phi}} \left[ \int_{\mathcal{X}} \!\inf_{T_{\theta}} \left[ c\left(x,T_{\theta}(x)\right)\!-\!V_{\phi} \left( T_{\theta}(x) \right) \right] d\mu_{\epsilon_{k}}\!(x) \right. \\[-8pt]
    \left. + \int_{\mathcal{X}} V_{\phi}(y)  d\nu(y) \right].
\end{multline}

Then, we gradually decrease the noise level $\{ \epsilon_{k} \}_{k=1}^{K}$ throughout training. The two conditions on $\mu_{\epsilon_{k}}$, i.e., (c1) and (c2), offer the following guarantees. 
First, for each noise level $\epsilon_{k}$, the max-min solution of  $\mathcal{L}^{k}_{V_{\phi}, T_{\theta}}$ recovers the optimal transport map $T_{k}^{\star}$ and the Kantorovich potential $V_{k}^{\star}$.
Second, as $k \rightarrow \infty$, i.e., $\epsilon_{k} \searrow 0$, the optimal transport plan $\pi^{\star}_{k} = (Id, T_k^{\star})_{\#} \mu_{\epsilon_{k}}$ converges (up to a subsequence) to $\pi^{\star}$. Thm. \ref{thm:convergence} follows from combining Thm. \ref{thm:uniqueness} and \citet{villani}. See Appendix \ref{appen:conv_result_from_oldandnew} for proof.

\begin{theorem} \label{thm:convergence}
    Let $\{\mu_{\epsilon_k}\}_{k\in \mathbb{N}}$ be a sequence of absolutely continuous probability measures, and $T_{k}^{\star}$ be the OT map from $\mu_{\epsilon_k}$ to $\mu$.
     If $\mu_{\epsilon_k}$ weakly converges to $\mu$ as $k \to \infty$, then $\pi^{\star}_{k} = (Id, T_k^{\star})_{\#} \mu_{\epsilon_{k}}$ weakly converges to the OT plan $\pi^{\star}$ between $\mu$ and $\nu$, along a subsequence. Consequently, $\pi^{\star}_{k}$ from our OTP model with either convolution above also weakly converges to $\pi^{\star}$, along a subsequence.
\end{theorem}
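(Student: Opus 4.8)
The plan is to read Theorem~\ref{thm:convergence} as a stability statement for optimal transport plans and to prove it by the classical compactness-plus-lower-semicontinuity scheme, using Theorem~\ref{thm:uniqueness} to identify the objects in play. First note that, since each $\mu_{\epsilon_k}$ is absolutely continuous (hence assigns no mass to measurable sets of Hausdorff dimension at most $d-1$) and has finite second moment, Theorem~\ref{thm:uniqueness}(1) provides a unique OT map $T_k^\star$ from $\mu_{\epsilon_k}$ to $\nu$, and the associated coupling $\pi_k^\star := (Id, T_k^\star)_\#\mu_{\epsilon_k}$ is exactly \emph{the} optimal transport plan in $\Pi(\mu_{\epsilon_k},\nu)$. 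Thus it suffices to show that any weak subsequential limit of the sequence of optimal plans $\{\pi_k^\star\}$ is an optimal plan between $\mu$ and $\nu$; this, together with uniqueness of $\pi^\star$ when it holds, is precisely the conclusion. This is the content of Villani's stability theorem for optimal transport, to which ``\citet{villani}'' in the paper refers, and the argument below is its specialization to our setting.

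Next I would establish compactness. Since $\mu_{\epsilon_k}\rightharpoonup\mu$ on the Polish space $\mathbb{R}^d$, the family $\{\mu_{\epsilon_k}\}_k$ is tight; the second marginal, constantly $\nu$, is trivially tight; and tightness of both marginal families forces tightness of $\{\pi_k^\star\}_k$ (for compact $K_1,K_2$ with $\mu_{\epsilon_k}(K_1^c),\nu(K_2^c)<\delta$ one has $\pi_k^\star((K_1\times K_2)^c)\le\mu_{\epsilon_k}(K_1^c)+\nu(K_2^c)<2\delta$, uniformly in $k$). By Prokhorov's theorem we may pass to a subsequence, not relabeled, with $\pi_k^\star\rightharpoonup\pi_\infty$. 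Pushing forward by the two coordinate projections (which are continuous) and using $\mu_{\epsilon_k}\rightharpoonup\mu$, the marginals of $\pi_\infty$ are seen to be $\mu$ and $\nu$, so $\pi_\infty\in\Pi(\mu,\nu)$.

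It then remains to prove that $\pi_\infty$ is optimal. Since $c(x,y)=\frac{1}{2}\|x-y\|^2$ is nonnegative and continuous, the functional $\pi\mapsto\int c\,d\pi$ is lower semicontinuous for the weak topology (write $c=\sup_n\min(c,n)$ as an increasing supremum of bounded continuous functions), so $\int c\,d\pi_\infty\le\liminf_k\int c\,d\pi_k^\star=\liminf_k C(\mu_{\epsilon_k},\nu)$. For the matching upper bound I would use the triangle inequality for $W_2$ --- equivalently, glue a coupling of $\mu_{\epsilon_k}$ with $\mu$ onto $\pi^\star\in\Pi(\mu,\nu)$ --- to get $W_2(\mu_{\epsilon_k},\nu)\le W_2(\mu_{\epsilon_k},\mu)+W_2(\mu,\nu)$, whence $\limsup_k C(\mu_{\epsilon_k},\nu)\le C(\mu,\nu)$ as soon as $W_2(\mu_{\epsilon_k},\mu)\to0$. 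Combining the two bounds, $\int c\,d\pi_\infty\le C(\mu,\nu)$, so the admissible coupling $\pi_\infty$ is optimal, i.e.\ $\pi_\infty=\pi^\star$ (or, absent uniqueness, $\pi_\infty$ is an OT plan). Finally, for the two concrete smoothings each $\mu_{\epsilon_k}$ is absolutely continuous, so Theorem~\ref{thm:uniqueness} applies as above, and the synchronous couplings $x\mapsto x+\sqrt{\epsilon_k}z$ and $x\mapsto\sqrt{1-\epsilon_k}\,x+\sqrt{\epsilon_k}z$ give $W_2^2(\mu_{\epsilon_k},\mu)\le\epsilon_k d$ and $W_2^2(\mu_{\epsilon_k},\mu)\le\epsilon_k d+(1-\sqrt{1-\epsilon_k})^2\int\|x\|^2\,d\mu\to0$ respectively; hence the hypotheses hold and the conclusion follows for the OTP model with either convolution.

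The step I expect to be the main obstacle is the upper bound $\limsup_k C(\mu_{\epsilon_k},\nu)\le C(\mu,\nu)$: mere weak convergence of $\mu_{\epsilon_k}$ to $\mu$ does not by itself prevent transport cost from escaping to infinity (the second moments could diverge), so genuine uniform control of the cost is needed --- in practice, convergence in $\mathcal{P}_2$, i.e.\ $W_2(\mu_{\epsilon_k},\mu)\to0$, or the uniform-integrability hypothesis appearing in Villani's stability theorem. This is the crux of the argument; everything else (tightness, lower semicontinuity, reading off the marginals of the limit) is soft. Fortunately the required $\mathcal{P}_2$ convergence is automatic for both smoothings used by OTP, as the $W_2$ estimates above show, so the concrete conclusion for the OTP model is unconditional.
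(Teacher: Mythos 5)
Your proposal is correct, and the first half of it coincides with the paper's argument: both proofs begin by invoking Theorem~\ref{thm:uniqueness} to identify $\pi_k^{\star}=(Id,T_k^{\star})_{\#}\mu_{\epsilon_k}$ as \emph{the} optimal plan in $\Pi(\mu_{\epsilon_k},\nu)$ (the statement's ``OT map from $\mu_{\epsilon_k}$ to $\mu$'' is a typo for $\nu$, which you silently and correctly fix). Where you diverge is in the stability step. The paper treats this as a black box: it quotes Villani's Theorem 5.20, whose mechanism is that a weak limit of optimal plans is $c$-cyclically monotone, and a $c$-cyclically monotone plan of finite cost is optimal; the hypothesis needed there is $\liminf_k\int c\,d\pi_k^{\star}<\infty$. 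You instead reprove stability from scratch by the compactness-plus-semicontinuity scheme: tightness of the marginals gives tightness of the plans, Prokhorov gives a subsequential limit $\pi_\infty\in\Pi(\mu,\nu)$, lower semicontinuity gives $\int c\,d\pi_\infty\le\liminf_k C(\mu_{\epsilon_k},\nu)$, and the matching upper bound comes from the $W_2$ triangle inequality, which needs $W_2(\mu_{\epsilon_k},\mu)\to 0$ rather than mere weak convergence. This is a genuinely different (and more demanding) route for the optimality of the limit, but you correctly observe that the extra hypothesis is free for both smoothings actually used by OTP, via the explicit synchronous-coupling bounds $W_2^2(\mu_{\epsilon_k},\mu)\le\epsilon_k d$ (resp.\ $\epsilon_k d+(1-\sqrt{1-\epsilon_k})^2\int\|x\|^2d\mu$). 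In doing so you make explicit a point the paper leaves implicit: the theorem's bare hypothesis of weak convergence does not by itself control the transport costs, and some uniform second-moment control (present in Villani's theorem as the finite-$\liminf$ condition, and automatic for the Gaussian convolutions) is genuinely needed. Your version is more self-contained and slightly sharper about the hypotheses; the paper's version is shorter and, by going through cyclical monotonicity, would survive under weaker moment assumptions.
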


In this way, we can learn the optimal transport plan $\pi^{\star}$ between $\mu$ and $\nu$ without falling into the spurious solutions of the max-min learning objective (Eq. \ref{eq:otm}). While the convergence theorem only guarantees convergence up to a subsequence (Thm. \ref{thm:convergence}), our method exhibits decent convergence to $\pi^{\star}$ in practice (Sec. \ref{sec:experiment}). Specifically, our training algorithm progressively finetunes the transport network $T_{\theta}$ and the potential network $V_{\phi}$ by adjusting the smoothing level. As a result, the subsequence convergence does not pose any issues.

\begin{figure}[t]
    \centering
    \includegraphics[width=.75\linewidth]{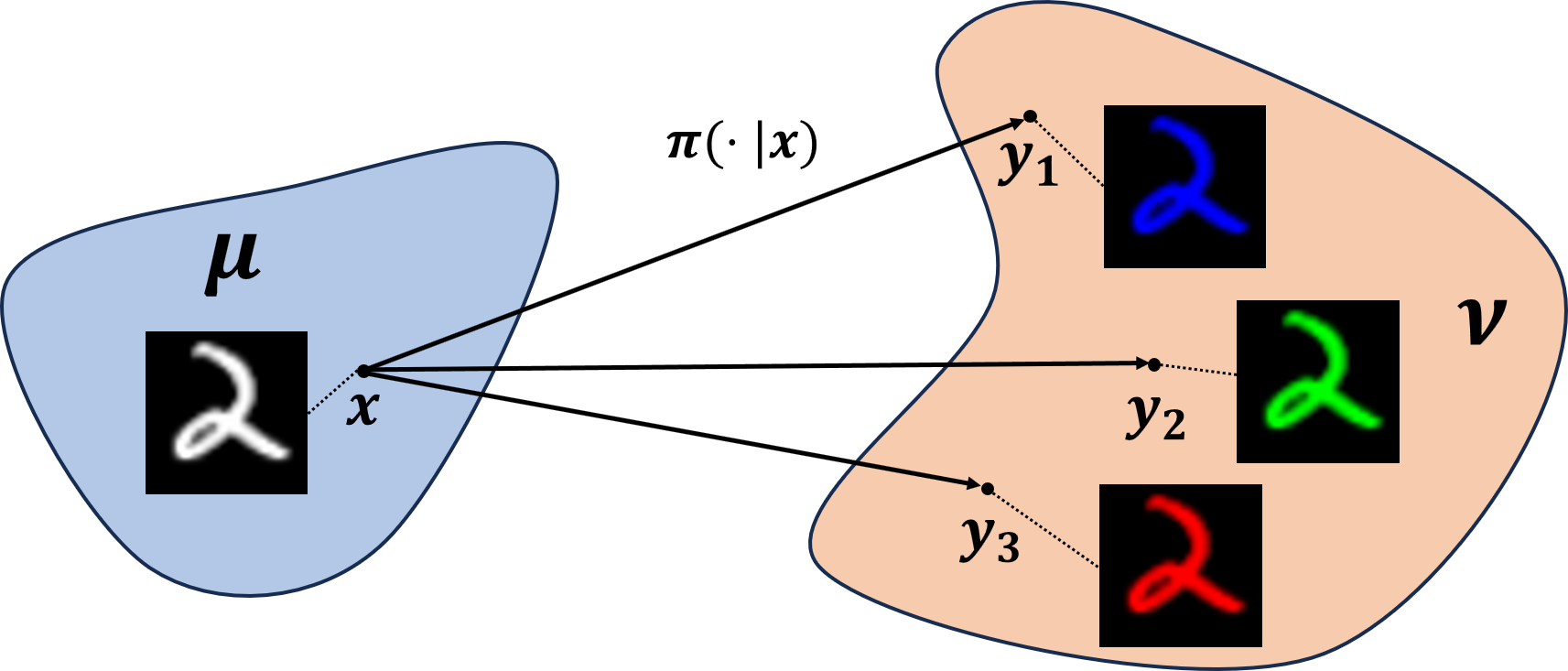}
    \caption{
    \textbf{Example of a stochastic transport map (OT Plan) task}, e.g., MNIST-to-CMNIST colorization.}
    \vspace{-10pt}
    \label{fig:concept_m2cm}
\end{figure}

\paragraph{Importance of OT Plan in Neural OT}
Our OTP model is for learning the OT Plan, i.e., the stochastic transport map. In fact, OT Plans are not only a theoretical generalization of deterministic OT Maps, but are also inherently more suitable for various real-world machine learning applications. For instance, in image-to-image translation tasks, stochastic OT Plans can effectively model the diversity of plausible outputs. Similarly, in inverse problems such as colorization or image inpainting, stochastic OT Plans are also highly desirable because these tasks inherently involve multiple possible solutions. In Sec \ref{sec:experiment}, our experiments show that our OTP model is effective in handling the stochastic transport map application in the MNIST-to-CMNIST image translation task (Fig. \ref{fig:concept_m2cm}).

\begin{figure*}[t]
    \centering 
    \subfigure[Perpendicular]{
    \includegraphics[width=0.18\linewidth]{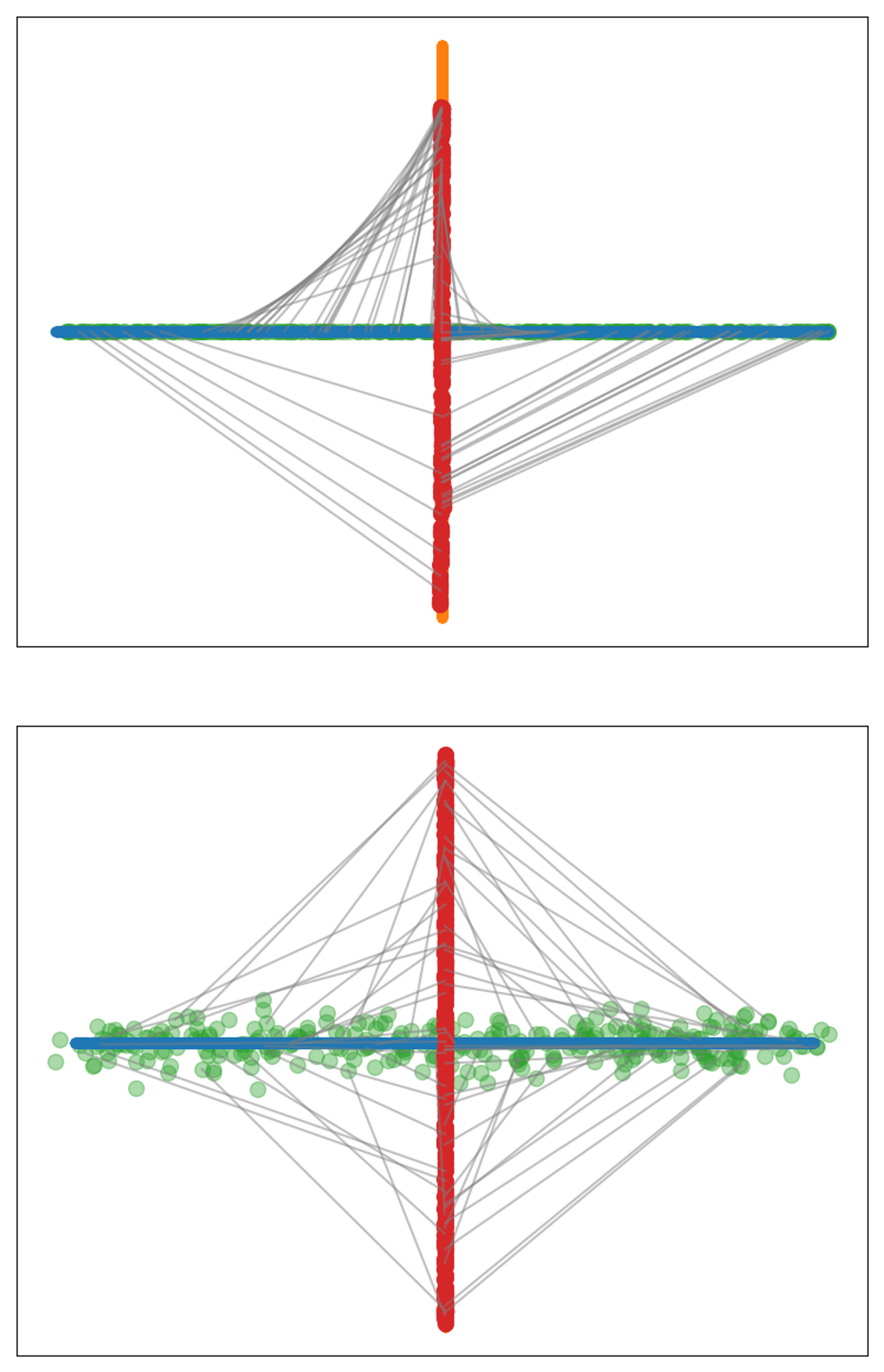}
    \label{fig:perp_model} 
    }
    \quad
    \subfigure[Parallel]{
    \includegraphics[width=0.18\linewidth]{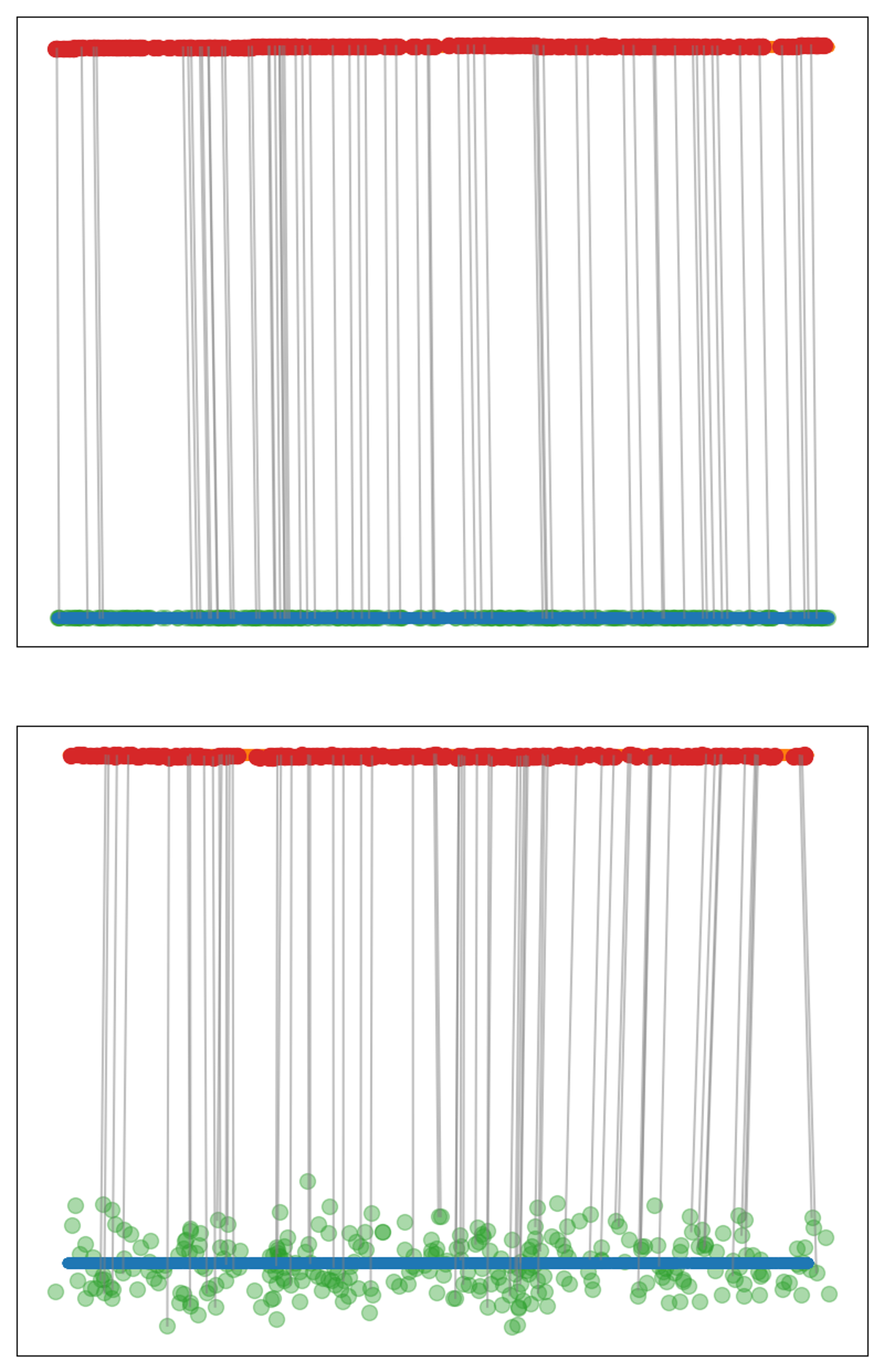}
    \label{fig:hor_model}
    }
    \quad
    \subfigure[One-to-Many]{
    \includegraphics[width=0.18\linewidth]{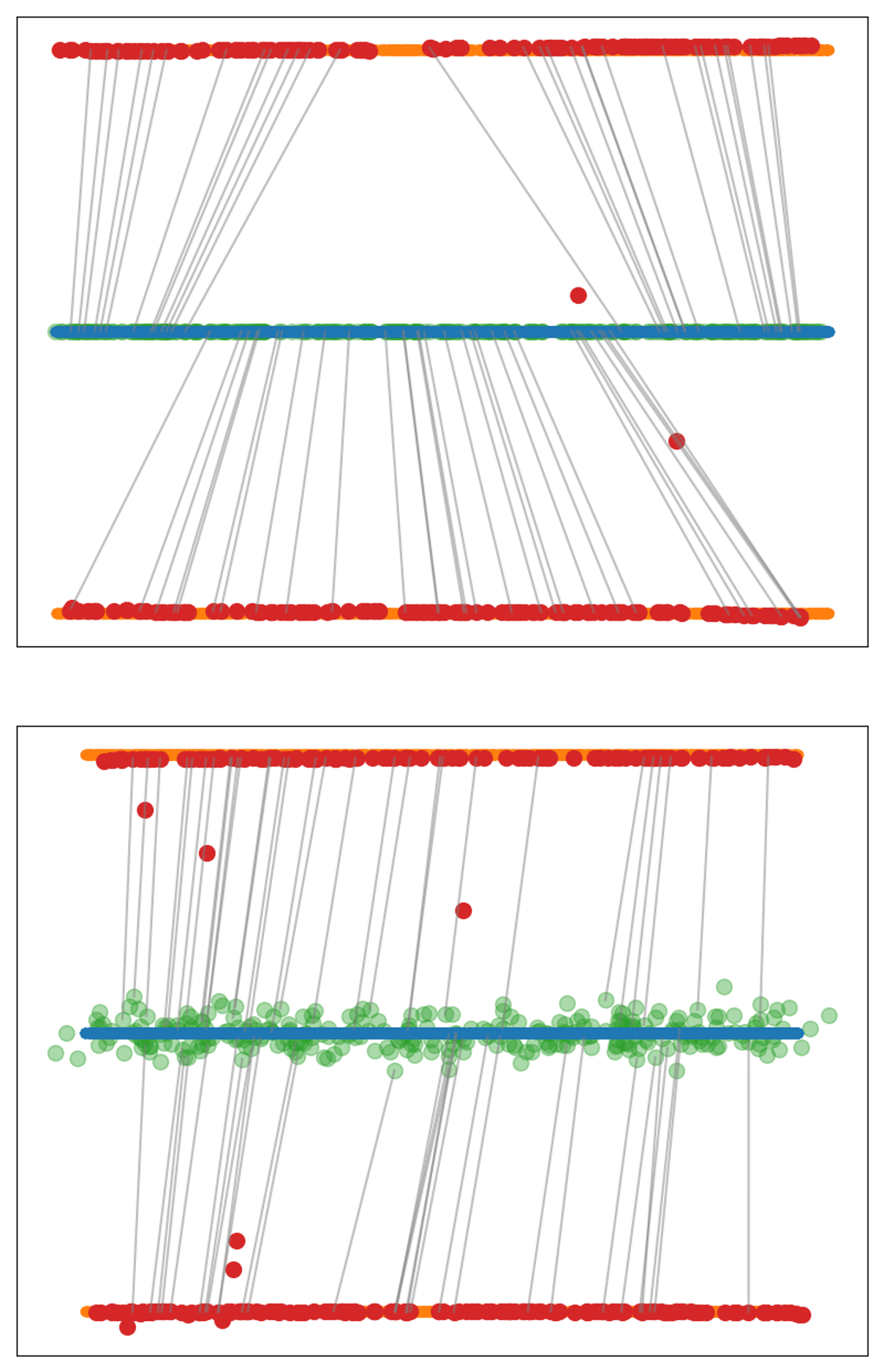}
    \label{fig:one-to-many_model}
    }
    \quad
    \subfigure[Grid]{
    \includegraphics[width=0.18\linewidth]{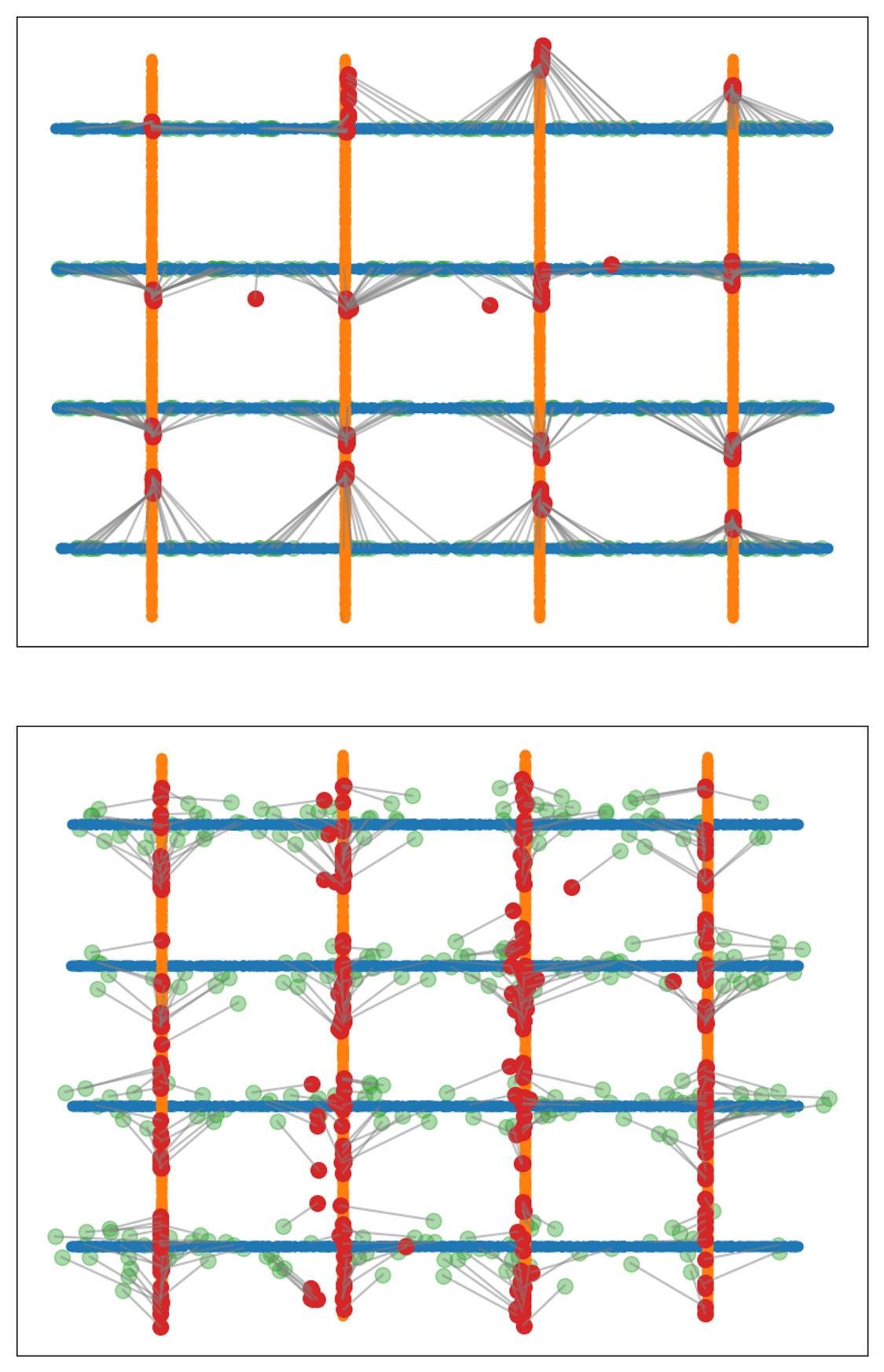}
    \label{fig:multi_verti_model} \vspace{-10pt}
    }
    \vspace{-10pt}
    \subfigure{
    \includegraphics[width=0.1\linewidth]{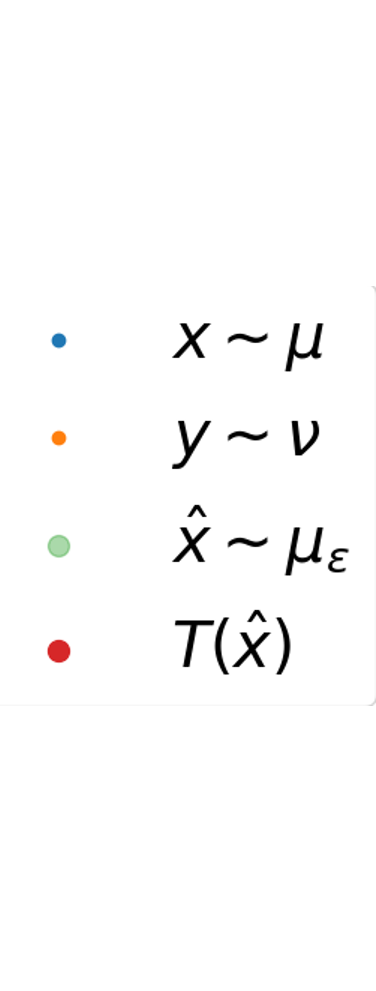}
    }  
    \caption{\textbf{Qualitative comparison between OTM (1st row) and our model (2nd row) on failure cases} in Sec \ref{sec:failure}. The noised source sample $\Tilde{x}$ in Alg \ref{alg:otp} is denoted in Green. While OTM falls into spurious solutions and fails to generate the target distribution correctly, our OTP model successfully learns the OT Plan. 
    }
    \label{fig:fail_case_model}
    \vspace{-12pt}
\end{figure*}

\paragraph{Algorithm}
We present our training algorithm for OTP (Algorithm \ref{alg:otp}). For each $\epsilon_{k}$, we alternatively update the adversarial learning objective $\mathcal{L}^{k}_{V_{\phi}, T_{\theta}}$ between the potential function $V_{\phi}$ and the transport map $T_{\theta}$, similar to the GAN framework \citep{gan}. Note that the smooth source measure $\mu_{\epsilon_{k}}$ corresponds to the probability distribution of the sum of the clean source measure $\mu$ (or the scaled source measure $\left(\sqrt{1-\epsilon_{k}}Id\right)_{\#}\mu$) and the Gaussian noise $\mathcal{N}(0, \epsilon_{k} I)$. Therefore, we can easily sample $x_{\epsilon_{k}} \sim \mu_{\epsilon}$, as follows (Line 3):
\vspace{-5pt}
\begin{equation}
    x_{\epsilon_{k}} = x + \sqrt{\epsilon_{k}} z \sim \mu \ * \ \mathcal{N}(0, \epsilon_k I) \quad \text{or} \quad
    x_{\epsilon_{k}} =\sqrt{1-\epsilon_{k}}x + \sqrt{\epsilon_{k}},
    \vspace{-5pt}
\end{equation}
where $x \sim \mu$ and $z \sim \mathcal{N}(0, I)$. In practice, decreasing the noise level until a small positive constant $\epsilon_{min} > 0$ provided better performance and training stability, compared to reducing the noise level to exactly zero. For a fair comparison, we compared the composition of the noising and transport map $x \mapsto x_{\epsilon_{min}} \mapsto T_{\theta}(x_{\epsilon_{min}})$, with the ground-truth optimal transport map $x \mapsto T^{\star}(x)$ in the experiments (Sec. \ref{sec:experiment}).

\section{Experiments} \label{sec:experiment}
In this section, we evaluate our OTP model from the following perspectives. In Sec.~\ref{sec:exp_syn}, we evaluate whether OTP successfully learns the optimal transport plan. In Sec.~\ref{sec:exp_image}, we demonstrate the scalability of OTP by assessing it on the image-to-image translation task.
For implementation details of experiments, please refer to Appendix \ref{appen:implementation_details}.

\subsection{OT Plan Evaluation on Failure Cases} \label{sec:exp_syn}
First, \textbf{we assess whether our model accurately learns the optimal transport plan $\pi^{\star}$ between the source distribution $\mu$ and the target distribution $\nu$ in failure cases outlined in Sec \ref{sec:failure}.} The evaluation is conducted in two settings: (1) Qualitative comparison in 2D cases and (2) Quantitative comparison in high-dimensional cases.
In each setting, our OTP model is compared against the existing SNOT framework (Eq. \ref{eq:otm}).

\begin{table}[t]
    \vspace{-10pt}
    \centering
    \caption{\textbf{Quantitative comparison of numerical accuracy} on synthetic datasets. Each model is evaluated using two metrics: transport cost error $D_{cost} (\downarrow)$ and target distribution error $D_{target} (\downarrow)$. 
    }
    \label{tab:fail_case_quan}
    \scalebox{0.75}{
    \begin{tabular}{c c c c c c}
        \toprule
        \multirow{2}{*}{Dimension} & \multirow{2}{*}{Model} & \multicolumn{2}{c}{Perpendicular} & \multicolumn{2}{c}{One-to-Many}  \\
        \cmidrule{3-4} \cmidrule{5-6}
        & & $D_{cost}$ & $D_{target}$ & $D_{cost}$ & $D_{target}$ \\
        \midrule
        \multirow{3}{*}{$d=2$}  & OTM & 0.038 & 0.008 &  0.069 & 0.100  \\
                                & OTM-s &   \textbf{0.007}    &  0.018  & 0.350 & \textbf{0.032} \\
                                & Ours & 0.019 & \textbf{0.007} & \textbf{0.002} & 0.110  \\
        \midrule
        \multirow{3}{*}{$d=4$}  & OTM & 0.043 & 0.039 & 0.100 & 0.090  \\
                                & OTM-s & \textbf{0.033}  & 0.065 & \textbf{0.010} & \textbf{0.038} \\
                                & Ours & 0.089 & \textbf{0.009} & 0.033 & 0.094  \\
        \midrule
        \multirow{3}{*}{$d=16$}  & OTM & 0.160 & 4.97 & 71.28 & 73.23   \\
                                & OTM-s & 0.061 & 4.85 & 97.49 & 99.57 \\
                                & Ours  & \textbf{0.058} & \textbf{0.59} & \textbf{0.06} & \textbf{0.65}  \\
        \midrule
        \multirow{3}{*}{$d=64$}  & OTM & 2.13 & 19.37 & 21.92 & 32.94  \\
                                & OTM-s & 2.74 & 18.79  & 0.20  & 12.21 \\
                                & Ours & \textbf{0.97} & \textbf{10.09} & \textbf{0.14} & \textbf{9.98}  \\
        \midrule
        \multirow{3}{*}{$d=256$}  & OTM & 10.98 & 84.91 & \textbf{0.04 }& 62.02 \\
                                & OTM-s & 16.45 & 81.68 & 0.25 & 61.87 \\
                                & Ours &  \textbf{5.05} & \textbf{63.36}  & 0.33 & \textbf{61.27}  \\
        \bottomrule
    \end{tabular}}
    \vspace{-15pt}
\end{table}

\paragraph{Qualitative Comparison}
In Sec. \ref{sec:failure}, we presented various examples where the existing SNOT framework may fail to learn the OT Map (or Plan). Here, we demonstrate that the existing approaches indeed encounter these failures, while OTP successfully learns the correct OT Map. As a baseline, we compare our method against the standard OTM with a deterministic transport map, i.e., $T_{\theta}(x)$. 

Fig. \ref{fig:fail_case_model} presents qualitative results on four datasets: Perpendicular (Ex.1), Parallel (Ex.2), One-to-Many (Ex.3), and Grid. The first row shows the vanilla OTM results and the second row exhibit our OTP results. Note that our OTP decreases the noise level until $\sigma = \epsilon_{min} > 0$ (Sec. \ref{sec:method}). Hence, the noised source samples $\Tilde{x}$ in Alg \ref{alg:otp} (Green in Fig \ref{fig:fail_case_model}) are transported to the target measure $\nu$.
In Fig. \ref{fig:fail_case_model}, \textbf{the vanilla OTM fails to learn the correct optimal transport plan in three cases except for the Parallel case}. OTM fails to cover the target measure $\nu$ in the Perpendicular and Multi-perpendicular cases. In the One-to-Many case, OTM does not learn the correct $T^{\star}$, i.e., the vertical transport. 

On the other hand, as we can see from a comparison with Fig. \ref{fig:fail_case}, \textbf{our OTP successfully learns the optimal transport plan $\pi^{\star}$}. In particular, in the One-to-Many example, our model successfully recovers the correct stochastic transport map $\pi^{\star}(y | x)$ by utilizing the initial noise as guidance to either the upper or lower mode of the target distribution.

\begin{figure*}[t]
    \begin{minipage}{.43\linewidth}
        \centering
        \subfigure[OTM-s (FID=$62.4$, LPIPS=$0.36$)]{
        \includegraphics[height=0.3\linewidth]{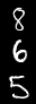}
        \includegraphics[height=0.3\linewidth]{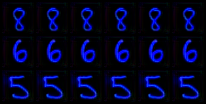}}
        \subfigure[Ours (FID=$3.18$, LPIPS=$0.32$)]{
        \includegraphics[height=0.3\linewidth]{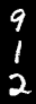}
        \includegraphics[height=0.3\linewidth]{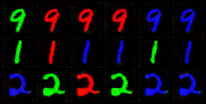}}
        \vspace{-10pt}
        \caption{\textbf{Experimental results on a stochastic transport map application}, i.e., MNIST-to-CMNIST translation.}
        \label{fig:m2cm}
    \end{minipage}
    \begin{minipage}{0.55\linewidth}
        \captionof{table}{
        \textbf{Image-to-Image translation benchmark} results compared to existing Neural (Entropic) OT models. $\dagger$ indicates the results conducted by ourselves. DSBM scores are taken from \citep{asbm, SB-flow}. 
        } \label{tab:main_result}
        \centering
        \scalebox{0.7}{
        \begin{tabular}{c c c c}
        \toprule
        Data & Model  &  FID ($\downarrow$) & LPIPS ($\downarrow$) \\
        \midrule 
        \multirow{4}{*}{Male-to-Female (64x64)} 
        &  NOT \citep{not} & 11.96 & - \\
        & OTM$^\dagger$ \citep{fanscalable} &6.42 & \textbf{0.16} \\
        & DIOTM$^\dagger$ \cite{diotm} & \textbf{4.48} & 0.20 \\
        & OTP (Ours) & 4.75 & 0.20 \\
        \midrule
        \multirow{4}{*}{Wild-to-Cat (64x64)} & DSBM \citep{dsbm} & 20$+$ & 0.59\\
        & OTM$^\dagger$ \citep{fanscalable} & 12.42 & 0.47
        \\
        & DIOTM$^\dagger$ \cite{diotm} & 10.72 & \textbf{0.45} \\ 
        & OTP (Ours) & \textbf{9.66} & 0.52 \\
        \midrule
        \multirow{5}{*}{Male-to-Female (128x128)} & DSBM \citep{dsbm}
        & 37.8 & 0.25\\
        & ASBM \citep{asbm} & 16.08 & - \\
        & OTM$^\dagger$ \citep{fanscalable} & 7.55  & \textbf{0.21} \\
        & DIOTM$^\dagger$ \cite{diotm} & 7.40  & 0.25\\
        & OTP (Ours) & \textbf{6.38} & 0.27 \\
    \bottomrule
    \end{tabular}}
    \end{minipage}
    \vspace{-13pt}
\end{figure*}

\vspace{-7pt}
\paragraph{Quantitative Comparison to Ground-truth}
We evaluate the numerical accuracy of our OTP, SNOT with deterministic generator (OTM \citep{otm}), and SNOT with stochastic generator (OTM-s, Eq. \ref{eq:stochastic_generator}), by comparing them to the closed-form ground-truth solutions. Here, we measure two metrics: the transport cost error $D_{cost} = | W^2_2 (\mu, \nu) - \int \Vert T_{\theta}(x) - x \Vert^2 d\mu(x) |$ and the target distribution error $D_{target} = W^2_2 (T_{\theta \#} \mu, \nu)$.
$D_{cost}$ assesses whether the model achieves the optimal transport cost, while $D_{target}$ measures how accurately the model generates the target distribution.
Both models are tested on two synthetic datasets, Perpendicular and One-to-many (Fig. \ref{fig:fail_case_model}), with generalized dimensions of $d \in \{ 2, 4, 16, 64, 256 \}$ (See Appendix \ref{appen:exp_syn} for dataset details.). 

Tab. \ref{tab:fail_case_quan} presents the quantitative results on the accuracy of the learned optimal transport plan $\pi_{\theta}$. Our OTP consistently achieves comparable or superior performance compared to both OTM and OTM-s across all metrics, particularly in high-dimensional settings. Note that these experimental results confirm the challenges of the existing SNOT framework in accurately recovering the target distributions, as discussed in Sec. \ref{sec:fail_det} and \ref{sec:fail_no_det}. Specifically, as shown in $D_{target}$, OTM and OTM-s models exhibit significantly larger target distribution errors in higher dimensions.

\subsection{Neural OT Evaluation on Unpaired Image-to-Image Translation Tasks} \label{sec:exp_image}
In this section, \textbf{we evaluate our model on the unpaired image-to-image translation task.} The image-to-image translation is one of the most widely used machine learning tasks in Neural OT models. The optimal transport map $T^{\star}$ (or plan $\pi^{\star}$) can be understood as a generator of target distributions, mapping an input $x$ to a \textit{similar} counterpart $y$ by minimizing the transport cost $c(x, y)$. This mapping can be deterministic ($y = T^{\star}(x)$) or stochastic ($y \sim \pi^{\star}(\cdot | x)$). 
Therefore, the optimal transport map can naturally serve as a model for unpaired image-to-image translation.

\vspace{-5pt}
\paragraph{MNIST-to-CMNIST}
First, we demonstrate that our OTP model can learn stochastic transport mappings at the image scale. Specifically, we test our model on the MNIST-to-CMNIST translation task (See Appendix \ref{appen:exp_image} for dataset details).
In this task, our Colored MNIST (CMNIST) dataset consists of three colored variations (Red, Green, and Blue) for each grayscale image from the MNIST dataset (Fig. \ref{fig:concept_m2cm}). Consequently, the desired OP plan should stochastically map each grayscale digit image to a colored digit image of the same digit type (Fig. \ref{fig:concept_m2cm}). 

Fig. \ref{fig:m2cm} illustrates the experimental results. Here, we introduced a stochastic generator to OTM (OTM-s) to provide the capacity to learn a stochastic transport map. However, OTM exhibited the mode collapse problem, transporting all grayscale images to blue-colored images. On the other hand, our OTP successfully learns the optimal transport plan $\pi^{\star}$, achieving a stochastic mapping to Red, Green, and Blue colors. This phenomenon is also observed in the quantitative metrics. Our model significantly outperforms OTM in FID score ($\downarrow$) (3.18 vs. 62.4) and archives a better score in LPIPS ($\downarrow$) (0.32 vs. 0.36).

\vspace{-5pt}
\paragraph{Image-to-Image Translation}
We assess our model on three image-to-image translation benchmarks:  \textit{Male-to-Female \citep{celeba}} ($64\times64$, $128\times128$) and \textit{Wild-to-Cat \citep{afhq}} ($64 \times 64$). For comparison, we include several OT models (\textit{NOT, OTM, and DIOTM}) and Entropic OT models (\textit{DSBM and ASBM}).

Tab. \ref{tab:main_result} presents the quantitative scores for the image-to-image translation tasks (See Appendix \ref{appen:addtional_qual} for qualitative examples). We adopted the FID \citep{fid} and LPIPS \citep{lpips} scores for quantitative evaluation. Note that these FID and LPIPS scores serve similar roles as $D_{cost}$ and $D_{target}$ in Sec \ref{sec:exp_syn}, respectively. 
Our primary evaluation metric is the FID score because it measures how well the translated images align with the target semantics.
As shown in Tab. \ref{tab:main_result}, our model demonstrates state-of-the-art FID scores and competitive LPIPS scores compared to existing (entropic) Neural OT models. Specifically, in the Male-to-Female (128$\times$128) task, our OTP model achieves a FID score of 6.38, outperforming the SNOT model (OTM), the other Neural OT model (DIOTM), and entropic OT models (DSBM and ASBM). Although OTM achieves a lower but comparable LPIPS score (0.21), its significantly worse FID score (7.55) suggests large target semantic errors. Thus, we prioritize FID as our primary metric.

\section{Conclusion}
In this paper, we provided the first theoretical analysis of the sufficient condition that prevents the spurious solution issue in Semi-dual Neural OT. Based on this analysis, we proposed our OTP model for learning both the OT Map and OT Plan, even when this sufficient condition is not satisfied. Our experiments demonstrated that OTP successfully recovers the correct OT Plan when existing models fail and achieves state-of-the-art performance in unpaired image-to-image translation. 
Our primary contribution is improving Neural OT frameworks by addressing their fundamental limitations, i.e., failing to recover the correct OT Map even with the ideal max-min solution. 
One limitation of our work is that our convergence theorem holds up to a subsequence (Thm. \ref{thm:convergence}). Nevertheless, in practice, our gradual training scheme (Alg 1) did not show any convergence issues. Also, our analysis provides a sufficient condition, rather than a necessary and sufficient one (Thm. \ref{thm:uniqueness}), leaving room for further refinement in understanding the exact conditions under which spurious solutions occur.

\section*{Acknowledgements}
JW was partially supported by the National Research Foundation of Korea(NRF) grant funded by the Korea government(MSIT) [RS-2024-00349646]. DK was partially supported by the National Research Foundation of Korea (NRF) grant funded by the Korea government (MSIT) (No. RS-2023-00252516 and No. RS-2024-00408003), the POSCO Science Fellowship of POSCO TJ Park Foundation, and the Korea Institute for Advanced Study. JM is supported by National Research Foundation of Korea (NRF) grant funded by the Korea government (MSIT) [RS-2024-00410661].

\section*{Impact Statement}
This paper presents work whose goal is to advance the field of Machine Learning. There are many potential societal consequences of our work, none of which we feel must be specifically highlighted here.

\nocite{langley00}

\bibliography{mybib}
\bibliographystyle{icml2025}

\newpage
\appendix
\onecolumn
\section{Proofs}

\subsection{A generalized version of Theorem \ref{thm:uniqueness}} \label{appen:proof1}

In this section, we introduce theorems in \citet{villani} and prove a generalized version of Thm. \ref{thm:uniqueness} (See Thm.~\ref{thm:generalized}). We believe that rediscovering and summarizing this generalized version in the machine learning literature will help point out other possible directions for developing algorithms to avoid spurious solutions.

To begin with, we specify the assumptions on the domain:

\begin{enumerate}[label=\textbf{A0.}]
    \item We consider $\mathcal{X}\subset M$, where $M$ is a smooth complete connected Riemannian manifold and $\mathcal{X}$ is a closed subset of $M$, with ${\rm dim} (\partial \mathcal{X}) \leq d - 1$. $\mathcal{Y}$ is an arbitrary Polish space.
\end{enumerate}

The following two lemmas, Lemma \ref{lemma1} and Lemma \ref{lemma2}, play a pivotal role in the proof of Thm. \ref{thm:uniqueness}. Furthermore, we introduce a generalized version of the theorem.

We begin with Lemma \ref{lemma1}, which establishes the existence of a Kantorovich potential and characterizes its optimality.
\begin{lemma}[Theorem 5.10 in \cite{villani}] \label{lemma1}
Let $(\mathcal{X}, \mu)$ and $(\mathcal{Y}, \nu)$ be two Polish probability spaces and let $c:\mathcal{X}\times \mathcal{Y}\rightarrow \mathbb{R}$ be a lower semi-continuous cost function such that is lower bounded. Suppose the optimal cost $C(\mu, \nu) := \inf_{\pi \in \Pi(\mu, \nu)} \int c d\pi$ is finite. Then,
\begin{equation} \label{eq:strong-dual}
    C(\mu, \nu) = \max_{V\in S_c} \left( \int_{\mathcal{X}} V^c(x) d\mu(x) + \int_{\mathcal{Y}} V(y) d\nu(y) \right).
\end{equation}
In other words, there exists a $c$-concave function $V$ that makes strong duality (Eq. \ref{eq:strong-dual}) satisfy. Moreover, for $c$-cyclically monotone set $\Gamma \subset \mathcal{X}\times \mathcal{Y}$, for any $\pi\in \Pi(\mu, \nu)$ and $c$-concave function $\psi$,
\begin{equation}
    \pi \text{ is optimal } \Leftrightarrow \ \pi(\Gamma) = 1, \qquad \psi \text{ is optimal } \Leftrightarrow \ \Gamma \subset \partial_c \psi.
\end{equation}
\end{lemma}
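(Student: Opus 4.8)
The plan is to establish Kantorovich duality in two parts, following the scheme of Theorem~5.10 in \cite{villani}: first the identity \eqref{eq:strong-dual} together with attainment of the maximum, and then the characterization of optimal plans and potentials through $c$-cyclical monotonicity.

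For the identity, the easy direction (``weak duality'') is immediate: for any $\pi \in \Pi(\mu,\nu)$ and any $V \in S_c$, the definition of $V^c$ gives $V^c(x) + V(y) \le c(x,y)$ pointwise, so integrating against $\pi$ and using that the marginals of $\pi$ are $\mu$ and $\nu$ yields $\int V^c\, d\mu + \int V\, d\nu \le \int c\, d\pi$; taking the supremum over $V$ and the infimum over $\pi$ gives $\sup_{V \in S_c}(\cdots) \le C(\mu,\nu)$. For the reverse inequality I would first treat a bounded continuous cost, where $\inf_{\pi \in \Pi(\mu,\nu)} \int c\, d\pi$ is a linear program over the weak-$\ast$ compact convex set $\Pi(\mu,\nu)$; a minimax argument (or Fenchel--Rockafellar duality, penalizing the two marginal constraints by continuous potentials) then gives no duality gap and produces an admissible pair of bounded continuous potentials, which may be replaced by a $c$-concave function $V$ and its $c$-transform $V^c$ without decreasing the dual value (using $\phi \le \psi^c$ and passing to the double transform). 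The passage from bounded continuous $c$ to a lower semicontinuous, lower bounded $c$ is then done by writing $c = \sup_n c_n$ with $c_n$ bounded continuous and nondecreasing, applying monotone convergence on the primal side and a tightness/stability argument on near-optimal plans so that the optimal values for $c_n$ increase to that for $c$, together with the corresponding limit on the dual side.

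Attainment of the maximum is the first genuinely delicate point. Starting from a maximizing sequence of $c$-concave potentials $V_n$, normalized by fixing $V_n$ at a base point (legitimate, since replacing $V$ by $V+t$ sends $V^c$ to $V^c-t$ and leaves the dual functional unchanged as $\mu,\nu$ are probability measures), I would use the equicontinuity of $c$-concave functions and their $c$-transforms inherited from the modulus of $c$ to extract a subsequence converging pointwise along a countable dense set, define the limit potential via a $c$-transform, and pass to the limit in the dual functional using Fatou's lemma on one side and integrable lower bounds $a \in L^1(\mu)$, $b \in L^1(\nu)$ with $c(x,y) \ge a(x)+b(y)$ on the other. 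The bookkeeping of these bounds and of the normalization is where care is needed in full generality.

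Finally, for the characterization I would invoke the $c$-cyclical monotonicity machinery. If $\pi$ is optimal, it is concentrated on a $c$-cyclically monotone set $\Gamma$: otherwise one could cyclically rearrange a positive-mass configuration to strictly decrease the transport cost, contradicting optimality, and making this rigorous via disintegration and measurable selection is the second delicate point. Conversely, given any $c$-cyclically monotone $\Gamma$, a Rockafellar-type construction (an infimum over telescoping chains anchored at a base point) produces a $c$-concave $\psi$ with $\Gamma \subset \partial_c \psi$, i.e.\ $c(x,y) = \psi^c(x) + \psi(y)$ on $\Gamma$; then any $\pi$ with $\pi(\Gamma) = 1$ satisfies $\int c\, d\pi = \int \psi^c\, d\mu + \int \psi\, d\nu \le C(\mu,\nu)$ by weak duality, forcing equality, so $\pi$ is optimal and $\psi$ is a dual maximizer. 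Combining the two directions --- fixing a dual maximizer $\psi$ from the first part and taking $\Gamma = \partial_c \psi$ --- yields the stated equivalences ``$\pi$ optimal $\Leftrightarrow \pi(\Gamma) = 1$'' and ``$\psi$ optimal $\Leftrightarrow \Gamma \subset \partial_c \psi$''. I expect the two attainment/concentration steps, rather than the algebra of $c$-transforms, to be the main obstacles; the complete argument is precisely Theorem~5.10 of \cite{villani}, which I would cite for the remaining measure-theoretic details.
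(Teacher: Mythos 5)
The paper gives no proof of this lemma at all---it is imported verbatim as Theorem~5.10 of \cite{villani}---and your sketch is a faithful outline of exactly that standard argument (weak duality, Fenchel--Rockafellar plus approximation for the identity, Rockafellar-type construction from a $c$-cyclically monotone set for attainment and the equivalences), ending with the same citation the paper relies on. This is the same approach; no discrepancy to report.
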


\paragraph{Assumptions on the Cost Function}
Throughout the discussion, we assume that the cost function $c(\cdot, \cdot)$ satisfies the following assumptions:

\begin{enumerate}[label=\textbf{A\arabic*.}, itemsep=0cm]
    \item $c:M\times \mathcal{Y}\rightarrow \mathbb{R}$ is a continuous cost function that is bounded below. 
    \item $c$ is superdifferentiable everywhere.
    \item $\nabla_x c(x,\cdot)$ is injective where defined.
    \item $c$ satisfies either the \textbf{SC} condition (see Definition \ref{def:sc}) or the $\mathbf{H_\infty}$ condition (see Definition \ref{def:h_infty}). 
\end{enumerate}
Note that Assumption \textbf{A4} is required to ensure the existence of a nonempty $c$-subdifferential, i.e., $|\partial_c \psi| \geq 1$. We now provide the definitions of \textbf{SC} and $\mathbf{H_\infty}$.

\begin{definition}[\textbf{SC}] \label{def:sc}
    We call the cost function $c:\mathcal{X}\times \mathcal{Y}\rightarrow \mathbb{R}$ is \textbf{SC} if $c(x,y)$ is locally semi-concave as a funtion of $x$, uniformly in $y$, i.e.
    \begin{equation}
        \exists M\in \mathbb{R} \quad \text{s.t.} \quad c(x,y) - M\Vert x\Vert^2 
 \ \ \text{is concave for all} \ \ y\in \mathcal{Y}.
    \end{equation}
    Here, \textbf{SC} stands for semi-concavity.
\end{definition}

\begin{definition}[$\mathbf{H_\infty}$] \label{def:h_infty}
    We say the cost function $c:\mathcal{X}\times \mathcal{Y}\rightarrow \mathbb{R}$ satisfies the condition $\mathbf{H_\infty}$ if the cost $c$ satisfies the following two conditions:
    \begin{enumerate}
        \item[\textbf{H1.}] For any $x$ and for any measurable set $S$ whose tangent space $T_x S$ is not contained in a half-space, there exists a finite collection of elements $z_1, \dots, z_k \in S$ and a small ball $B$ containing $x$, such that for any $y$ outside of a compact set,
        $$ \inf_{w\in B} c(w,y) \geq \inf_{1\leq j\leq k} c(z_j, y). $$
        \item[\textbf{H2.}] For any $x$ and any neighborhood $U$ of $x$, there exists a small ball $B$ containing $x$ such that 
        $$ \lim_{y\rightarrow \infty} \sup_{w\in B} \inf_{z\in U} [c(z,y) - c(w,y)] = - \infty. $$
    \end{enumerate}
\end{definition}

\begin{remark}
    Both \textbf{H1} and \textbf{H2} describe the behavior of the cost function $c(x,y)$ when $y\rightarrow \infty$. Consequently, these two conditions are automatically satisfied when $\mathcal{Y}$ or the support of $\mu$ is a compact set. 
\end{remark}

Next, we state and prove a generalized version of our main theorem, Theorem \ref{thm:uniqueness}. We start with the following lemma:

\begin{lemma}[Thm. 10.28 and Thm. 10.42 in \cite{villani}] \label{lemma2} 
Assume \textbf{A0}-\textbf{A4}. Let $\mu \in P(\mathcal{X})$ and $\nu\in P(\mathcal{Y})$ such that
\begin{enumerate}[label=\textbf{A\arabic*.}, start=5]
    \item The optimal cost $C(\mu, \nu):= \inf_{\pi \in \Pi(\mu, \nu)} \int_{\mathcal{X}\times\mathcal{Y}} c(x,y) d\pi(x,y)$ is finite;
    \item Any $c$-convex function is differentiable $\mu$-almost surely ($\mu$-a.s.) on its domain of $c$-subdifferentiability.
\end{enumerate}
Then, there exists a unique deterministic optimal coupling $\pi \in \Pi(\mu, \nu)$ in law. Moreover, there exists a $c$-concave function $\psi$ such that 
\begin{equation} \label{eq:gradient_optimality}
    \nabla_x c(x, y) - \nabla \psi(x) = 0, \quad \text{for } \pi\text{-almost surely}.
\end{equation}
In other words, the Monge map $T^\star$ exists, and satisfies $\nabla_x c(x,T^\star(x)) - \nabla \psi (x) = 0, \ \mu \text{-a.s..}$
Moreover, if $T:\mathcal{X}\rightarrow \mathcal{Y}$ satisfies 
\begin{equation}
    T(x) \in \{ y\in \mathcal{Y} : \nabla_x c(x,y) - \nabla \psi (x) = 0  \},
\end{equation}
then $T$ is the unique Monge map (in law).
\end{lemma}

We now state and prove the generalized version of Thm. \ref{thm:uniqueness}.
Then, we discuss how this generalized theorem reduces to Thm. \ref{thm:uniqueness} under the standard quadratic cost setting.

\begin{theorem}
\label{thm:generalized}
Assume \textbf{A0}-\textbf{A6}.
       \begin{enumerate}
        \item[(1)] 
        Then, there exists a unique OT Map $T^\star$ in (Eq. \ref{eq:ot_monge}) and a (possibly non-unique) Kantorovich potential $V^\star\in S_c$ in (Eq. \ref{eq:kantorovich-semi-dual}).
        \item[(2)]
        For the Kantorovich potential $V^\star \in S_c$, a solution of the minimization problem,
        \begin{equation} 
        T(x) \in \mathcal{D}_x := \arg\inf_{y\in\mathcal{Y}} \left\{ c(x,y) - V^\star(y) \right\}
            \vspace{-5pt}
        \end{equation}
        is uniquely determined $\mu$-a.s., i.e. $\mathcal{D}_x = \{ y_x\}$ for $\mu$-a.s $x \in \mathcal{X}$. In particular, a map $T:\mathcal{X}\rightarrow \mathcal{Y}$ given by $x \mapsto y_x \in \mathcal{D}_x$ is a unique OT Map $T^\star$ $\mu$-a.s..
    \end{enumerate}
\end{theorem}

\begin{proof}
From Lemma \ref{lemma1} and \ref{lemma2}, there exists a unique Monge map $T^\star$ (in law) and a Kantorovich potential $V^{\star}\in S_c$.
Let $\psi := (V^\star)^c$. Because $((V^\star)^c, V^\star)$ is a pair of Kantorovich potentials, Lemma \ref{lemma1} implies:
\begin{equation} \label{eq:c-subdifferential}
    T^\star(x) \in \partial_c (V^\star)^c (x) := \{ y\in \mathcal{Y} : (V^\star)^{cc}(y) = c(x,y) - (V^\star)^c (x) \},
\end{equation}
Note that $\psi := V^c$ is a $c$-concave, hence, by assumption \textbf{A6}, it is differentiable $\mu$-a.s.. For every $y\in \partial_c (V^\star)^c (x)$, differentiating both sides of the equation $(V^\star)^{cc}(y) = c(x,y) - (V^\star)^c (x)$ with respect to $x$ yields:
\begin{equation}
    0 = \nabla_x c(x, y) - \nabla (V^\star)^c(x) = \nabla_x c(x, y) - \nabla \psi(x).
\end{equation}
Therefore, the condition in Lemma \ref{lemma2} (i.e., Eq. \ref{eq:gradient_optimality}) is satisfied.
From the last statement of Lemma \ref{lemma2}, the $c$-subdifferential $\partial_c \psi(x)$ is unique (i.e., $|\partial_c \psi(x)| = 1$) for all $x\in \mathcal{X}\backslash Z$, where $Z$ denotes the set of points where $\psi$ is non-differentiable. Because $\psi$ is differentiable $\mu$-almost surely, $Z$ has a zero measure with respect to $\mu$. Therefore, the mapping $x \mapsto \partial_c (V^{\star})^c(x)$ is uniquely defined $\mu$-almost surely.

Now, consider:
\begin{equation} \label{eq:final_eq_for_proof}
    \inf_{y\in\mathcal{Y}} \left( c(x,y) - V^{\star}(y) \right) = (V^{\star})^c(x) = c(x,y) - (V^{\star})^{cc}(y) \quad \text{ for } \,\, y_{0} \in \partial_c (V^{\star})^c(x),
\end{equation}
where the first equality follows from the definition of $c$-transform and the second equality comes from Eq. \ref{eq:c-subdifferential}.
Since $V^\star$ is assumed to be $c$-concave, we have $(V^{\star})^{cc} = V^{\star}$. 
Therefore, the minimizer in 
\begin{equation}
    T(x) \in \arg\min_{y \in \mathcal{Y}} \left\{ c(x, y) - V^\star(y) \right\}
\end{equation}
is unique $\mu$-almost surely, due to the uniqueness of $\partial_c (V^{\star})^c(x)$. By Lemma \ref{lemma2}, this uniquely defined $T$ is a Monge map.
\end{proof}

\subsection{Proofs of Thm.~\ref{thm:uniqueness} and Thm.~\ref{thm:unique_potential}}

In this section, we prove Thm.~\ref{thm:uniqueness} by verifying that all the assumptions \textbf{A0}–\textbf{A6} are satisfied. Moreover, 
Thm \ref{thm:uniqueness} is a specific realization of Theorem \ref{thm:generalized}.

\begin{proof}[Proof of Thm.~\ref{thm:uniqueness}]
\textbf{Step 1. Check the basic assumptions:}
Since $\mathcal{X}=\mathcal{Y}$ are closures of connected opensets, it trivially satisfies the domain assumption (\textbf{A0}). Assumption \textbf{A6} is the assumption of the theorem.
Since $\mu$ and $\nu$ have finite second moments, the optimal transport cost is finite (\textbf{A5}). Furthermore, since $c$ is a quadratic cost, it is continuous, bounded below (\textbf{A1}), superdifferentiable (\textbf{A2}), and $\nabla_x c (x, \cdot)$ is injective for every $x\in\mathcal{X}$ (\textbf{A3}).
Trivially, $c$ is semi-concave with respect to $x$ (\textbf{A4}).

\textbf{Step 2. Realization of Assumption \textbf{A6}:} 
Assumption \textbf{A6} can be satisfied in various ways. For completeness, we refer to the theoretical results outlined in Remark 10.33 of \citet{villani}, which provide the following sufficient conditions for Assumption \textbf{A6}:
\begin{enumerate}
    \item[\textbf{R1.}] $c$ is Lipschitz on $\mathcal{X}\times \mathcal{Y}$ and $\mu$ is absolutely continuous.
    \item[\textbf{R2.}] $c$ is locally Lipschitz, $\mu, \nu$ are compactly supported and $\mu$ is absolutely continuous.
    \item[\textbf{R3.}] $c$ is locally semi-concave and satisfies the $\mathbf{H_\infty}$ condition, and $\mu$ does not give mass to to sets of Hausdorff dimension at most $d-1$.
\end{enumerate}
In our case, the cost function is quadratic, and therefore hence it is semi-concave. Moreover, as discussed in Step 2, the quadratic cost satisfies the $\mathbf{H_{\infty}}$ condition. Finally, our assumption that $\mu$ does not assign positive mass to sets of Hausdorff dimension at most $d-1$ ensures that condition \textbf{R3} is satisfied. Thus, all assumptions required by Thm. \ref{thm:generalized} are met under the conditions of Thm. \ref{thm:uniqueness}. Therefore, the set $\mathcal{D}_x$ is a singleton $\mu$-almost surely.

\end{proof}

\paragraph{Other Realization of the Cost Function} Thm. \ref{thm:generalized} is formulated in the general formulation with respect to the cost function. Thus, our theorem can be applied to various cost functions other than the standard quadratic cost. In this paragraph, we present several alternative realizations of cost functions that satisfy the required assumptions. We assume $\mathcal{X}, \mathcal{Y} \subset M = \mathbb{R}^d$. 

\begin{enumerate}[label=\textbf{R\arabic*.}, start=4]
    \item \textbf{A1}, \textbf{A2} and \textbf{A3} are satisfied when $c$ is $C^1$ strictly convex function.
    \item On top of \textbf{R4}, if $c$ also has bounded Hessian and $\mu$ does not assign mass to sets of Hausdorff dimension at most $d-1$, then $c$ is \textbf{SC}, thus satisfies \textbf{A4}. Moreover, by Example 10.35 in \citet{villani}, this setting also satisfies \textbf{A6}.
    \item Suppose $c(x,y) = h(x-y)$ where $h:\mathbb{R}^d\rightarrow \mathbb{R}$ is a convex, superlinear, strictly increasing with respect to $|x-y|$, and radially symmetric function. Then, $c$ satisfies the $\mathbf{H_\infty}$ condition, hence satisfies \textbf{A4} (See Example 10.19-10.21 in \citet{villani}). 
    \item \textbf{R6} combined with \textbf{R3} and \textbf{R4}, all the assumptions are satisfied if: (i) $c(x,y) = h(x-y)$ is a $C^1$ strictly convex and locally semi-concave function where (ii) $h$ is a radially symmetric and strictly increasing function with respect to $|x-y|$, (iii) $\mu$ does not assign mass to sets of Hausdorff dimension at most $d-1$.
    \item If the domains $\mathcal{X}$ and $\mathcal{Y}$ are both compact, then \textbf{A4} is automatically satisfied by the definition of $\mathbf{H_\infty}$.    
    Thus, combining \textbf{R2} and \textbf{R4}, i.e. (i) $\mathcal{X}$ and $\mathcal{Y}$ are compact, (ii) $\mu$ is absolutely continuous, and (iii) $c$ is $C^1$ strictly convex function, then all the assumptions are satisfied.
\end{enumerate}

In summary, the following proposition outlines various conditions that ensure all assumptions are met:
\begin{proposition}
    Let $\mathcal{X}, \mathcal{Y} \subset M = \mathbb{R}^d$ satisfy Assumption \textbf{A0}. Suppose that any of the following conditions hold:
    \begin{enumerate}
        \item $c$ is $C^1$ strictly convex with bounded Hessian. Moreover, $\mu$ does not assign mass to sets of Hausdorff dimension at most $d-1$;
        \item $\mathcal{X}$ and $\mathcal{Y}$ are compact, $\mu$ is absolutely continuous, and $c$ is $C^1$ a strictly convex function;
        \item $c(x,y) = h(x-y)$ is a $C^1$ strictly convex and locally semi-concave function, where $h$ is a radially symmetric and strictly increasing function with respect to $\Vert x-y\Vert$, and $\mu$ does not charge sets of dimension $d-1$.
    \end{enumerate}
    Then, all assumptions (\textbf{A1}–\textbf{A6}) in Theorem \ref{thm:generalized} are satisfied.
\end{proposition}


We emphasize that extending this analysis to Polish spaces $(\mathcal{X}, \mu)$ and $(\mathcal{Y}, \nu)$, with cost functions $c : \mathcal{X} \times \mathcal{Y} \rightarrow \mathbb{R}$ satisfying the above conditions, presents a promising direction for future research and offers significant potential for theoretical advancement.

\paragraph{Proof of Theorem \ref{thm:unique_potential}}
Thm. \ref{thm:unique_potential} is can be directly proved by leveraging Cor. 4 in \citet{staudt2022c}:
\begin{proof}[Proof of Thm.~\ref{thm:unique_potential}]
    Let $K \subset \mathcal{X}$ be the compact set. Since $\mathcal{Y}$ is also compact, there exists $R>0$ such that $K, \mathcal{Y} \subset B_R (0)$. Then, for 
    \begin{equation}
         \Vert c(x, y_1) - c(x, y_2) \Vert \leq | x \cdot (y_1 - y_2) | + \frac{1}{2} | \Vert y_1\Vert^2 - \Vert y_2\Vert^2 | \leq R \Vert y_1 - y_2 \Vert + \frac{1}{2} 2R \Vert y_1 - y_2 \Vert = 2R \Vert y_1 - y_2 \Vert.
    \end{equation}
    Thus, by using Thm. 5 of \citet{staudt2022c}, our assumptions satisfy the conditions of Corollary 4 in \citet{staudt2022c}.
    Therefore, there exists a unique $c$-concave Kantorovich potential.
\end{proof}

Note that Corollary \ref{cor:unique_saddle} can be easily proved by simply combining
Thm.s \ref{thm:uniqueness} and \ref{thm:unique_potential}.

\section{Non-convergence of Stochastic Parametrization in Semi-dual Neural OT} \label{appen:non_conv_stocas_param} 
In this section, we further elaborate Prop. \ref{prop:stoc}. Let $\pi^\dagger(y | x)$ denote the conditional distribution induced by $T_\theta(x, \cdot):(Z, \mathcal{N}(0,I))\rightarrow \mathcal{Y}$ where
\begin{equation} \label{eq:stochastic_generator_appen}
    T_\theta (x,z) \in \arg\min_{y\in \mathcal{Y}} \{ c(x,y) - V^\star(y) \}, \quad (x, z) \sim \mu\times \mathcal{N}(0, I)\text{-a.s.}.
\end{equation}
Since the subdifferential of the $c$-transform of $V^{\star}$ is defined as 
\begin{equation}
    \partial_c (V^\star)^c(x) := \{y\in \mathcal{Y}: V^c (x) = c(x,y) - V^\star(y) \} = \arg\min_{y\in \mathcal{Y}} \{ c(x,y) - V^\star(y) \},
\end{equation}
Eq. \ref{eq:stochastic_generator_appen} can be rewritten as follows:
\begin{equation} \label{eq:stochastic-formal}
    \pi^\dagger (\partial_c (V^\star)^c (x) \,\,| \,\, x) = 1, \quad \mu\text{-a.s..} 
 \quad \Leftrightarrow \quad \pi^\dagger \in P(\mathcal{X}\times\mathcal{Y}) \,\, \hbox{ satisfies } \,\, \pi^\dagger \left( \partial_c V^\star \right) = 1.
\end{equation}
Here, $V^\star \in S_c$ represents the optimal potential and $\partial_c V^\star := \{(x,y)\in \mathcal{X}\times \mathcal{Y} : (V^\star)^c (x) + V(y) = c(x,y)\}$.
In summary, the stochastic map optimization problem in Eq. \ref{eq:stochastic_generator_appen} is equivalent to finding the joint distribution $\pi^\dagger \in P(\mathcal{X}\times \mathcal{Y})$ that satisfies $\pi^\dagger (\partial_c V^\star) = 1$.

However, in general, this condition $\pi^\dagger (\partial_c V^\star) = 1$ does not guarantee that $\pi$ is the optimal transport plan for the Kantorovich's problem (Eq. \ref{eq:Kantorovich}). As discussed in Lemma \ref{lemma1}, under the mild assumptions on the cost function $c$, $\pi^\dagger$ is optimal if and only if $\pi^\dagger (\Gamma) = 1$ for $c$-cyclic monotone $\Gamma$. Since $\Gamma \subset \partial_c V^\star$, we can say that one of the solution $\pi^\dagger$ is the optimal transport plan, however, the converse may not satisfy (See failure cases in Sec. \ref{sec:fail_no_det}). Additionally, if $\partial_c (V^\star)^c (x)$ is unique $\mu$-almost surely, then there is a unique (in law) deterministic optimal coupling of $(\mu, \nu)$ (See Thm. 5.30 in \citet{villani}).
The discussion above can be summarized as follows:
\begin{proposition}[Formal]
\label{prop:stoc_appen}
    Suppose the assumptions of Lemma \ref{lemma1} hold. Let $V^\star\in S_c$ be the Kantorovich potential. For $\pi^\dagger \in P(\mathcal{X}\times\mathcal{Y})$, the condition $\pi^\dagger (\partial_c V^\star) = 1$ does not imply that $\pi^\dagger$ is an optimal transport plan.
\end{proposition}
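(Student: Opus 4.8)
The plan is to prove this negative statement by producing an explicit counterexample, reusing the One-to-Many configuration of Example~3 in Section~\ref{sec:fail_no_det}. There $\mu$ is uniform on $A=[0,1]\times\{0\}$, $\nu$ is uniform on $B=\bigl([0,1]\times\{1\}\bigr)\cup\bigl([0,1]\times\{-1\}\bigr)$, and $c(x,y)=\tfrac12\|x-y\|^2$; these data satisfy the hypotheses of Lemma~\ref{lemma1}, and it was already checked there that $V^\star(y)=\tfrac12 y_2^2$ lies in $S_c$ and is a Kantorovich potential (the common value of $C(\mu,\nu)$ and $S(\mu,\nu)$ being $\tfrac12$).

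The first step is to identify $\partial_c V^\star$ on the part of $\mathcal{X}\times\mathcal{Y}$ that matters. By Eq.~\ref{eq:example1_eq2} we have $(V^\star)^c(x)=0$ for every $x=(x_1,0)\in A$, so the section $\partial_c(V^\star)^c(x)=\{y=(y_1,y_2):V^\star(y)=c(x,y)-(V^\star)^c(x)\}$ collapses to $\{(y_1,y_2):(x_1-y_1)^2=0\}$, i.e.\ the vertical line $\{x_1\}\times\mathbb{R}$; in particular the graphs of $T_1(x)=(x_1,1)$ and $T_2(x)=(x_1,-1)$ are both contained in $\partial_c V^\star$.

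The second step is to take $\pi^\dagger:=(\mathrm{Id},T_1)_\#\mu$, which is exactly the coupling realized by the $z$-independent stochastic parametrization $T_\theta(x,z):=T_1(x)$ of Eq.~\ref{eq:stochastic_generator_appen}. Since $\mathrm{supp}(\pi^\dagger)\subset\partial_c V^\star$ by the previous step, $\pi^\dagger(\partial_c V^\star)=1$. On the other hand the second marginal of $\pi^\dagger$ is $(T_1)_\#\mu$, the uniform law on $[0,1]\times\{1\}$, which is not $\nu$; hence $\pi^\dagger\notin\Pi(\mu,\nu)$ and a fortiori $\pi^\dagger$ is not an optimal transport plan between $\mu$ and $\nu$. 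This establishes the proposition.

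The step I expect to carry the real content is making explicit why the marginal constraint is the only thing that can break: for every $V^\star\in S_c$ the set $\partial_c V^\star$ is itself $c$-cyclically monotone (a one-line estimate from $(V^\star)^c(x)\le c(x,y')-V^\star(y')$ applied cyclically), so by Lemma~\ref{lemma1} any $\pi\in\Pi(\mu,\nu)$ with $\pi(\partial_c V^\star)=1$ would already be optimal. Thus a valid counterexample must exhibit a coupling whose pushforward of $\mu$ misses $\nu$, which in turn requires a situation where $\partial_c(V^\star)^c(x)$ is non-unique on a $\mu$-positive set — precisely the One-to-Many example above — rather than any soft argument. The bookkeeping to watch is that $T_\theta(x,z):=T_1(x)$ genuinely satisfies Eq.~\ref{eq:stochastic_generator_appen} and that $\pi^\dagger$ has first marginal $\mu$, so that it is a legitimate output of the stochastic SNOT scheme and not merely an abstract measure concentrated on $\partial_c V^\star$.
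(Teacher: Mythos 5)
Your proof is correct and takes essentially the same route as the paper: the appendix justifies the proposition by pointing to the One-to-Many failure case of Section~\ref{sec:fail_no_det}, which is exactly the counterexample you make explicit via $\pi^\dagger=(\mathrm{Id},T_1)_\#\mu$ concentrated on $\partial_c V^\star$ but with second marginal $\neq\nu$. Your closing observation that $\partial_c V^\star$ is itself $c$-cyclically monotone --- so that for a measure with the correct marginals the implication would in fact hold, and the failure can only come from violating the marginal constraint --- is a sharper formulation than the paper's remark that ``the converse may not satisfy,'' but it does not change the substance of the argument.
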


\section{Convergence of OTP} \label{appen:conv_result_from_oldandnew}
We present the convergence theorem for the optimal transport plan from \citet{villani}. Thm. \ref{thm:convergence} is a direct consequence of Thm. \ref{thm:uniqueness} and Thm. \ref{thm:convergence_appen}.
\begin{theorem}[\citet{villani}, Thm. 5.20] \label{thm:convergence_appen}
    Let $c \ge 0$ be a real-valued, continuous, and lower-bounded cost function. Consider a sequence of continuous cost functions $\{c_k\}_{k\in \mathbb{N}}$ that uniformly converges to $c$.
    Let $\{\mu_k\}_{k\in \mathbb{N}}$ and $\{\nu_k\}_{k\in \mathbb{N}}$ be sequences of probability measures that weakly converge to $\mu$ and $\nu$, respectively. For each $k$, let $\pi^{\star}_k$ be an optimal transport plan between $\mu_k$ and $\nu_k$. If $\int c_k d\pi^{\star}_k <\infty$, then, up to the extraction of a subsequence, $\pi^{\star}_k$ converges weakly to some $c$-cyclically monotone transport plan $\pi^{\star} \in \Pi(\mu, \nu)$. Moreover, if
    \begin{equation}
        \lim\inf_{k\in\mathbb{N}} \int c_k d\pi^{\star}_k < \infty,
    \end{equation}
    then the optimal transport cost between $\mu$ and $\nu$ is finite and $\pi^{\star}$ is an optimal transport plan.
\end{theorem}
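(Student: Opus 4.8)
The plan is to derive this statement by feeding the right objects into the stability theorem for optimal couplings (Theorem~\ref{thm:convergence_appen}, i.e. Theorem 5.20 of \citet{villani}), with a \emph{constant} cost sequence $c_k\equiv c$, and using Theorem~\ref{thm:uniqueness} to identify $\pi_k^\star$ as the optimal plan for the smoothed problem. (I read the hypothesis as: $T_k^\star$ is the OT map from $\mu_{\epsilon_k}$ to $\nu$ — the minimizer of the SNOT objective $\mathcal L^k_{V_\phi,T_\theta}$ in Eq.~\ref{eq:saddle_epsilon} — since otherwise $(Id,T_k^\star)_\#\mu_{\epsilon_k}$ would be a coupling of $\mu_{\epsilon_k}$ and $\mu$, which cannot converge to a plan in $\Pi(\mu,\nu)$.) First I would note that each $\mu_{\epsilon_k}$, being absolutely continuous, gives zero mass to every measurable set of Hausdorff dimension $\le d-1$; combined with $\mu_{\epsilon_k},\nu\in\mathcal P_2$ and the quadratic cost, Theorem~\ref{thm:uniqueness} applies with source $\mu_{\epsilon_k}$ and target $\nu$. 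Hence a unique OT map $T_k^\star$ from $\mu_{\epsilon_k}$ to $\nu$ exists, and $\pi_k^\star:=(Id,T_k^\star)_\#\mu_{\epsilon_k}$ is an optimal transport plan in $\Pi(\mu_{\epsilon_k},\nu)$ with $\int c\,d\pi_k^\star = C(\mu_{\epsilon_k},\nu)<\infty$ (both marginals have finite second moment).

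Next I would invoke Theorem~\ref{thm:convergence_appen}. Take $c_k\equiv c$ (nonnegative, continuous, bounded below, trivially converging uniformly to $c$), $\mu_k:=\mu_{\epsilon_k}\rightharpoonup\mu$ by hypothesis, and $\nu_k\equiv\nu\rightharpoonup\nu$. Since $\int c\,d\pi_k^\star<\infty$ for every $k$, the theorem produces a subsequence along which $\pi_k^\star\rightharpoonup\pi^\star$ for some $c$-cyclically monotone $\pi^\star\in\Pi(\mu,\nu)$; the tightness of $\{\pi_k^\star\}$ needed to extract this subsequence is automatic, since the marginals $\{\mu_{\epsilon_k}\}$ are tight (weakly convergent) and $\nu$ is fixed. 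To upgrade $\pi^\star$ from merely $c$-cyclically monotone to genuinely optimal, the second part of Theorem~\ref{thm:convergence_appen} requires $\liminf_k\int c\,d\pi_k^\star<\infty$. Here $\int c\,d\pi_k^\star$ equals $C(\mu_{\epsilon_k},\nu)$, which is $\alpha W_2^2(\mu_{\epsilon_k},\nu)$ up to the constant in the cost, and by the triangle inequality $W_2(\mu_{\epsilon_k},\nu)\le W_2(\mu_{\epsilon_k},\mu)+W_2(\mu,\nu)$. For either smoothing one has the explicit bound $W_2(\mu_{\epsilon_k},\mu)\to 0$ as $\epsilon_k\searrow 0$ — e.g. for $\mu_{\epsilon_k}=\mu*\mathcal N(0,\epsilon_k I)$, couple $X\sim\mu$ with $X+\sqrt{\epsilon_k}Z$ to get $W_2^2(\mu_{\epsilon_k},\mu)\le\epsilon_k d$, and similarly for the variance-preserving case — so $\sup_k\int c\,d\pi_k^\star<\infty$. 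This gives that $\pi^\star$ is an optimal transport plan between $\mu$ and $\nu$, proving the first assertion.

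For the ``consequently'' clause it then suffices to check that the two proposed smoothings satisfy the hypotheses of the statement just proven. Both $\mu_{\epsilon_k}=\mu*\mathcal N(0,\epsilon_k I)$ and $\mu_{\epsilon_k}=(\sqrt{1-\epsilon_k}\,Id)_\#\mu*\mathcal N(0,\epsilon_k I)$ are absolutely continuous with respect to Lebesgue measure (convolution with a nondegenerate Gaussian), and the $W_2$-estimate above gives $W_2(\mu_{\epsilon_k},\mu)\to0$, which implies $\mu_{\epsilon_k}\rightharpoonup\mu$. Hence the already-established conclusion applies verbatim to $\pi_k^\star$ from the OTP model.

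The main obstacle is not depth but making sure the correct objects are supplied to the black-box stability theorem. Concretely: (a) recognizing that Theorem~\ref{thm:uniqueness}, applied to the smoothed source, is exactly what turns the SNOT minimizer $T_k^\star$ into the (unique, deterministic) optimal plan $\pi_k^\star$ of $\Pi(\mu_{\epsilon_k},\nu)$ — without this, Theorem~\ref{thm:convergence_appen} would not see optimal plans; and (b) securing the uniform cost bound $\sup_k W_2^2(\mu_{\epsilon_k},\nu)<\infty$, which is what promotes the $c$-cyclically monotone limit to an optimal plan rather than just an admissible one. Both reduce to the quantitative estimate $W_2(\mu_{\epsilon_k},\mu)\to0$ available for Gaussian-type smoothings, so no genuinely hard step remains.
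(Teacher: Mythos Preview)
There is a mismatch between the target statement and what you actually argue. The statement labeled Theorem~\ref{thm:convergence_appen} is Villani's stability theorem (Thm.~5.20), which the paper \emph{cites} without proof; the paper never attempts to prove it. Your proposal, however, invokes Theorem~\ref{thm:convergence_appen} itself as a black box in the very first sentence, so it cannot be a proof of that theorem. Everything you wrote---the smoothed sources $\mu_{\epsilon_k}$, the maps $T_k^\star$, the OTP model, the ``consequently'' clause---belongs to Theorem~\ref{thm:convergence} (restated in the appendix as Corollary~\ref{thm:convergence}), not to the displayed statement.

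Read as a proof of Theorem~\ref{thm:convergence}, your argument is correct and follows exactly the paper's route: apply Theorem~\ref{thm:uniqueness} at each noise level to certify that $\pi_k^\star=(Id,T_k^\star)_\#\mu_{\epsilon_k}$ is the optimal plan in $\Pi(\mu_{\epsilon_k},\nu)$, then feed the sequence into Theorem~\ref{thm:convergence_appen} with $c_k\equiv c$ and $\nu_k\equiv\nu$. Your write-up is in fact more careful than the paper's two-line proof: you explicitly verify the $\liminf_k\int c\,d\pi_k^\star<\infty$ hypothesis via the coupling bound $W_2^2(\mu_{\epsilon_k},\mu)\le\epsilon_k d$ and the triangle inequality, whereas the paper leaves this implicit. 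So the mathematics is sound; only the identification of the target is off.
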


\begin{corollary}[Corollary \ref{thm:convergence}] 
    Consider a sequence absolutely continuous probability measures $\{\mu_{\epsilon_k}\}_{k\in \mathbb{N}}$ such that $\{\mu_{\epsilon_k}\}$ weakly converges to $\mu$. Then, up to the extraction of a subsequence, our OTP model, utilizing $\{\mu_{\epsilon_k}\}_{k\in \mathbb{N}}$, weakly converges to the optimal transport plan $\pi^{\star}$ between $\mu$ and $\nu$.
\end{corollary}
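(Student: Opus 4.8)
The plan is to derive the statement from the general stability theorem for optimal transport plans, Thm.~\ref{thm:convergence_appen} (Villani, Thm.~5.20), combined with the uniqueness result of Thm.~\ref{thm:uniqueness}. First I would set up notation: for each noise level $\epsilon_k$ the smoothed measure $\mu_{\epsilon_k}$ is absolutely continuous, so in particular it does not give mass to sets of Hausdorff dimension $\le d-1$; hence Thm.~\ref{thm:uniqueness}, applied with source $\mu_{\epsilon_k}$ and target $\nu$, gives a unique OT map $T_k^\star$ from $\mu_{\epsilon_k}$ to $\nu$ in law, the $T_\theta$-parametrization at level $\epsilon_k$ is determined $\mu_{\epsilon_k}$-a.s., and the (ideal) saddle point of the OTP objective $\mathcal{L}^k_{V_\phi,T_\theta}$ coincides with $\pi_k^\star := (Id, T_k^\star)_{\#}\mu_{\epsilon_k}$, the unique optimal transport plan between $\mu_{\epsilon_k}$ and $\nu$. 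It therefore suffices to show that $\pi_k^\star \rightharpoonup \pi^\star$ along a subsequence, with $\pi^\star$ an optimal transport plan between $\mu$ and $\nu$.

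Next I would invoke Thm.~\ref{thm:convergence_appen} with the constant cost sequence $c_k \equiv c$, $c(x,y)=\alpha\Vert x-y\Vert^2$ (continuous, bounded below, and trivially uniformly convergent to $c$), the source sequence $\mu_k = \mu_{\epsilon_k} \rightharpoonup \mu$ given by hypothesis, and the constant target sequence $\nu_k \equiv \nu \rightharpoonup \nu$. The only remaining hypothesis is finiteness of $\int c\,d\pi_k^\star$: since $\mu_{\epsilon_k},\nu\in\mathcal{P}_2$ we have $\int c\,d\pi_k^\star = \alpha\, W_2^2(\mu_{\epsilon_k},\nu) < \infty$, and because $W_2(\mu_{\epsilon_k},\mu)\to 0$ for both smoothing schemes (indeed $W_2^2(\mu*\mathcal{N}(0,\epsilon_k I),\mu)\le \epsilon_k d$, with an analogous bound in the variance-preserving case), the costs $\alpha\, W_2^2(\mu_{\epsilon_k},\nu)$ stay bounded and converge to $\alpha\, W_2^2(\mu,\nu)<\infty$; in particular $\liminf_k \int c\,d\pi_k^\star < \infty$, so the "moreover" clause of Thm.~\ref{thm:convergence_appen} applies. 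It yields, up to extraction of a subsequence, $\pi_k^\star \rightharpoonup \pi^\star \in \Pi(\mu,\nu)$ with $\pi^\star$ $c$-cyclically monotone and optimal between $\mu$ and $\nu$ (and, under the extra hypotheses of Cor.~\ref{cor:unique_saddle}, equal to the unique $\pi^\star$). For the closing "consequently" sentence I would simply record, as already noted in Sec.~\ref{sec:method}, that both the Gaussian convolution and the variance-preserving convolution yield absolutely continuous $\mu_{\epsilon_k}$ with $\mu_{\epsilon_k}\rightharpoonup\mu$ as $\epsilon_k\searrow 0$, so the conclusion transfers verbatim to the OTP model.

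The delicate point is the passage from a weak subsequential limit of the plans $\pi_k^\star$ to the statement that this limit is an \emph{optimal} plan between $\mu$ and $\nu$: this requires the uniform cost bound $\liminf_k \int c\,d\pi_k^\star < \infty$, which in turn needs $\mu_{\epsilon_k}\to\mu$ in a way that controls second moments. This is automatic for the two explicit convolutions; in the fully general statement one should either assume $\mu_{\epsilon_k}\in\mathcal{P}_2$ with uniformly bounded second moments (equivalently $W_2(\mu_{\epsilon_k},\mu)\to 0$), or settle for the weaker conclusion that the subsequential limit is merely $c$-cyclically monotone. I expect the main thing to get right is isolating and cleanly stating this moment hypothesis; the rest is a direct application of Thm.~\ref{thm:uniqueness} and Thm.~\ref{thm:convergence_appen}.
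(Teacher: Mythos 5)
Your proposal follows essentially the same route as the paper: apply Thm.~\ref{thm:uniqueness} to the absolutely continuous $\mu_{\epsilon_k}$ to identify the OTP solution at each noise level with the unique optimal plan $\pi_k^\star=(Id,T_k^\star)_{\#}\mu_{\epsilon_k}$, then invoke the stability theorem (Thm.~\ref{thm:convergence_appen}) to pass to a weakly convergent subsequence whose limit is optimal between $\mu$ and $\nu$. Your explicit verification of the finiteness hypothesis $\liminf_k\int c\,d\pi_k^\star<\infty$ via the second-moment bound $W_2^2(\mu*\mathcal{N}(0,\epsilon_k I),\mu)\le \epsilon_k d$ is a detail the paper's proof leaves implicit, and it is correctly handled.
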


\begin{proof}[Proof of Corollary \ref{thm:convergence}]
The absolutely continuous measure $\mu_{\epsilon_k}$ does not assign positive mass to measurable sets of Hausdorff dimension at most $d-1$. Therefore, by Thm. \ref{thm:uniqueness}, our OTP model for noise level $\epsilon_k$ (Eq. \ref{eq:saddle_epsilon}) correctly recovers the optimal transport plan $\pi_{k}^{\star} = (Id, T^{\star}_{k})_{\#} \mu_{\epsilon_k}$. Then, by Thm. \ref{thm:convergence_appen}, $\pi_{k}^{\star}$ weakly converges to the optimal transport plan $\pi^{\star}$ between $\mu$ and $\nu$.
\end{proof}

\section{Related Works} \label{sec:related_works}

\subsection{Spurious Solution Problem} \label{sec:spurious_sol}
In this section, we overview previous attempts to address the spurious solution problem in the Semi-dual Neural OT approaches. The spurious solution problem refers to the problem where the solution of the max-min learning objective $\mathcal{L}_{V_{\phi}, T_{\theta}}$ of Semi-dual Neural OT fails to fully characterize the correct optimal transport map. \citet{otm, fanscalable} first identified this issue. They proved that while true optimal potential and transport map $(V^{\star}, T^{\star})$ are the solution to the max-min problem $\mathcal{L}_{V_{\phi}, T_{\theta}}$ (Eq. \ref{eq:otm}), the reverse does not hold. Furthermore, \citet{fanTMLR} proved that, when $\mu$ is atomless, the saddle point solution $(V_{sad}^{\dagger}, T_{sad}^{\dagger})$ (Eq. \ref{eq:saddle_def}) of $\mathcal{L}_{V_{\phi}, T_{\theta}}$ recovers the optimal transport map:
\begin{equation} \label{eq:saddle_def}
        V_{sad}^{\dagger} \in \arg\max_{V} \mathcal{L} (V, T_{sad}^{\dagger}), \,\,
        T_{sad}^{\dagger} \in \arg\min_{T} \mathcal{L} (V_{sad}^{\dagger}, T).    
\end{equation} 
However, this does not hold for the max-min solution of $\mathcal{L}_{V_{\phi}, T_{\theta}}$. In this paper, we establish the sufficient condition under which the max-min solution recovers the optimal transport map (Thm. \ref{thm:uniqueness}). Furthermore, we suggest a method for learning the optimal transport plan $\pi^{\star}$, which is applicable even when the optimal transport map $T^{\star}$ does not exist (Sec \ref{sec:method}).

For the weak OT problem, \citet{not, knot} provided a theoretical analysis for spurious solutions in semi-dual approaches. Note that \textit{these analyses were conducted for the $\gamma$-weak quadratic cost with $\gamma > 0$ and, therefore, do not cover the standard OT problem.} Additionally, \citet{knot} proposed an alternative approach that modifies the cost function by introducing a positive definite symmetric kernel. While this approach addresses the spurious solution issue in the weak OT problem, it solves an inherently different problem due to the modified cost function. 
In contrast, our work is \textbf{the first attempt to analyze the conditions under which spurious solutions occur in the standard OT problem} (Eq. \ref{eq:Kantorovich}). 

\subsection{Neural Optimal Transport Models}
For completeness, we provide an overview of various approaches for Neural OT solvers. First of all, several works proposed methods based on the semi-dual formulation of the OT problem \cite{otm, fanscalable, uotm, otmICNN, fanTMLR}. The spurious solution problem occurring in these formulations is the main focus of this paper. 

Moreover, there are dynamic Neural OT models, such as bridge matching \citep{dsbm, alpha_dsbm, rectifiedFlow} and flow matching \citep{tong2024improving, pmlr-v238-tong24a, lipman2022flow}. These models learn continuous dynamics that transport the source distribution to the target distribution. Particularly, the bridge matching methods learn stochastic transport between two distributions based on the Entropic Optimal Transport (EOT) problem, i.e., the OT problem with an entropic regularizer. In contrast, our OTP model learns stochastic OT plans derived from the standard (non-entropic) OT problem, making the target formulation fundamentally different.

Lastly, there are recent Neural OT models based on non-minimax objective functions \citep{mongegap, progressive_eot}. \citet{mongegap} introduces a regularizer known as the Monge gap to learn the OT map. \citet{progressive_eot} proposes a gradual learning scheme by solving a sequence of EOT problems with decreasing entropic regularization. Although this gradual approach is conceptually related to our approach, \citet{progressive_eot} solves the discrete OT problem, i.e., the OT problem between empirical distributions over finite samples. In contrast, our OTP model is designed for the continuous OT problem between two continuous distributions, where training data consists of i.i.d. samples drawn from these distributions.

\section{Implementation Details} \label{appen:implementation_details}

\subsection{Synthetic Data Experiments}
\label{appen:exp_syn}

In this section, we explain the implementation details for the synthetic data experiments (Sec. \ref{sec:exp_syn}), including dataset description, model architecture, and training hyperparameters.

\paragraph{Dataset Description}
Throughout this paragraph, let $x, y\in \mathbb{R}^d$, and $n = d/2$. Then, let $x = (x_1, x_2)$ and $y = (y_1, y_2)$, where $x_1, x_2, y_1, y_2 \in \mathbb{R}^n$. Moreover, let $e_1 = (1, 0, \dots, 0)\in \mathbb{R}^n$.
\begin{itemize}
    \item \textbf{Perpendicular}: We generate $x\sim \mu$ as follows: $x_1 \sim U\left( [-1, 1]^n \right)$, and $x_2 \equiv 0$. Similarly, we sample $y\sim \nu$ by $y_1 \equiv 0$ and $y_2 \sim U\left( [-1,1]^n \right)$.
    \item \textbf{Horizontal}: We generate $x\sim \mu$ as follows: $x_1 \sim U\left( [-1, 1]^n \right)$, and $x_2 \equiv 0$. Similarly, we sample $y\sim \nu$ by $y_1 \sim U\left( [-1,1]^n \right)$ and $y_2 = e_1$.
    \item \textbf{One-to-Many}: We generate $x\sim \mu$ as follows: $x_1 \sim U\left( [-1, 1]^n \right)$, and $x_2 \equiv 0$. Similarly, we sample $y\sim \nu$ by $y_1 \sim U\left( [-1,1]^n \right)$ and $y_2 \sim \text{Cat}((e_1, -e_1), (0.5,0.5))$.
    \item \textbf{Multi-Perpendicular}: Let $\mathbb{P} := \text{Cat}\left((\frac{-3}{4}, \frac{-1}{4}, \frac{1}{4}, \frac{3}{4})e_1, (\frac{1}{4},\frac{1}{4},\frac{1}{4},\frac{1}{4})\right)$ We generate $x\sim \mu$ by sampling $x_1 \sim U\left( [-1, 1]^n \right)$, and $x_2 \sim \mathbb{P}$. Similarly, we sample $y\sim \nu$ by $y_1 \sim \mathbb{P}$ and $y_2 \sim U\left( [-1, 1]^n \right)$.
\end{itemize}

\paragraph{Training Details}
For every experiments, we share the same network architecture and the same training hyperparameters.
We employ one-hidden layer with ReLU activations for both potential function $v_\phi$ and transport map $T_\theta$ parametrization.
For experiments of $d=2$ and $d=4$, we use hidden dimension of 256. For higher dimensions, we use hidden dimension of 1024.
We employ the batch size of 128, the number of iterations of 20K, the learning rate of $10^{-4}$, and Adam optimizer for $(\beta_1, \beta_2) = (0, 0.9)$. To improve the optimization of the inner loop in the dual formulation of \eqref{eq:otm}, we perform 20 updates to $T_\theta$ for every single update to $v_\phi$, i.e. $K_T = 20$. For the experiment on $d=256$, we use $\alpha=0.01$ and $\lambda=1$. Otherwise we set $\alpha=1$ and $\lambda=0$.

For our experiment, we generate the perturbed data $\hat{x}$ as follows: $\hat{x} = x + \sigma z$ where $x\sim \mu, z\sim \mathcal{N}(\mathbf{0}, \mathbf{I})$.
Here, we schedule the noise level $\sigma$ from $\sigma_{max} = 0.2$ to $\sigma_{min} = 0.05$.
We update noise every $2K$ iterations, by the linear interpolation between initial noise to terminal noise. Specifically, in the $k$-th iteration, the noise level $\sigma_k$ is
\begin{equation}
    \sigma_k = (1-t) \sigma_{max} + t \sigma_{min}, \quad t = (P \times [k/P]+1) / K,
\end{equation}
where $P=2000$, $[\cdot]$ is the least integer function and $K$ is the total iteration number.

In the NOT \cite{not} implementation, we introduce additional noise $\xi \sim \mathcal{N}(\mathbf{0}, \mathbf{I})$ into the network $T_\theta$. we set the dimensionality of the noise $\xi$ to match the input data dimension. We simply concatenate the noise and the data. This augmented input is then passed through the transport network $T_\theta$.

\subsection{Image Translation}
\label{appen:exp_image}

\paragraph{MNIST$\rightarrow$CMNIST}
In this paragraph, we describe the implementation details of Fig. \ref{fig:m2cm}.
We create red, green, and blue-colored MNIST datasets by isolating individual color channels. To achieve this, we assign the grayscale MNIST digit images to a single color channel (red, green, or blue) while setting the other two channels to zero.
We use the image size of 32, the number of iterations of 50K, the batch size of 64, the learning rate of $10^{-4}$, Adam optimizer with $(\beta_1, \beta_2) = (0, 0.9)$, $\lambda=10$, $K_T=10$, $\alpha=0.1$, $\sigma_{max} = 2$, $\sigma_{min} = 0.5$, and $P=100$.
Note that we don't use any other techniques such as learning rate scheduling, exponential moving avarage, dropout, and clip.

We adopt the network architectures of DCGAN \cite{dcgan}, depicted as follows: 
For generator, we use UNet architecture. For the input embedding module, we pass the input through the convolution layer, batch norm layer and activation layer. We employ four of downsample modules and four upsample modules, For every downsample module, we pass through convolution, activation, average plloing layer. For the upsample modules, we pass the inputs through upsample module, convolution, batch norm, activation layers. Note that as original UNet, we use the skip connections. For the last module, we pass through one convolution layer. For the activation function, we employ leaky ReLU with slope of 0.2. Every convolutional layers are $3\times 3$ convolutional layers. For average pooling, we use convolutional layer of $3\times 3$ with stride of 2. For upsampling module, we simply use blockwise upsampling module.

For the potential network, we use three downsample module. The settings of downsample module is same as the downsample module of the generator.
After the downsample modules, we flatten it and pass it through linear layer.
For every modules, note that the channel number (or the feature number) is fixed to 256.

\paragraph{Image-to-Image Translation}
In this paragraph, we describe the implementation details of Tab. \ref{tab:main_result}.
Most of the hyperparameter except the network architecture and the number of iterations are shared.
For every experiments, we use the batch size of 64, the learning rate of $10^{-4}$, Adam optimizer with $(\beta_1, \beta_2) = (0, 0.9)$, $\lambda=10$, $K_T=1$, $\alpha=0.001$, $\sigma_{max} = 2$, $\sigma_{min} = 0.2$, and $P=100$.
Note that we don't use any other techniques such as learning rate scheduling, exponential moving avarage, dropout, and clip.
In the experiments for image size of $64$, we follow the network architecture of the CIFAR-10 experiment in \cite{uotm, uotmsd}.
In the experiments for image size of $128$, we follow the network architecture of the CelebA-HQ experiment in \cite{uotm}.
For Wild$\rightarrow$Cat $(64\times 64)$, Male$\rightarrow$Female $(64\times 64)$, and Male$\rightarrow$Female $(128\times 128)$,
we take the number of iterations of 60K, 300K, and 500K, respectively.

Here, we use Variance-Preserving noise scheduling, which is illustrated as follows:
For $x\sim \mu, z\sim \mathcal{N}(\mathbf{0}, \mathbf{I})$, we generate the perturbed data $\hat{x}$ at $k$-th iterations as follows: $\hat{x} = \sqrt{1 - \epsilon_k} x + \sqrt{\epsilon_k} z$ where $\epsilon_k$ is defined as follows:
\begin{equation}
    \epsilon_k = 1 - \exp\left(-\frac{\sigma_{max} - \sigma_{min}}{2} t^2  - {\sigma_{min}} t\right), \quad t = 1 - (P \times [k/P]+1) / K,
\end{equation}
where $P=100$, $[\cdot]$ is the least integer function and $K$ is the total iteration number.

\paragraph{Evaluation metric}
We follow the evaluation metric of \citet{diotm}.
Specifically, for the Male$\rightarrow$Female translation task, we transform Male in the test dataset, and use the transformed samples and the test samples of the Female data to calculate the evaluation metrics. For Wild$\rightarrow$Cat, we generate 5000 samples from the test data of Wild dataset, and use the transformed samples and the training samples of the Cat data to calculate the evaluation metrics.


\section{Additional Results}

\subsection{Input Convex Neural Networks}

As discussed in Eq. \ref{eq:otm}, the search space for the potential $V$ is constrained to $S_c$, the set of $c$-concave potential functions. Strictly speaking, the potential $V_\phi$ must be specifically parameterized to ensure it lies within $S_c$.
To address this, we parameterize $V_\phi$ using an input convex neural network (ICNN) \cite{icnn, icnn-korotin}. The ICNN architecture $f_\phi: \mathcal{Y} \to \mathbb{R}$ is specifically designed with structural and weight constraints to enforce input convexity, satisfying the property: $ t f_\phi (x_1) + (1-t) f_\phi (x_2) \geq f((1-t)x_1 + t x_2)$ for all $t\in [0,1]$.
Given that $c(x,y) = \alpha \Vert x - y \Vert^2 $, $\alpha \Vert y \Vert^2 - V_\phi(y)$ should satisfy input convexity. To ensure this, we parametrize $V_\phi$ as follows:
\begin{equation}
    V_\phi (y) = \alpha \Vert y \Vert^2 - f_\phi (y),
\end{equation}
where $f_\phi$ is an ICNN.
Using this parameterization, we conduct experiments on two-dimensional data $d=2$ across several toy datasets. The results are presented in Fig. \ref{fig:icnn}.

\begin{figure}[h]
    \centering
    \subfigure[ICNN-OTM]{
    \includegraphics[width=0.23\linewidth]{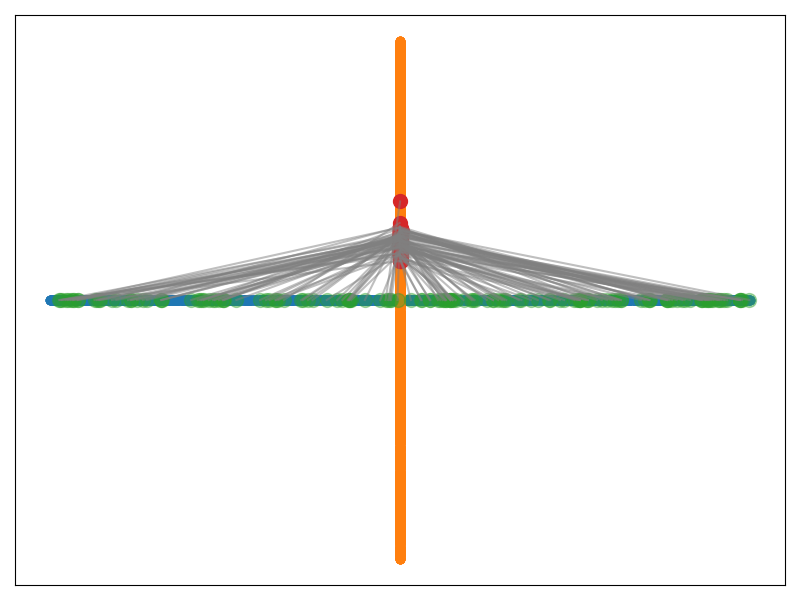}
    \includegraphics[width=0.23\linewidth]{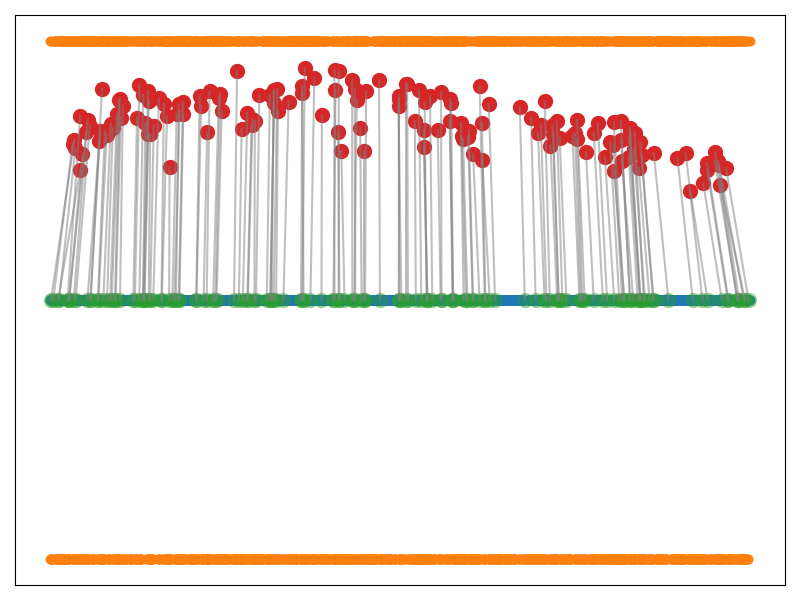}
    \label{fig:otm_icnn}
    }
    \hfill
    \subfigure[ICNN-OTP]{
    \includegraphics[width=0.23\linewidth]{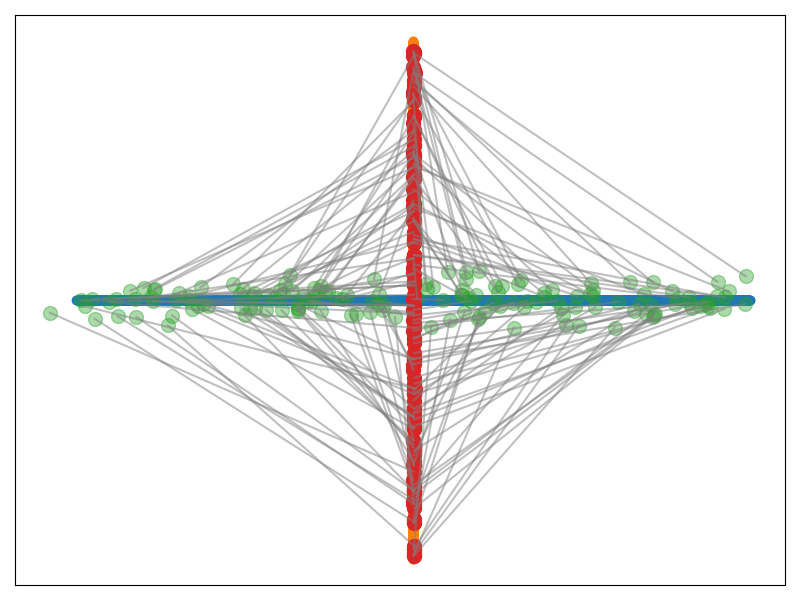}
    \includegraphics[width=0.23\linewidth]{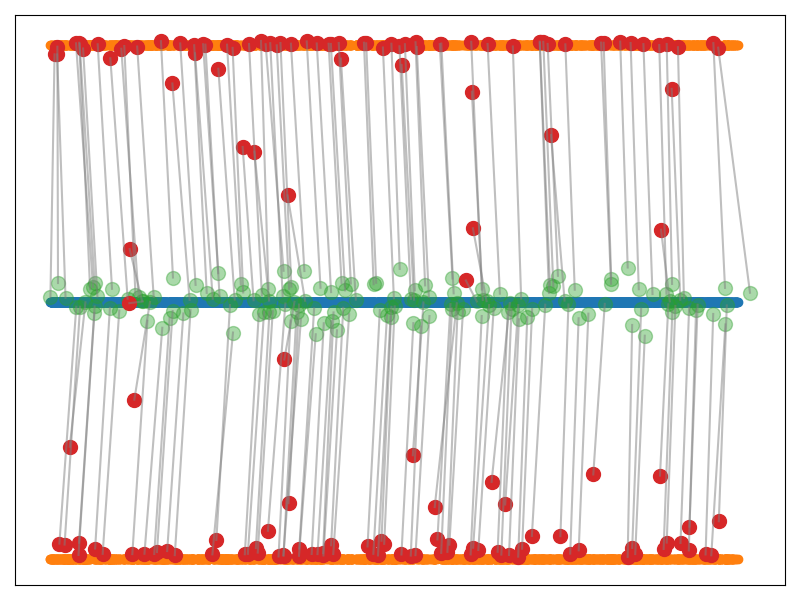}
    \label{fig:ours_icnn}
    }
    \vspace{-10pt}
    \caption{Visualization of failure cases with ICNN potential function $f_\phi(y) := \alpha \Vert y \Vert^2 - V_\phi(y)  $. We compare the Optimal Transport map (a) and our OTP model (b) in the failure cases. The source data $x \sim \mu$, target data $y \sim \nu$, and generated data $T(x)$ are represented in Blue, Orange, and Red. As illustrated in the figure, OTM fails to transport source to the target, i.e. $T_\# \mu \neq \nu$. On the other hand, our model successfully transports source $\mu$ to target $\nu$.}
    \label{fig:icnn}
\end{figure}

\subsection{Ablation Studies: Constant Noise Scheduling}
In practice, our OTP model decreases the noise level until it reaches a small constant $\epsilon_{min} > 0$ (see Algorithm paragraph in Sec \ref{sec:method}). Since the noise level is not reduced exactly to zero, an alternative approach is to train our OTP model directly at the minimum noise level $\epsilon_{min}$ instead of gradually decreasing it. To investigate this alternative, we conduct an ablation study on noise scheduling, specifically Constant Noise Scheduling (CNS).

\begin{figure}[h]
    \centering
    \subfigure[Constant Noise Scheduling]{
    \includegraphics[width=0.23\linewidth]{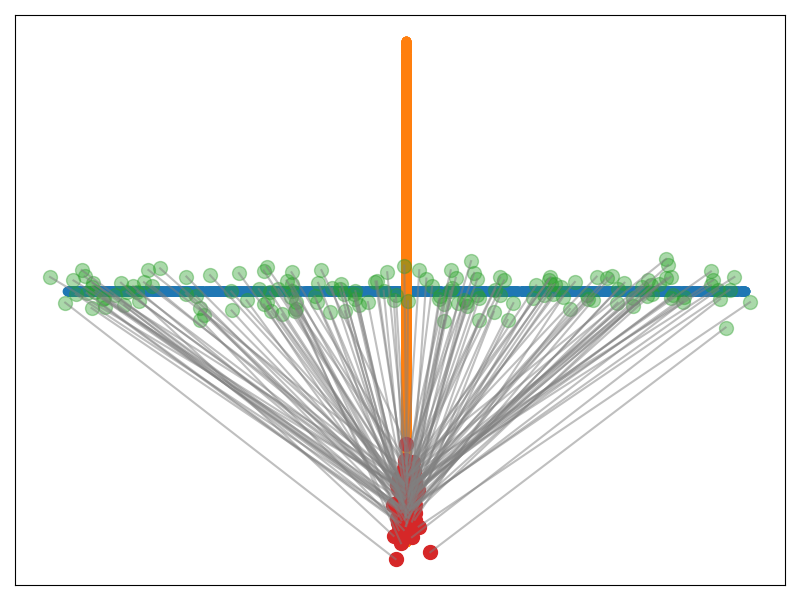}
    \includegraphics[width=0.23\linewidth]{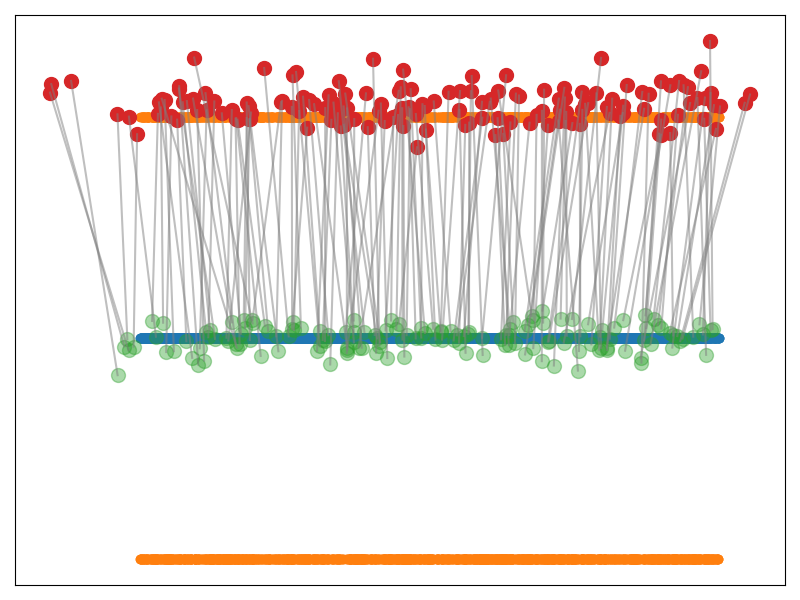}
    \label{fig:perp_const_appen}
    }
    \hfill
    \subfigure[Decreasing Noise Scheduling]{
    \includegraphics[width=0.23\linewidth]{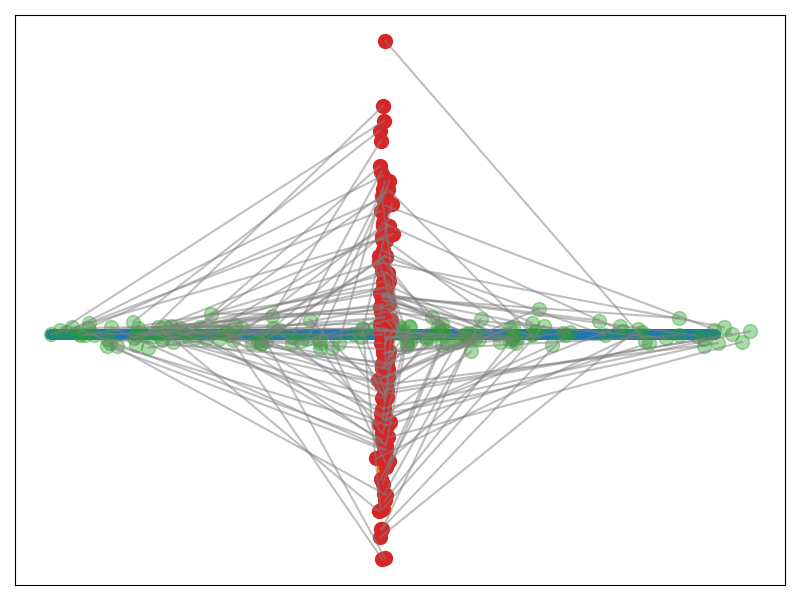}
    \includegraphics[width=0.23\linewidth]{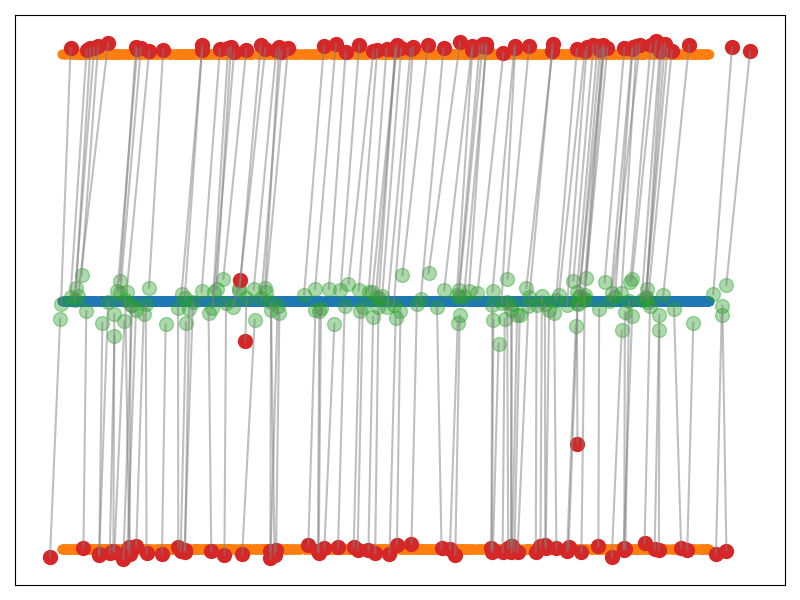}
    \label{fig:one-to-many_const}
    }
    \vspace{-10pt}
    \caption{We visualize the qualitative result between (a) constant noise scheduling, ($\sigma_{max} = \sigma_{min} = 0.05$) and (b) gradually decreasing noise scheduling ($\sigma_{max}=0.2, \sigma_{min}=0.05$). 
    We illustrate the results on the data dimension $d=64$. We select the 1st and 33rd axis to visualize the results effectively. The source data $x \sim \mu$, target data $y \sim \nu$, and generated data $T(x)$ are represented in Blue, Orange, and Red. The max-min solution fails to recover the correct OT map.}
    \label{fig:const}
\end{figure}

\begin{table}[h]
    \centering
    \caption{\textbf{Quantitative comparison of numerical accuracy} on synthetic datasets. The Const column stands for the constant noise scheduling, and Ours stands for our method which gradually decrease the noise level. Each model is evaluated by target distribution error $D_{target} (\downarrow)$}.
    \label{tab:appen_const}
    \scalebox{0.9}{
    \begin{tabular}{c c c c}
        \toprule
        Dimension & Model & Perpendicular & One-to-Many  \\
        \midrule
        \multirow{2}{*}{$d=16$}  & Constant & 0.64 & 72.17  \\
                                & Ours  & \textbf{0.59} & \textbf{0.65}  \\
        \midrule
        \multirow{2}{*}{$d=64$}  & Constant & 12.36 &  20.72  \\
                                & Ours & \textbf{10.09} & \textbf{9.98}  \\
        \bottomrule
    \end{tabular}}
\end{table}

Fig \ref{fig:const} and Tab \ref{tab:appen_const} present the results. In Fig \ref{fig:const}, our OTP model with CNS scheduling (OTP-CNS) shows the mode collapse problem. In the left figure of Fig \ref{fig:perp_const_appen}, OTP-CNS generates only the bottom part of the target distribution. Similarly, in the right figure of Fig \ref{fig:perp_const_appen}, OTP-CNS covers only the upper part of the target distribution. In contrast, Fig \ref{fig:one-to-many_const} demonstrates that our original OTP model with decreasing noise scheduling successfully covers the entire target distribution without exhibiting the mode collapse problem. Tab \ref{tab:appen_const} also shows this trend. Because of the mode collapse problem observed in Fig \ref{fig:perp_const_appen}, OTP-CNS shows a significantly larger target distribution error $D_{target}$. This ablation study shows that our noise-decreasing scheme is a more effective choice for the OTP model.

\subsection{Benchmark Experiments}

In this section, we present the experimental results on the Wasserstein-2 benchmark proposed in \citet{w2_bench}. Specifically, we implemented our OTP model, based on the [MMv2:R] baseline in \citet{w2_bench}, which solves the SNOT formulation (Eq. \ref{eq:saddle_epsilon}) using an ICNN architecture.

Note that this benchmark employs a Gaussian mixture as the source distribution, which is already absolutely continuous. Therefore, by our Thm \ref{thm:uniqueness}, existing SNOT models can also recover the correct OT map. In this respect, this benchmark is slightly unfavorable to our OTP model, because our additional input noise would be regarded as an error. Nevertheless, as shown in Table \ref{tab:appen_bench}, our model shows comparable performance to [MMv2:R].

\begin{table}[h]
    \centering
    \caption{\textbf{Quantitative comparison of numerical accuracy} on benchmark dataset proposed in \cite{w2_bench}.}
    \label{tab:appen_bench}
    \centering
    \begin{tabular}{ccccccc}
    \toprule
    Dimension &     \multicolumn{2}{c}{D=16}       &  \multicolumn{2}{c}{D=64} & \multicolumn{2}{c}{D=256} \\
    \cmidrule(lr){2-3}  \cmidrule(lr){4-5} \cmidrule(lr){6-7} 
    Metric & $\mathcal{L}^2$-UVP ($\downarrow$) & cos ($\uparrow$) & $\mathcal{L}^2$-UVP ($\downarrow$) & cos ($\uparrow$) & $\mathcal{L}^2$-UVP ($\downarrow$) & cos ($\uparrow$) \\
    \midrule
    L & 41.6 & 0.73 & 63.9 & 0.75 & 67.4 & 0.77 \\
    QC &  47.2  & 0.70 &  75.2  & 0.70 & 88.2 & 0.66 \\
    \midrule
    MM &  2.2 &  0.99  &  3.2  &  0.99 & 4.1 & 0.99 \\
    MMv1 & 1.4 & 0.99  &   8.1  & 0.97 & 2.6 & 0.99 \\
    MMv2 & 3.1 &  0.99 &   10.1 & 0.96 & 2.7 & 0.99 \\
    MM-B &  6.4   & 0.96 &  13.9 & 0.94 & 22.5 & 0.93 \\
    \midrule
    MMv2:R & 7.7& 0.96  &   6.8  & 0.97 & 2.8 & 0.99 \\
    OTP$^\dagger$ & 8.7 & 0.97 & 6.5 & 0.97 & 3.7 & 0.99\\
    \bottomrule
    \end{tabular}
    
\end{table}

\clearpage
\subsection{Additional Qualitative Results} \label{appen:addtional_qual}

\begin{figure}[h]
    \centering 
    \hfill
    \subfigure[$x\sim \mu_\epsilon$]{
    \includegraphics[width=0.48\linewidth]{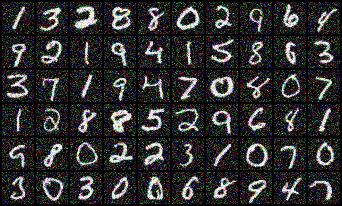}
    }
    \hfill
    \subfigure[$T_\theta (x)$]{
    \includegraphics[width=0.48\linewidth]{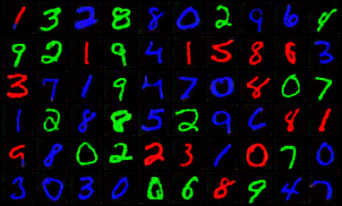}
    }
    \hfill
    \vspace{-10pt}
    \caption{Unpaired MNIST $\rightarrow$ CMNIST translation for 32 × 32 image.}
\end{figure}

\begin{figure}[h]
    \centering 
    \hfill
    \subfigure[$x\sim \mu_\epsilon$]{
    \includegraphics[width=0.48\linewidth]{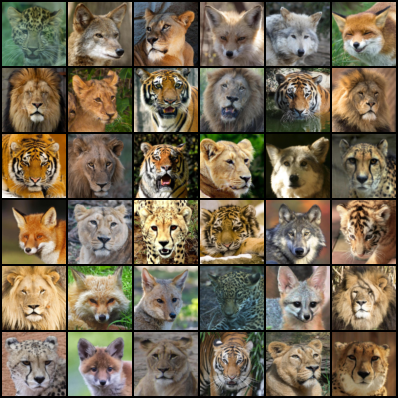}
    }
    \hfill
    \subfigure[$T_\theta (x)$]{
    \includegraphics[width=0.48\linewidth]{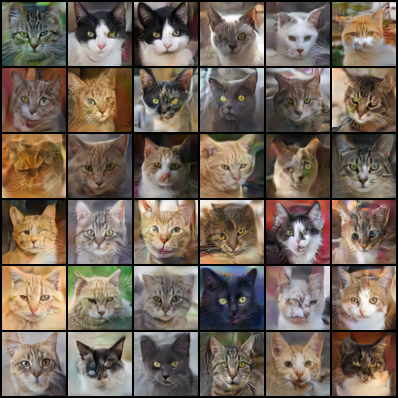}
    }
    \hfill
    \vspace{-10pt}
    \caption{Unpaired Wild $\rightarrow$ Cat translation for 64 × 64 image.}
\end{figure}

\begin{figure}[h]
    \centering 
    \hfill
    \subfigure[$x\sim \mu_\epsilon$]{
    \includegraphics[width=0.48\linewidth]{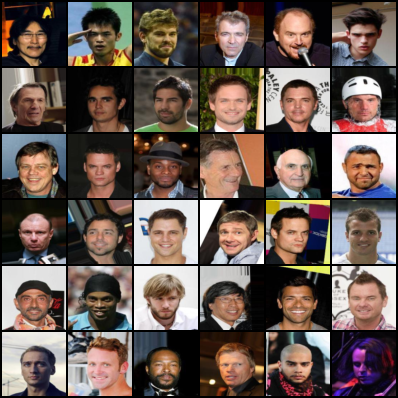}
    }
    \hfill
    \subfigure[$T_\theta (x)$]{
    \includegraphics[width=0.48\linewidth]{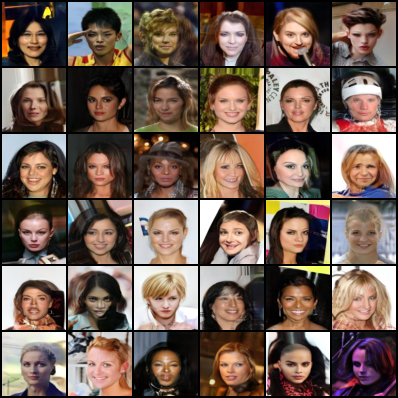}
    }
    \hfill
    \vspace{-10pt}
    \caption{Unpaired Male $\rightarrow$ Female translation for 64 × 64 image.}
\end{figure}

\begin{figure}[h]
    \centering 
    \hfill
    \subfigure[$x\sim \mu_\epsilon$]{
    \includegraphics[width=0.48\linewidth]{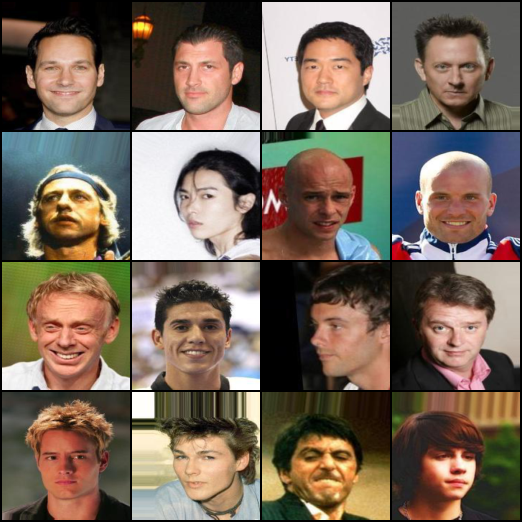}
    }
    \hfill
    \subfigure[$T_\theta (x)$]{
    \includegraphics[width=0.48\linewidth]{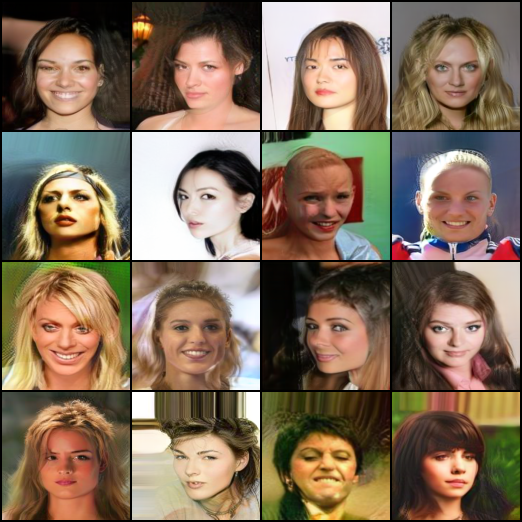}
    }
    \hfill
    \vspace{-10pt}
    \caption{Unpaired Male $\rightarrow$ Female translation for 128 × 128 image.}
\end{figure}



\end{document}